\documentclass{article} 
\PassOptionsToPackage{numbers,compress}{natbib}
\usepackage[ruled,linesnumbered,vlined]{algorithm2e}\SetCommentSty{textnormal}
\usepackage{amsmath,amssymb,amsthm,mathtools}
\usepackage{booktabs}
\usepackage[dvipsnames]{xcolor}
\usepackage{url}
\usepackage{afterpage}
\usepackage{thmtools}
\usepackage{thm-restate}
\theoremstyle{plain}
\newtheorem{theorem}{Theorem}[section]
\newtheorem{proposition}[theorem]{Proposition}
\newtheorem{lemma}[theorem]{Lemma}
\newtheorem{corollary}[theorem]{Corollary}
\theoremstyle{definition}
\newtheorem{definition}[theorem]{Definition}
\newtheorem{assumption}[theorem]{Assumption}
\theoremstyle{remark}
\newtheorem{remark}[theorem]{Remark}
\usepackage{subcaption}
\usepackage{hyperref}
\usepackage{booktabs}
\usepackage{multirow}
 \usepackage{natbib}
\usepackage{adjustbox}
\usepackage{tabularx}
\usepackage{array}
\usepackage{placeins}
\newcolumntype{Y}{>{\raggedright\arraybackslash}X}

\hypersetup{
    colorlinks=true,
    citecolor=NavyBlue,
    linkcolor=NavyBlue,
    urlcolor=NavyBlue,
}
\usepackage[capitalise,nameinlink]{cleveref}

\usepackage{amsmath,amsfonts,bm}

\def\eqref#1{equation~\ref{#1}}

\def\1{\bm{1}}

\def\vone{{\bm{1}}}

\def\mC{{\bm{C}}}

\def\mI{{\bm{I}}}

\def\mW{{\bm{W}}}

\DeclareMathAlphabet{\mathsfit}{\encodingdefault}{\sfdefault}{m}{sl}
\SetMathAlphabet{\mathsfit}{bold}{\encodingdefault}{\sfdefault}{bx}{n}

\newcommand{\R}{\mathbb{R}}

\newcommand{\Var}{\mathrm{Var}}

\DeclareMathOperator*{\argmax}{arg\,max}
\DeclareMathOperator*{\argmin}{arg\,min}

\DeclareMathOperator{\diag}{diag}

\usepackage[most]{tcolorbox}

\newtcolorbox{tmplbox}[1]{
  enhanced, breakable, sharp corners, boxrule=0.6pt,
  colback=black!4, colframe=black!55,
  coltitle=black!55, fonttitle=\bfseries\scriptsize,
  attach boxed title to top left={yshift=-1.5mm,xshift=2mm},
  boxed title style={colback=white,boxrule=0pt,left=1mm,right=1mm,top=0.5mm,bottom=0.5mm},
  title={#1},
  fontupper=\ttfamily\scriptsize,
  top=4mm, left=3mm, right=2mm, bottom=2mm
}
\newtcolorbox{promptbox}[1]{
  enhanced, breakable, sharp corners, boxrule=0.6pt,
  colback=blue!5, colframe=blue!45!black,
  coltitle=blue!45!black, fonttitle=\bfseries\scriptsize,
  attach boxed title to top left={yshift=-1.5mm,xshift=2mm},
  boxed title style={colback=white,boxrule=0pt,left=1mm,right=1mm,top=0.5mm,bottom=0.5mm},
  title={#1},
  fontupper=\ttfamily\scriptsize,
  top=4mm, left=3mm, right=2mm, bottom=2mm
}
\newtcolorbox{outputbox}[1]{
  enhanced, breakable, sharp corners, boxrule=0.6pt,
  colback=green!5, colframe=green!35!black,
  coltitle=green!35!black, fonttitle=\bfseries\scriptsize,
  attach boxed title to top left={yshift=-1.5mm,xshift=2mm},
  boxed title style={colback=white,boxrule=0pt,left=1mm,right=1mm,top=0.5mm,bottom=0.5mm},
  title={#1},
  fontupper=\ttfamily\scriptsize,
  top=4mm, left=3mm, right=2mm, bottom=2mm
}

\usepackage{arxiv}

\usepackage[utf8]{inputenc} 
\usepackage[T1]{fontenc}    
\usepackage{hyperref}       
\usepackage{url}            
\usepackage{booktabs}       
\usepackage{amsfonts}       
\usepackage{nicefrac}       
\usepackage{microtype}      
\usepackage{lipsum}
\usepackage{graphicx}
\graphicspath{ {./images/} }

\title{Social Networks of LLM Agents}

\author{
 Kaixuan Liu \\
  Department of Computer Science\\
  Emory University\\
  Atlanta, GA 30322 \\
  \texttt{kaixuan.liu@emory.edu} \\
   \And
 Guojun Xiong\thanks{Correspondence to Guojun Xiong~<gjxiong@sjtu.edu.cn>.} \\
  School of Computer Science\\
  Shanghai Jiao Tong University\\
  Shanghai, China \\
  \texttt{gjxiong@sjtu.edu.cn} \\
  \AND
 Weinan Zhang \\
  School of Computer Science\\
  Shanghai Jiao Tong University\\
  Shanghai, China \\
  \texttt{wnzhang@sjtu.edu.cn} \\
  \And
 Shengpu Tang \\
  Department of Computer Science\\
  Emory University\\
  Atlanta, GA 30322 \\
  \texttt{shengpu.tang@emory.edu} \\
}

\begin{document}
\date{}
\maketitle

\begin{abstract}
Large language model (LLM) agents are increasingly deployed in interacting
populations, raising the question of what such populations come to believe
collectively. Whether a population aggregates genuine knowledge or collapses
into a false consensus directly affects how much such systems can be trusted.
Classical social-network models assume that the network itself determines how
beliefs combine. This assumption breaks down for LLM agents, whose limited
attention takes in only part of what they are exposed to, so these models
overstate how much information a population actually pools and cannot tell
genuine consensus from herding. We introduce \textsc{SNLA}, a framework that
models how much each agent actually influences others, rather than merely how
the network connects them. This influence depends on each agent's position in
the network and on how sharply attention focuses. Theoretically, we show on a
tractable proxy that narrow attention causes herding, where the effective sample
size stays bounded regardless of population size, while wide attention recovers
wisdom-of-crowds behavior only when the exposure graph is undirected and
degree-regular. Empirically, a controlled testbed validates these predictions
directly, and the herding--wisdom transition reproduces on operator-controlled
variants of three multi-agent LLM benchmarks. 
\end{abstract}

\section{Introduction}

\afterpage{\begin{figure*}[!t]
  \centering
  \includegraphics[width=\textwidth]{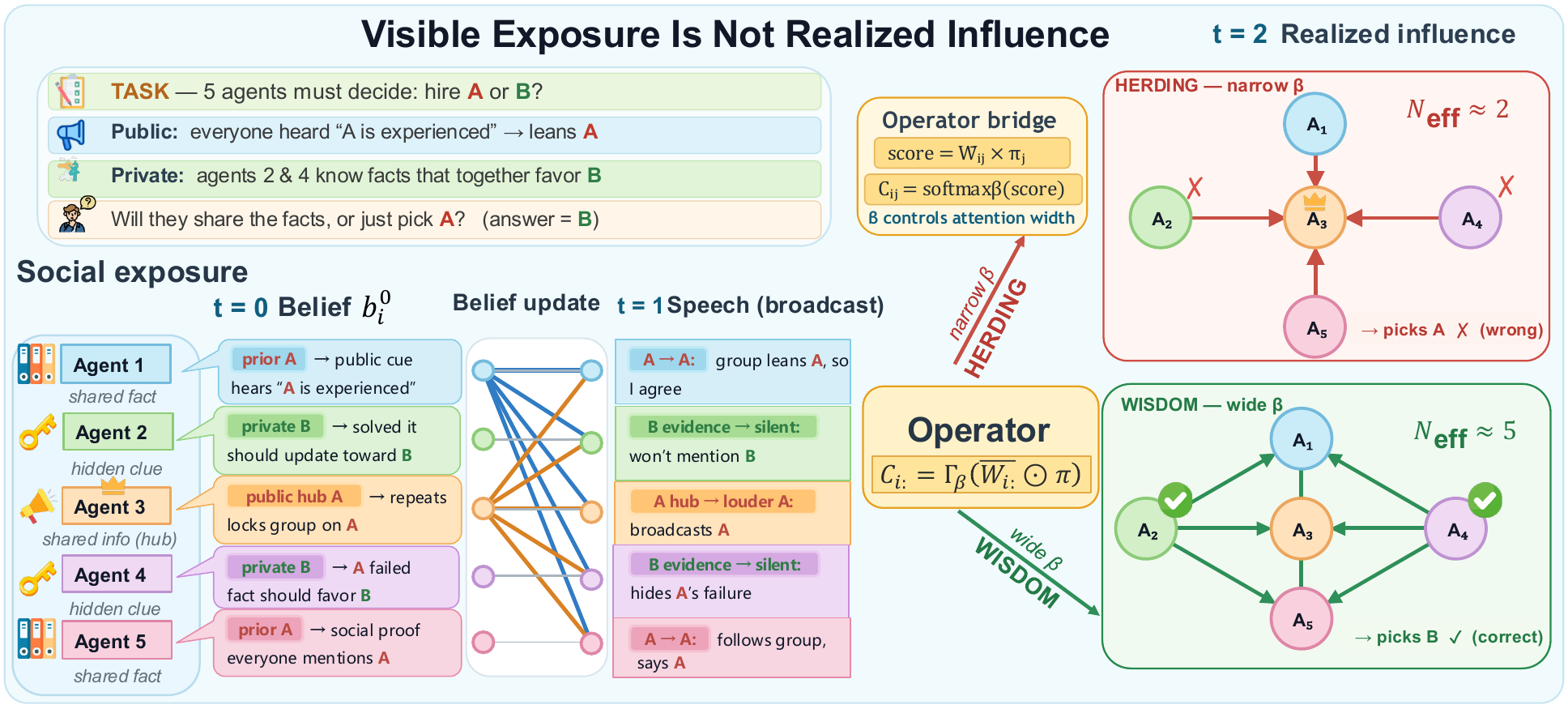}
  \caption{\textbf{From exposure to realized influence.} The visible
  \emph{exposure} graph $\bar{W}$ (left) records what each agent could
  see; edges are exposure weights and node $a_3$ carries high social
  power $\pi$. The realized-influence operator
  $C_{i\cdot}=\Gamma_\beta(\bar{W}_{i\cdot}\odot\pi)$ reweights each
  reader's exposure by source social power and concentrates it through a
  finite attention width $\beta$. The graph $C$ (right) depends
  sharply on $\beta$: narrow attention ($\beta{=}0.3$) collapses
  influence onto the most powerful source (\emph{herding}); wide
  attention ($\beta{=}3$) spreads it across the population (\emph{wisdom
  of crowds}); $\beta{=}1$ allocates proportionally.}
  \label{fig:framework}
\end{figure*}}

Large language model (LLM) agents are increasingly deployed in interacting
populations. They debate to reach answers \citep{du2024improving,
chan2024chateval, liang2024encouraging}, advise and persuade one another
\citep{park2023generative}, and simulate social processes at scale
\citep{gao2023s3, piao2025agentsociety}. In all of these settings, one
question recurs: what does the population come to believe collectively, and can
that collective belief be trusted? (i) A group that pools its members'
independent information can be more accurate than any individual
\citep{galton1907vox}; (ii) a group that copies its most visible members can
collapse into a confident error \citep{banerjee1992simple,
bikhchandani1992theory}. Distinguishing the two is essential before such systems
are relied upon for decisions.

The classical answer comes from the theory of social learning on networks. In the
DeGroot model \citep{degroot1974reaching} and its variants
\citep{friedkin1990social}, each agent repeatedly replaces its belief with a
weighted average of its neighbors', with weights read directly off the network:
the graph structure alone determines how beliefs combine. \citet{golub2010naive}
characterize when this process is wise. A large society converges to the truth if
and only if no single agent stays too influential as the society grows; prominent
agents are the main obstruction. A parallel Bayesian line reaches the same tension
through sequential observational learning \citep{bala1998learning,
acemoglu2011opinion}. This theory is elegant and its predictions are sharp. But it
rests on one assumption that fails for LLM agents: that an agent aggregates
everything its network exposes it to.

But an LLM agent does not aggregate this way. Its context is limited and its
attention is selective: it incorporates only part of what it is exposed to, and
concentrates on the most salient sources. What combines beliefs is
therefore not the visible network, but a reweighted version that concentrates
influence on a few sources (Figure~\ref{fig:framework}). Prior work places LLM
agents in networks and documents their failures: agents slide into conformity,
herding toward the most confident peer even when a minority is better supported
\citep{cho2025herd}, and grow increasingly sycophantic in repeated debate
\citep{pitre2025consensagent}. These studies propose fixes, but they do not
explain when a population herds and when it stays wise, nor do they expose a
control that moves it between the two regimes.

We close this gap with \textsc{SNLA} (Social Networks of LLM Agents),
a social-network model in which exposure and realized influence are
distinct. Starting from the visible network, \textsc{SNLA} derives a
\emph{realized-influence operator} determined by two quantities: the
\emph{social power} of each source \citep{bonacich1987power,
brin1998pagerank}, and an \emph{attention width} that controls how
sharply finite attention concentrates on the most powerful sources.
The theory is developed for a tractable proxy and validated against
the operational LLM system via a bridge theorem. Classical opinion
dynamics are recovered exactly when attention is unselective; the
regime of interest lies away from it.

Our contributions are summarized as follows:\\
$\triangleright$ \textbf{Realized influence as a derived object.} We introduce
\textsc{SNLA}, a framework in which influence is not read off the
interaction structure but \emph{derived} from it through social power
and a finite attention width, separating visible exposure from realized
influence (Section~\ref{sec:problem}).\\
$\triangleright$ \textbf{Attention width controls collective intelligence.} A
\emph{bridge theorem} couples the LLM population to a tractable proxy
path-wise (Theorem~\ref{thm:bridge}); a \emph{two-regime theorem} shows
attention width drives the herding--wisdom transition on the proxy, reaching
wisdom-of-crowds accuracy only when every agent hears comparably many
others (Theorem~\ref{thm:beta}); an \emph{equalization theorem} shows failure modes share a common cause, which a decentralized pricing
protocol removes, eliminating herding in the exact-clearing limit and
substantially reducing it online (Theorem~\ref{thm:eq}).\\
$\triangleright$ \textbf{Empirical validation.} A controlled testbed with scalar
signals and anchored updating ($\lambda_{\max}{<}1$) directly validates the
variance predictions and the proxy--LLM bridge; on operator-controlled variants
of three recognized benchmarks, the herding--wisdom transition, the placement
effect, and the equalizer's removal of herding all reproduce as qualitative
swings in collective accuracy (Section~\ref{sec:experiments}).

\section{Related Work}
\noindent\textbf{Social networks.}
Classical opinion dynamics reads influence directly off the network. In the
DeGroot \citep{degroot1974reaching} and Friedkin--Johnsen
\citep{friedkin1990social} models, each agent updates toward a weighted average
of its neighbors, and \citet{golub2010naive} characterize when this yields
accurate collective estimates. A parallel Bayesian line reaches similar
conclusions under sequential updating \citep{bala1998learning,
acemoglu2011opinion}. These models were designed for human agents who read
everything they are exposed to; for LLM agents, finite attention realizes only
part of the exposure as influence. \textsc{SNLA} builds on this tradition but
replaces the network weights with an influence operator derived from social
power and attention width, recovering the classical models as a special case.

\noindent\textbf{LLM multi-agent systems and collective behavior.} Prior work shows LLM
populations reproducing human-like opinion dynamics and network structure
\citep{chuang2024simulating, papachristou2025network}, with consensus degrading
at scale \citep{de2024language}, debate helping only when independent
\citep{du2024improving, liang2024encouraging, chan2024chateval,
taubenfeld2024systematic}, and single persuasive agents swaying whole groups
\citep{amayuelas2024multiagent}. A separate line wires such agents into
systems for software development \citep{qian2024chatdev}, role-play
\citep{li2023camel}, and task-solving \citep{wu2023autogen, hong2024metagpt,
guo2024large}. Neither asks what a topology actually realizes: the first
documents herding without a control that reverses it; the second treats it as
a design choice for performance, not a determinant of influence. \textsc{SNLA}
asks that question directly, separating exposure from realized influence to
derive when a population herds or stays wise.

\noindent\textbf{Mixture-of-experts.}
MoE routing exhibits the same column pathology: a
temperature-softmax over gate scores collapses tokens onto a few
experts (\emph{router collapse}), and the
standard fixes, load-balancing losses \citep{shazeer2017outrageously,
fedus2022switch}, Sinkhorn routing \citep{clark2022unified}, and
loss-free per-expert bias \citep{wang2024auxiliary}, are instances of
our equalizer, with the exposure price $y_j$ the analogue of the
per-expert load bias. \textsc{SNLA} adds the collective-estimation
account these lack: when concentration caps the effective sample size,
and when balancing restores it.
\section{Problem Formulation}
\label{sec:problem}

In this section, we formalize the social network induced by a
population of LLM agents. The central object is the
\emph{realized-influence operator}, which records how much each source
actually shapes another agent's beliefs, as distinct from the visible
exposure that the network makes available.

\subsection{Opinion Dynamics and Collective Estimation}
\label{sec:prelim-od}

\paragraph{Opinion dynamics.}
In the DeGroot model \citep{degroot1974reaching}, each agent updates its
belief as a weighted average of its neighbors'; the Friedkin--Johnsen
model \citep{friedkin1990social} adds anchoring:
\[
b_i(t+1) = \sum_j W_{ij}\,b_j(t) ,
\qquad
b_i(t+1) = (1 - \lambda_i)\,b_i^0 + \lambda_i \sum_j W_{ij}\,b_j(t) ,
\]
where $\mW \in \R^{n \times n}$ is row-stochastic, $\lambda_i \in [0,
1]$ is the \emph{anchoring weight}, and $b_i^0 \coloneqq b_i(0)$ the
initial belief. The influence matrix is exogenous: the network
alone determines how beliefs combine.

\paragraph{Collective estimation.}
Each agent begins with a \emph{private signal} $b_i^0 = \theta^\star +
\xi_i$, where $\xi_i$ is independent across agents with mean zero and
variance $s_i^2$. The \emph{collective estimate} at horizon $T$ is the
population average $\hat{\theta}_T = \frac{1}{n} \sum_i b_i(T) = \sum_i
q_{T,i}\, b_i^0$, a weighted combination of the initial signals with
weight vector $q_T \in \R^n$, $q_T^\top \vone = 1$. In the
homoskedastic case $s_i^2 \equiv s^2$, its variance is:
\[
\Var(\hat{\theta}_T) = s^2 \|q_T\|_2^2,
\]
minimized at $s^2/n$ when $q_T = \frac{1}{n}\vone$, the
wisdom-of-crowds optimum \citep{golub2010naive}. A weight concentrated
on a few agents instead keeps the variance bounded away from $s^2/n$
regardless of population size, the herding regime. Whether a
population is wise or herds is therefore set by $q_T$, which the
network produces as beliefs evolve.

\subsection{Social Networks of LLM Agents}
\label{sec:snla-def}

\paragraph{Agents and events.}
We model a population $\mathcal{A} = \{1, \ldots, n\}$ of $n$ LLM
agents. Each agent $i$ is a triple $(\rho_i, \mu_i, m_i)$, where
$\rho_i$ is a fixed \emph{persona} that anchors its belief, $\mu_i(t)$
a \emph{mutable state} holding its memory, and $m_i$ a \emph{backbone
model} mapping context to an emission. Each agent holds a declared
belief $b_i(t) \in \R$ about an unknown scalar $\theta^\star \in \R$;
we write $b(t) = (b_1(t), \ldots, b_n(t))^\top \in \R^n$ for the
stacked belief vector and $b^0 = b(0)$ for the initial signals.
At each round $t \in \{0, 1, 2, \ldots\}$, agents emit \emph{events};
with $\mathcal{X}$ the space of message texts, an event
\[
e_m = (s_m,\, R_m,\, x_m,\, b_m,\, t_m)
\;\in\;
\mathcal{A} \times 2^{\mathcal{A}} \times \mathcal{X} \times \R
\times \R_{\geq 0}
\]
records its sender, recipient set, message text, declared belief, and
timestamp. The history up to time $t$ is $\mathcal{E}_{\leq t} = \{e_m
: t_m \leq t\}$, from which the network is induced.

\paragraph{Exposure and realized-influence graphs.}
Classical opinion dynamics distinguishes the \emph{communication
graph}, recording which agents are aware of one another's opinions,
from the \emph{influence graph}, recording who actually shapes whom.
In the classical setting the two coincide, since an agent that reads a
neighbor also aggregates it \citep{proskurnikov2017tutorial}. Finite
attention breaks this coincidence, and \textsc{SNLA} is built on the
separation into two weighted directed graphs on the agents:
\[
\mathcal{G}_{\bar{\mW}}(t) =
\bigl(\mathcal{A},\, \mathcal{E}_{\bar{\mW}}(t),\, \bar{\mW}(t)\bigr),
\qquad
\mathcal{G}_{\mC}(t) =
\bigl(\mathcal{A},\, \mathcal{E}_{\mC}(t),\, \mC(t)\bigr),
\]
both with node set $\mathcal{A}$ and a row-stochastic weighted
adjacency matrix in $\R^{n \times n}$.

In the \emph{visible exposure graph} $\mathcal{G}_{\bar{\mW}}(t)$, an
arc $j \to i$ is present whenever agent $i$ has been exposed to a
message from agent $j$ in $\mathcal{E}_{\leq t}$, with weight
$\bar{W}_{ij}(t) \in [0, 1]$ the attention agent $i$ places on
source $j$. The \emph{realized-influence graph} $\mathcal{G}_{\mC}(t)$
records the influence that finite attention realizes: an agent can be
influenced only by sources it is exposed to, so its arcs are contained
in the exposure arcs,
\[
\mathcal{E}_{\mC}(t) \subseteq \mathcal{E}_{\bar{\mW}}(t),
\]
and $\mC_{ij}(t) \in [0, 1]$ is the weight with which agent $i$
aggregates source $j$'s belief. Together with a \emph{social power} vector $\pi(t) \in \Delta_n$ that weights each agent by its centrality as a source in
$\mathcal{G}_{\bar{\mW}}(t)$, these objects form the \emph{social
network of LLM agents} (\textsc{SNLA}), the time-indexed tuple:
\[
\mathcal{S}(t)
= \bigl(\mathcal{A},\, \mathcal{E}_{\leq t},\, \bar{\mW}(t),\,
\pi(t),\, \mC(t)\bigr).
\]
Its defining feature is that $\mC(t) \neq \bar{\mW}(t)$ in general; the
classical model is recovered only when attention is unselective and
social power uniform, at which $\mC(t) = \bar{\mW}(t)$.
\section{Proposed Methodology}
\label{sec:method}

In this section, we construct the realized-influence operator $\mC(t)$
that determines how much each source shapes another agent's
beliefs. We then give a decentralized protocol that keeps any single
source from dominating the population's attention, and state the
theoretical guarantees that follow.

\subsection{The Realized-Influence Operator}
\label{sec:method-cmi}

\paragraph{Exposure and social power.}
The operator is built in two steps from the event history: we first
turn the history into a visible exposure graph, then weight its sources
by their standing in that graph. Fix a memory kernel $\kappa :
\R_{\geq 0} \to \R_{\geq 0}$ that discounts old events and per-event
qualities $q_m \geq 0$. The unnormalized exposure of agent $i$ to
source $j$ accumulates every message $j$ sent that $i$ received:
\[
A_{ij}(t) = \sum_{e_m \in \mathcal{E}_{\leq t}}
\mathbf{1}\{s_m = j,\, i \in R_m\}\,\kappa(t - t_m)\, q_m,
\]
and row-normalizing gives the row-stochastic exposure matrix
$\bar{\mW}(t)$. Exposure alone treats all sources symmetrically, but an
agent carries more standing when influential agents attend to it. We
score this standing by the Katz--Bonacich centrality of the exposure
graph \citep{bonacich1987power}, for a damping factor $\zeta \in [0,
1)$:
\[
\pi(t)^\top
= \frac{1 - \zeta}{n}\,\vone^\top
\bigl(\mI - \zeta \bar{\mW}(t)\bigr)^{-1}
\in \Delta_n ,
\]
which weights each source by exposure accumulated over paths of every
length. At $\zeta = 0$ power is uniform and influence reduces to
exposure alone.

\paragraph{Realized influence.}
Exposure and power say how much each source \emph{could}
influence agent $i$; a finite context window forces $i$ to realize only
part of it. We form a per-pair \emph{score} $s_{ij}(t) \coloneqq
\bar{W}_{ij}(t)\,\pi_j(t)$ --- multiplicative, so a source matters to $i$
only when $i$ is exposed to it \emph{and} it carries standing --- and pass
each row through a temperature-$\beta$ softmax over the log-scores:
\begin{equation}
\label{eq:baseline-op}
s_{ij}(t) \coloneqq \bar{W}_{ij}(t)\,\pi_j(t) ,
\qquad
\mC_{ij}(t)
= \frac{s_{ij}(t)^{1/\beta}}
{\sum_k s_{ik}(t)^{1/\beta}} .
\end{equation}
The width $\beta$ is the temperature, controlling how sharply each
reader's budget concentrates: $\beta \to 0$ collapses the row onto its
highest-scoring source (hard attention), $\beta = 1$ allocates
proportionally, and $\beta \to \infty$ spreads it uniformly over
exposed sources. The result $\mC(t)$ is the realized-influence operator
of Section~\ref{sec:problem}: row-stochastic, supported on
$\bar{\mW}(t)$, and equal to $\bar{\mW}(t)$ at the classical point
$\zeta = 0, \beta = 1$. It drives the opinion dynamics in place of the
classical fixed matrix, updated each round as new events arrive. In the
LLM harness, each reader is shown the smallest set of peers covering
$0.9$ of its row mass (Appendix~\ref{app:beta}); the residual $\delta$
(Section~\ref{sec:method-theory}) is measured against the full $\mC$, so
this hard cutoff enters as a bounded contribution rather than an
unmodeled gap.

\subsection{Equalized Context Allocation}
\label{sec:method-eq}

\paragraph{Exposure pricing.} The operator just built has a structural weakness that the theory of
Section~\ref{sec:method-theory} traces to a single source. Row
normalization bounds each reader's budget, but nothing bounds the
source side: a high-power source can occupy a large share of the
population's contexts, so its column sum grows without bound,
\[
\sum_i \mC_{ij}(t) \ \gg\ 1 .
\]
We correct this with a decentralized pricing protocol: each source $j$
holds a scalar \emph{exposure price} $y_j > 0$ (initialized at one),
which discounts its weight in every reader's allocation and is updated
by the attention share it received,
\begin{equation}
\label{eq:equalizer-op}
\mC_{ij}(t)
= \frac{s_{ij}(t)^{1/\beta} / y_j}
{\sum_k s_{ik}(t)^{1/\beta} / y_k} ,
\qquad
y_j \leftarrow y_j \cdot a_j , \quad a_j = \sum_i \mC_{ij}(t) .
\end{equation}
At the fixed point $a_j = 1$ for all $j$, so $\mC(t)$ has unit row and
column sums, $\vone^\top \mC(t) = \vone^\top$ and $\mC(t)\vone = \vone$:
the realized influence is doubly stochastic. This is one Sinkhorn
iteration per round \citep{sinkhorn1967diagonal}, decentralized to $O(d_i)$
work and one broadcast scalar per agent, and (in its cleared limit)
invariant to the social-power weighting, attenuates exactly the
high-power sources that would otherwise dominate.
Algorithm~\ref{alg:snla-eq} collects the full per-round loop; the
Sinkhorn equivalence and a targeted heteroskedastic variant are in
Appendix~\ref{app:pricing}.

\begin{algorithm}[t]
\DontPrintSemicolon
\SetCommentSty{textnormal}
\SetKwInOut{Input}{Input}
\SetKwInOut{Output}{Output}
\Input{agents $\{(\rho_i, \mu_i, m_i)\}_{i=1}^n$, private signals
$b_i^0$, parameters $\beta, \zeta, T$}
\Output{collective estimate $\hat{\theta}_T$}
initialize beliefs $b_i(0) \leftarrow b_i^0$ and prices $y_j
\leftarrow 1$ for all $i, j$\;
\For{$t = 0, 1, \ldots, T-1$}{
  form exposure $\bar{\mW}(t)$ and social power $\pi(t)$ from history
  $\mathcal{E}_{\leq t}$\;
  \textbf{Reader rule:} priced temperature-$\beta$ allocation\;
  $\mC_{ij}(t) \leftarrow
  \dfrac{s_{ij}(t)^{1/\beta} / y_j}
  {\sum_k s_{ik}(t)^{1/\beta} / y_k}$
  \quad for all $i, j$\;
  \ForEach{agent $i$}{
    build context from persona $\rho_i$, state $\mu_i(t)$, and
    influence row $\mC_{i:}(t)$\;
    emit belief, message, recipients $(b_i(t{+}1), x_i(t), R_i(t))
    \sim m_i$\;
    append event $e_i(t)$ to $\mathcal{E}$ and update state
    $\mu_i(t{+}1)$\;
  }
  \textbf{Source rule:} each source raises its price by its
  attention share\;
  $a_j \leftarrow \sum_i \mC_{ij}(t)$, \quad $y_j \leftarrow y_j
  \cdot a_j$ \quad for all $j$\;
}
\Return{$\hat{\theta}_T = \frac{1}{n} \sum_i b_i(T)$}
\caption{\textsc{SNLA-EQ}: realized influence with equalized context
allocation.}
\label{alg:snla-eq}
\end{algorithm}

\subsection{Theoretical Guarantees}
\label{sec:method-theory}

With the allocator in place, we now analyze what
it implies for collective accuracy. The operational beliefs are emitted
by LLMs and are not directly analyzable, so we pair them with a
tractable proxy driven by the same realized influence, an anchored
best-response (Friedkin--Johnsen) recursion:
\begin{equation}
\label{eq:proxy-body}
b_i^{\mathrm{BR}}(t+1)
= (1 - \lambda_i)\, b_i^0
+ \lambda_i \sum_j \mC_{ij}(t)\, b_j^{\mathrm{BR}}(t),
\qquad \lambda_{\max} = \max_i \lambda_i ,
\end{equation}
with anchoring weights $\lambda_i \in [0, 1)$. The operational emission
need not solve this recursion exactly; we measure its per-step deviation
by the residual of agent $i$ at round $t$, evaluated at the operational
system's own beliefs $b(t)$:
\begin{equation}
\label{eq:residual}
\hat\delta_i(t)
= \Bigl| b_i(t+1) - (1 - \lambda_i)\, b_i^0
- \lambda_i \sum_j \mC_{ij}(t)\, b_j(t) \Bigr| ,
\end{equation}
which is computable from the event log once the run is complete. Unrolling
the proxy recursion writes its estimate as a fixed weighting of the
initial signals, $\hat{\theta}_T^{\mathrm{BR}} = q_T^\top b^0$, where the
collective weight $q_T$ is determined by $\{\mC(t)\}$
(Appendix~\ref{app:proofs}). The wisdom-versus-herding question of
Section~\ref{sec:prelim-od} thus becomes a question about $q_T$, which
the three results below answer: the first connects the proxy to the
operational system, and the next two characterize the consensus weight
under the baseline and the equalized allocator. All proofs, constants,
and assumptions are in Appendix~\ref{app:proofs}.

\begin{restatable}[Bridge]{theorem}{bridgethm}
\label{thm:bridge}
Suppose every LLM emission is a $\delta$-approximate best response
evaluated at its own beliefs, i.e.\ $\hat\delta_i(t) \leq \delta_T$ for
all $i$ and $t < T$, where $\delta_T = \max_{i,\, 0 \leq t < T}
\hat\delta_i(t)$ is the uniform residual~\eqref{eq:residual} and
$\lambda_{\max} < 1$. Then for every $t \leq T$ and every row-stochastic
sequence $\{\mC(t)\}$,
\begin{equation}
\label{eq:bridge-body}
\bigl\| b(t) - b^{\mathrm{BR}}(t) \bigr\|_\infty
\leq \frac{\delta_T}{1 - \lambda_{\max}} .
\end{equation}
\end{restatable}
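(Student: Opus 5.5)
The plan is a direct error-propagation (discrete Grönwall) argument on $e(t) \coloneqq b(t) - b^{\mathrm{BR}}(t)$, using only that each $\mC(t)$ is row-stochastic and $\lambda_{\max}<1$. Both trajectories start from the same initial signals, $b(0)=b^{\mathrm{BR}}(0)=b^0$, so $e(0)=0$. Crucially, the proxy in \eqref{eq:proxy-body} is driven by the \emph{same} sequence $\{\mC(t)\}$ as the operational system. I will treat that sequence as given and not depending on the proxy beliefs. This is why the statement can quantify over every row-stochastic sequence.

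\textbf{Step 1 (one-step recursion).} Define the signed residual
\[
r_i(t) \coloneqq b_i(t+1) - (1-\lambda_i)b_i^0 - \lambda_i \textstyle\sum_j \mC_{ij}(t)\,b_j(t),
\]
so that $|r_i(t)| = \hat\delta_i(t) \le \delta_T$ for $t<T$ by \eqref{eq:residual}. Subtracting the proxy recursion, the anchoring terms $(1-\lambda_i)b_i^0$ cancel, leaving
\[
e_i(t+1) = \lambda_i \textstyle\sum_j \mC_{ij}(t)\, e_j(t) + r_i(t).
\]

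\textbf{Step 2 (contraction in $\ell_\infty$).} Since $\mC_{ij}(t)\ge 0$ and $\sum_j \mC_{ij}(t)=1$, we have $\bigl|\sum_j \mC_{ij}(t) e_j(t)\bigr| \le \|e(t)\|_\infty$. Because $\lambda_i\le\lambda_{\max}$, this gives
\[
\|e(t+1)\|_\infty \le \lambda_{\max}\|e(t)\|_\infty + \delta_T .
\]

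\textbf{Step 3 (unrolling).} By induction from $e(0)=0$,
\[
\|e(t)\|_\infty \le \delta_T \sum_{k=0}^{t-1}\lambda_{\max}^k = \delta_T\,\frac{1-\lambda_{\max}^t}{1-\lambda_{\max}} \le \frac{\delta_T}{1-\lambda_{\max}}
\]
for all $t\le T$. This is \eqref{eq:bridge-body}, and it is uniform in $T$ and in the choice of $\{\mC(t)\}$.

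\textbf{Main obstacle.} The algebra is routine; the point needing care is the definitional bookkeeping in Step 1. The residual \eqref{eq:residual} is evaluated at the operational beliefs $b(t)$, not at $b^{\mathrm{BR}}(t)$. That is exactly what makes the error recursion clean, so the argument must use that form rather than comparing against the proxy's own update. The other thing to check is that $\mC(t)$ is the same matrix in both recursions even though in the LLM system it is built from the event history. If instead the proxy built its own $\mC$ from proxy beliefs, an extra Lipschitz term would appear. I will state explicitly that the coupling is path-wise along the realized operator sequence, which removes that issue.
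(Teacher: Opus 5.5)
Your proposal is correct and follows the paper's proof essentially step for step: the same error recursion $e(t+1) = P\mC(t)e(t) + u(t+1)$, obtained by cancelling the anchor terms with the residual evaluated at the operational beliefs, followed by the $\ell_\infty$ contraction $\|P\mC(t)\|_\infty \le \lambda_{\max}$ and geometric unrolling from $e(0)=0$. Your remark that the coupling runs path-wise along the realized sequence $\{\mC(t)\}$ also matches the paper, which notes that the proxy is tied to the operational run only through those matrices.
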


The bound is path-wise and attained, and anchoring cannot be dropped: at $\lambda_{\max} = 1$ a
row-stochastic sequence and emission rule exists whose gap grows linearly in $t$
(Appendix~\ref{app:proofs}), a counterexample rather than a claim that every $\lambda_{\max}=1$
system diverges. The bound is non-vacuous in practice: an
explicit anchoring prompt keeps $\lambda$ comfortably below one on the anchored-persona cells
(median $0.25$; full breakdown in Section~\ref{sec:exp-benchmarks}), and
Figure~\ref{fig:bridge} confirms the bound holds on real agents.
The operational estimate inherits the proxy's accuracy, $|\hat{\theta}_T -
\hat{\theta}_T^{\mathrm{BR}}| \leq \delta_T/(1 - \lambda_{\max})$, and likewise its standard
deviation, so it suffices to analyze $q_T$, measured via the effective sample size
$N_{\mathrm{eff}}(q_T) = 1/\|q_T\|_2^2$ \citep{kish1965survey}, which gives variance
$s^2/N_{\mathrm{eff}}(q_T)$.

The next theorem fixes the exposure graph and varies the attention
width. It characterizes the consensus weight $\nu_{\mC(\beta)}$, the
stationary distribution of $\mC(\beta)$, which is the long-horizon limit
of $q_T$ under uniform anchoring as $\lambda \uparrow 1$
(Appendix~\ref{app:proofs}); accordingly $N_{\mathrm{eff}}(\beta) =
1/\|\nu_{\mC(\beta)}\|_2^2$. We reserve $\pi(t)$ for the social power
that builds $\mC$ and write $\nu$ for the consensus weight it induces;
the two are distinct.

\begin{restatable}[Attention width controls collective accuracy]{theorem}{betathm}
\label{thm:beta}
Let $\mC(\beta)$ be the fixed-graph realized influence with $\zeta \in
[0,1)$ and irreducible exposure support $B$, where $B_{ij} =
\mathbf{1}\{\bar{W}_{ij} > 0\}$. Write the degrees $d_i = \sum_j
B_{ij}$, $D = \diag(d_1, \ldots, d_n)$, $d_{\max} = \max_i d_i$, and let
$L$ be the log-spread and $\underline{g}$ the dominance gap of the scores
$s_{ij}$ (Appendix~\ref{app:proofs}). In the homoskedastic case:
\begin{enumerate}
\item \emph{(Wide, wisdom of crowds.)} If $B = B^\top$ is irreducible,
then $\mC(\beta) \to D^{-1} B$ and $\nu_{\mC(\beta)} \to d /
(\vone^\top d)$ as $\beta \to \infty$, so:
\[
\lim_{\beta \to \infty} N_{\mathrm{eff}}(\beta)
= \frac{(\sum_i d_i)^2}{\sum_i d_i^2} ,
\]
which equals $n$ if and only if $B$ is degree-regular; for finite
$\beta$ the consensus weight obeys $\bigl| \|\nu_{\mC(\beta)}\|_2^2 -
\|\nu^\star\|_2^2 \bigr| \leq 2\kappa_B (e^{L/\beta} - 1)$ with
$\nu^\star = d/(\vone^\top d)$ and $\kappa_B$ a condition number of
$D^{-1} B$ (Appendix~\ref{app:proofs}), so:$N_{\mathrm{eff}}(\beta) \to (\sum_i d_i)^2/\sum_i d_i^2$ by continuity.

\item \emph{(Narrow, herding.)} Assume $B$ is irreducible and the scores
admit a unique dominant pair: $j_i^\star = \argmax_k s_{ik}$ is unique
for every row, with $j_i^\star = h$ for all $i \neq h$ and $j_h^\star = g
\neq h$. Then with $\rho(\beta) = (d_{\max} - 1)\, e^{-\underline
g/\beta}$, whenever $\rho(\beta) < 1$,
\[
N_{\mathrm{eff}}(\beta) \leq \frac{2}{(1 - \rho(\beta))^2} ,
\]
so $N_{\mathrm{eff}}(\beta) \leq 8$ whenever $\rho(\beta) \leq \tfrac12$,
i.e.\ $\beta \leq \underline g / \log(2(d_{\max} - 1))$, independently of
$n$.
\end{enumerate}
\end{restatable}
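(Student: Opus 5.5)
We treat the two regimes separately, since both reduce to perturbation statements about the stationary distribution of $\mC(\beta)$. Write $\ell_{ij} = \log s_{ij}$ on the support $B$. Then $\mC_{ij}(\beta) = e^{\ell_{ij}/\beta} / \sum_{k: B_{ik}=1} e^{\ell_{ik}/\beta}$. Let $L = \max_{i}\,(\max_{j,k \in \mathrm{supp}_i}|\ell_{ij} - \ell_{ik}|)$ be the log-spread, and $\underline g = \min_i \bigl(\ell_{i j_i^\star} - \max_{k \neq j_i^\star} \ell_{ik}\bigr)$ the dominance gap.

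\textbf{Wide regime.} Entrywise we have $e^{-L/\beta}/d_i \le \mC_{ij}(\beta) \le e^{L/\beta}/d_i$. Hence $\mC(\beta) \to D^{-1}B$, with $\|\mC(\beta) - D^{-1}B\|_\infty \le e^{L/\beta} - 1$ in row-sum norm. For symmetric irreducible $B$, the random walk $D^{-1}B$ is reversible with stationary law $\nu^\star = d/(\vone^\top d)$: check $d_i \cdot B_{ij}/d_i = B_{ij} = B_{ji}$. It is unique by irreducibility. Substituting gives $\|\nu^\star\|_2^2 = \sum_i d_i^2/(\sum_i d_i)^2$, hence the stated limit of $N_{\mathrm{eff}}$. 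By Cauchy--Schwarz this is at most $n$, with equality iff $d$ is constant, i.e.\ $B$ is degree-regular.

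For the finite-$\beta$ bound we will invoke a standard Markov chain perturbation bound (Cho--Meyer / Seneta type): $\|\nu_{\mC} - \nu^\star\|_1 \le \kappa_B \|\mC - D^{-1}B\|_\infty$, with $\kappa_B$ e.g.\ the group-inverse condition number of $I - D^{-1}B$. Then $\bigl|\|\nu\|_2^2 - \|\nu^\star\|_2^2\bigr| = |\langle \nu - \nu^\star, \nu + \nu^\star\rangle| \le \|\nu-\nu^\star\|_1 \|\nu+\nu^\star\|_\infty \le 2\|\nu - \nu^\star\|_1$. This yields the stated $2\kappa_B(e^{L/\beta}-1)$, and continuity of $x \mapsto 1/x$ gives the convergence of $N_{\mathrm{eff}}$.

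\textbf{Narrow regime.} For each row, the off-dominant mass satisfies $\epsilon_i := 1 - \mC_{i j_i^\star} \le (d_i - 1)e^{-\underline g/\beta} \le \rho(\beta)$. So $\mC = P + E$, where $P$ sends every $i \ne h$ to $h$ and sends $h$ to $g$, and each row of $E$ has total variation at most $\rho$. The plan is to lower-bound the stationary mass on $\{h, g\}$ directly.

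Stationarity gives $\nu_h = \sum_i \nu_i \mC_{ih} \ge (1-\rho)\sum_{i\ne h} \nu_i = (1-\rho)(1-\nu_h)$ and $\nu_g \ge \nu_h \mC_{hg} \ge (1-\rho)\nu_h$. From the first inequality, $\nu_h \ge (1-\rho)/(2-\rho)$. Hence $\nu_h + \nu_g \ge (2-\rho)\nu_h \ge 1-\rho$.

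Then $\|\nu\|_2^2 \ge \nu_h^2 + \nu_g^2 \ge (\nu_h + \nu_g)^2/2 \ge (1-\rho)^2/2$, giving $N_{\mathrm{eff}} \le 2/(1-\rho)^2$. At $\rho \le 1/2$ this is at most $8$, and $\rho \le 1/2$ rearranges to $\beta \le \underline g/\log(2(d_{\max}-1))$. Irreducibility is used only for uniqueness of $\nu$.

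\textbf{Main obstacle.} The delicate step is the finite-$\beta$ perturbation constant in the wide regime. It requires fixing a definition of $\kappa_B$ for which a clean $\ell_1$ bound holds, and the constant $2$ depends on how $\|\nu + \nu^\star\|_\infty$ is handled. I would first try the ergodicity-coefficient form $\kappa_B = 1/(1-\tau(D^{-1}B))$ for some power of the chain, falling back to the group-inverse norm if $\tau = 1$ (e.g.\ for bipartite $B$).
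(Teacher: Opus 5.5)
Your proposal is correct and follows essentially the paper's proof. The shared steps are the entrywise tempering bound giving $\|\mC(\beta)-D^{-1}B\|_\infty\le e^{L/\beta}-1$, detailed balance for $\nu^\star=d/(\vone^\top d)$, an $\ell_1$ stationary-perturbation bound, the factor-$2$ variance transfer with Cauchy--Schwarz, and, in the narrow regime, the stationarity sandwich $\nu_h\ge(1-\rho)/(2-\rho)$, $\nu_g\ge(1-\rho)\nu_h$. The one small difference is that the paper obtains $\nu_h+\nu_g\ge 1-\rho$ directly from $\nu(S^c)\le\rho$ rather than from the sandwich; this is an equally short step. The obstacle you flag is resolved by your fallback: the paper's $\kappa_B=\|H\|_\infty$ with $H=(\mI-P^\star+\vone\nu^{\star\top})^{-1}-\vone\nu^{\star\top}$ is exactly the group inverse of $\mI-D^{-1}B$. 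It proves the bound self-contained via $\tilde\nu^\top-\nu^{\star\top}=\tilde\nu^\top E H$, which needs only irreducibility, so the ergodicity-coefficient route, which fails when $\tau=1$ (e.g.\ bipartite $B$), is unnecessary.
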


Narrow attention concentrates each row of $\mC$ on the dominant pair,
collapsing the collective weight onto a few agents; wide attention
flattens it but reaches the optimum only under degree-regularity, the
sole remaining obstruction. Both failures are column imbalances of
$\mC$, which is exactly what the equalized allocator removes.

\begin{restatable}[Exact optimality under equalization]{theorem}{eqthm}
\label{thm:eq}
Under uniform anchoring $\lambda \in [0, 1)$ and signal exogeneity, let
$\bar{\delta}_{\mathrm{col}}$ be the largest column defect over
the run. Then the proxy weight satisfies:
\[
\Bigl\| q_T - \tfrac{1}{n}\vone \Bigr\|_1
\leq \bar{\delta}_{\mathrm{col}}
\Bigl( \tfrac{\lambda}{1-\lambda} + T\lambda^T \Bigr) ,
\]
for every horizon $T$ and every realized sequence. In particular, if
every $\mC(t)$ is doubly stochastic ($\bar{\delta}_{\mathrm{col}} = 0$),
then $q_T = \frac{1}{n}\vone$ exactly, so
$\Var(\hat{\theta}_T^{\mathrm{BR}}) = s^2/n$ and $N_{\mathrm{eff}} = n$.
The operational estimate inherits this up to the bridge gap,
\[
\Bigl| \mathrm{sd}(\hat{\theta}_T) - \tfrac{s}{\sqrt{n}} \Bigr|
\leq \frac{\delta_T}{1-\lambda}
+ s\, \bar{\delta}_{\mathrm{col}}
\Bigl( \tfrac{\lambda}{1-\lambda} + T\lambda^T \Bigr) .
\]
\end{restatable}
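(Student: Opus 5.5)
Unroll the proxy recursion~\eqref{eq:proxy-body} with uniform anchoring $\lambda$. Writing $P_t = \mC(t-1)\cdots\mC(0)$ for the backward product (with $P_0 = \mI$), we get
\[
b^{\mathrm{BR}}(T) = (1-\lambda)\sum_{k=0}^{T-1}\lambda^k M_k\, b^0 + \lambda^T P_T\, b^0 ,
\qquad M_k = \mC(T-1)\cdots\mC(T-k),
\]
with the convention $M_0 = \mI$. Averaging then gives
\[
q_T^\top = \tfrac1n\vone^\top\Bigl[(1-\lambda)\sum_{k<T}\lambda^k M_k + \lambda^T P_T\Bigr].
\]
The coefficients $(1-\lambda)\sum_{k<T}\lambda^k + \lambda^T$ sum to one. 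Since $\tfrac1n\vone^\top\mI = \tfrac1n\vone^\top$, we obtain
\[
q_T - \tfrac1n\vone = (1-\lambda)\sum_{k<T}\lambda^k\bigl(M_k^\top\tfrac1n\vone - \tfrac1n\vone\bigr) + \lambda^T\bigl(P_T^\top\tfrac1n\vone - \tfrac1n\vone\bigr).
\]
Each deviation term is driven by how far the products fail to preserve the uniform vector on the left.

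Next I would bound these deviations by a telescoping argument. Define the column defect of $\mC(t)$ as $\delta_{\mathrm{col}}(t) = \|\tfrac1n\vone^\top\mC(t) - \tfrac1n\vone^\top\|_1$, and let $\bar\delta_{\mathrm{col}}$ be its maximum over the run. For a product of $k$ row-stochastic factors, write
\[
u^\top \mC_k\cdots\mC_1 - u^\top = \sum_{\ell}\bigl(u^\top\mC_\ell - u^\top\bigr)\mC_{\ell-1}\cdots\mC_1, \qquad u=\tfrac1n\vone .
\]
Right-multiplying a row vector by a row-stochastic matrix is nonexpansive in $\ell_1$, so $\|M_k^\top u - u\|_1 \le k\,\bar\delta_{\mathrm{col}}$. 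Summing gives
\[
\|q_T - u\|_1 \le \bar\delta_{\mathrm{col}}\Bigl[(1-\lambda)\sum_{k<T}k\lambda^k + T\lambda^T\Bigr].
\]
Because $(1-\lambda)\sum_{k\ge0}k\lambda^k = \lambda/(1-\lambda)$, this yields the stated bound $\bar\delta_{\mathrm{col}}\bigl(\tfrac{\lambda}{1-\lambda} + T\lambda^T\bigr)$. When every $\mC(t)$ is doubly stochastic, each defect vanishes and $q_T = u$ exactly. Then $\Var = s^2\|u\|_2^2 = s^2/n$ by Section~\ref{sec:prelim-od}, and $N_{\mathrm{eff}} = n$.

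For the operational statement, combine Theorem~\ref{thm:bridge} with the proxy bound. By Minkowski's inequality in $L^2$,
\[
|\mathrm{sd}(\hat\theta_T) - \mathrm{sd}(\hat\theta_T^{\mathrm{BR}})| \le \|\hat\theta_T - \hat\theta_T^{\mathrm{BR}}\|_{L^2} \le \delta_T/(1-\lambda).
\]
For the proxy itself, $\mathrm{sd}(\hat\theta^{\mathrm{BR}}_T) = s\|q_T\|_2$. By the reverse triangle inequality, $|s\|q_T\|_2 - s\|u\|_2| \le s\|q_T - u\|_2 \le s\|q_T-u\|_1$, and the previous step bounds the last quantity.

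The main obstacle is \emph{signal exogeneity}. Since $\mC(t)$ is built from the event history, it depends on the beliefs and hence on $b^0$. The unrolling $\hat\theta^{\mathrm{BR}}_T = q_T^\top b^0$ holds pathwise for every realized sequence, but the identity $\Var = s^2\|q_T\|_2^2$ requires $q_T$ to be independent of $\xi$. I would invoke the exogeneity assumption exactly at this point: either condition on the $\mC$-sequence as independent of the signals, or treat $q_T$ as fixed. The pathwise $\ell_1$ bound needs no such assumption, and only the variance and sd statements do. A secondary subtlety is making sure $\bar\delta_{\mathrm{col}}$ is normalized with $u = \tfrac1n\vone$, so that the constant matches the statement exactly.
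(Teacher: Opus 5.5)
Your proposal is correct and follows essentially the same route as the paper. Your $M_k^\top u$ (with $u=\tfrac1n\vone$) is exactly the paper's normalized backward vector $p_{T-k}$, and your telescoping expansion is the unrolled form of its one-step induction $\|p_r-u\|_1\le\|p_{r+1}-u\|_1+\delta_r$; the convex-combination step, the sum $\sum_j j\lambda^j\le\lambda/(1-\lambda)^2$, and the Minkowski/bridge transfer for the standard deviation also coincide with the paper's. The only difference is cosmetic: because you never divide by $\lambda^{T-r}$, you do not need the paper's separate treatment of $\lambda=0$.
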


Each price round is one Sinkhorn iteration, which need not clear the
market exactly in a single pass; the guarantee is stated in the measured
column defect $\bar{\delta}_{\mathrm{col}}$ actually achieved at run
time, and is exact only in the doubly stochastic limit, reached by
iterating the price update to tolerance
(Appendix~\ref{app:pricing}). Unlike the baseline, which collapses to
$N_{\mathrm{eff}} \leq 8$ under a dominant pair no matter how large $n$
is, the equalized allocator drives $q_T$ to uniform at a rate set by
$\bar{\delta}_{\mathrm{col}}$, with no mixing, aperiodicity, or
connectivity hypothesis on the realized sequence.
\section{Experiments}
\label{sec:experiments}

\subsection{Experimental setup}
\label{sec:exp-setup}

\paragraph{Benchmarks.} \emph{HiddenBench}~\citep{li2025hiddenbench} poses $12$ hidden-profile
reasoning problems, where each of $n{=}24$ agents holds a different clue that the group must pool
to answer correctly; \emph{Werewolf}~\citep{xu2023language} is a $16$-agent social deduction game
where villagers deliberate and vote to unmask a hidden adversarial werewolf; and
\emph{AgentsNet}~\citep{grotschla2025agentsnet} places $n\in\{8,16,24,32,50\}$ agents as nodes in a
graph, where each communicates only with its neighbours to reach a conflict-free colouring. We
additionally map the theorem's boundary on full-information \emph{Debate}~\citep{du2024improving}
and commons-dilemma \emph{GovSim}~\citep{piatti2024cooperate}. Full environment specifications are
in Appendix~\ref{app:envs}.

\textbf{Models.} Each agent is Qwen2.5-7B-Instruct~\citep{qwen2025qwen25technicalreport} by
default, run offline with greedy decoding (temperature-$0$); each condition is repeated over $5$
seeds re-drawing tasks and position assignments, and the shaded bands in
Figures~\ref{fig:three} and~\ref{fig:online} span their min--max. A network is one model
instantiated as $n$ agents that \emph{share weights}, differing only in persona, private
information, and position, so any collective effect is structural rather than a capability
artifact --- the injected confident-wrong leader is the same base model, so herding traces to its
position, not lower capability. Two experiments break this homogeneity: the AgentsNet society
pairs strong 7B hubs with weak 1.5B agents, and the capability ladder scales the central agent
across Qwen2.5-$\{1.5,3,7,14\}$B, replicated on Llama-3-$\{1,3,8\}$B~\citep{grattafiori2024llama}.
Full model, operator, seed, prompt, and hardware details are in
Appendices~\ref{app:models}--\ref{app:hardware}.

\subsection{Main results}
\label{sec:exp-benchmarks}

\paragraph{Context width $\beta$ controls herding versus wisdom.}
A single knob, the context width $\beta$, swings the collective between herding and
wisdom on both aggregation-dependent tasks (Figure~\ref{fig:three}A,B).
In \emph{HiddenBench}, the baseline collapses onto the confident-wrong hub at narrow
$\beta$ and recovers at wide $\beta$, improving collective accuracy by
$\mathbf{+0.64}$. A single item traced through both regimes, with the exact prompts and the
model's real replies, is in Appendix~\ref{app:prompts}. The equalizer removes this collapse, while
the control shows it disappears without the dominant-wrong source.
\emph{Werewolf} shows the same pattern with the game's intrinsic wrong source seated at the
influence hub: the
baseline moves from near-zero accuracy at narrow $\beta$ to substantially higher
accuracy at wide $\beta$ ($\mathbf{+0.61}$), while the equalizer and control stay
above the herding floor. The same collapse and equalizer recovery replicate on
Llama-3.1-8B (Table~\ref{tab:app-llama}). The individual and oracle lines mark the lower and upper
reference levels; the oracle gap is diagnosed in Appendix~\ref{app:bridge}. The full
per-$\beta$ breakdown, including the causal adversary-on/off decomposition, is in
Appendices~\ref{app:hb} (HiddenBench) and~\ref{app:ww} (Werewolf).

\begin{figure}[h]\centering
  \includegraphics[width=\linewidth]{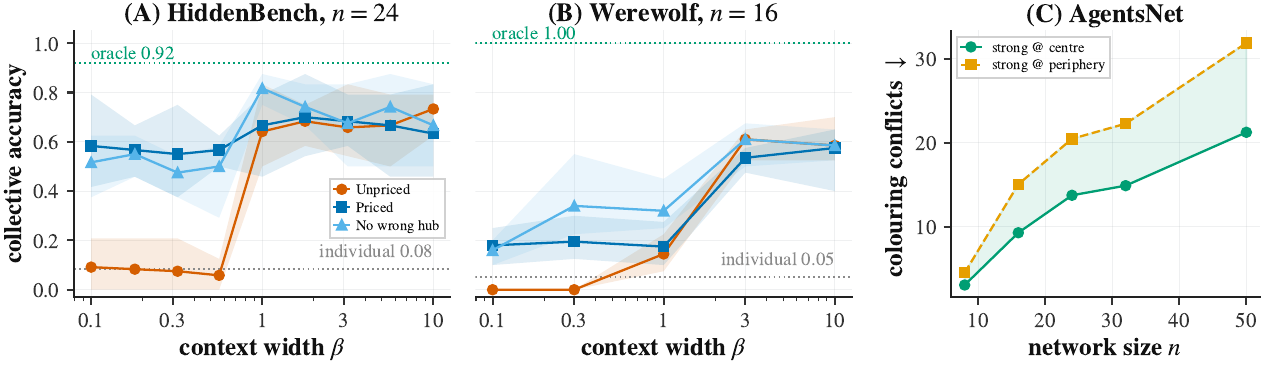}
  \caption{\textbf{One influence operator, three recognized environments, two facets.}
  \textbf{(A,~B)} Collective accuracy vs.\ context width $\beta$ on \emph{HiddenBench}
  ($n{=}24$) and \emph{Werewolf} ($n{=}16$). Orange is the \emph{baseline} allocator,
  blue is the \emph{equalizer}, and light blue is the \emph{control} that removes the
  dominant-wrong source. Dotted lines show individual and oracle reference levels;
  shaded bands in (A,~B) span the seed-to-seed min--max over $5$ seeds.
  \textbf{(C)} Colouring conflicts vs.\ network size $n$ on \emph{AgentsNet}, comparing
  strong agents placed at the centre (green) with strong agents placed at the periphery (gold).}
  \label{fig:three}
\end{figure}

\paragraph{Where capability sits controls coordination.}
On a coordination task the operator exposes a different facet
(Figure~\ref{fig:three}C). \emph{AgentsNet} has agents self-organise into complementary
colours. A heterogeneous $7$B$\times$$1.5$B society fixes the capability budget,
varying only where the strong agents sit: strong agents at the centre yield fewer conflicts
than at the periphery, and the advantage grows with scale,
from $\mathbf{+1.5}$ at $n{=}8$ to $\mathbf{+10.6}$ at $n{=}50$. Coordination has no
single wrong answer to herd onto, isolating placement rather than the
herding--wisdom transition. The placement ablation matrix (social-power and
Sinkhorn invariance, the weak-hub Watts--Strogatz control) is in Appendix~\ref{app:agentsnet}.
\begin{figure}[t]\centering
  \includegraphics[width=1\linewidth]{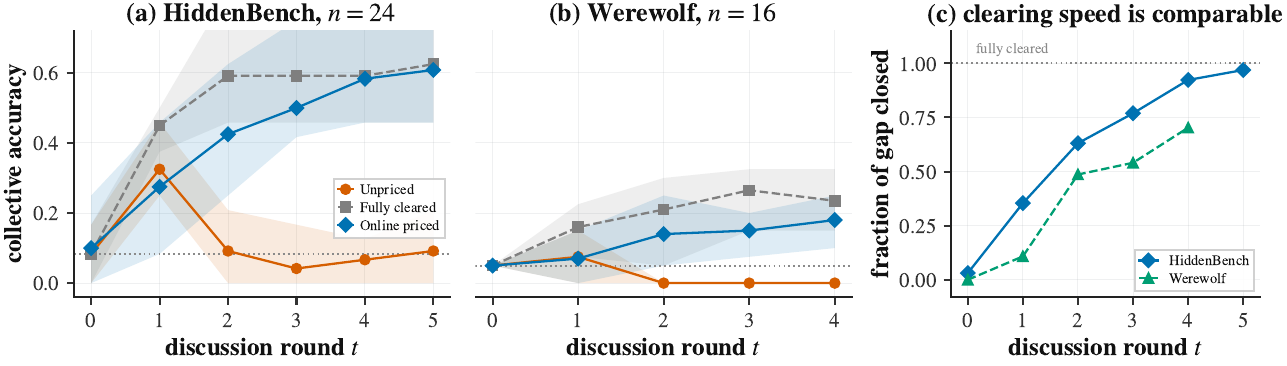}
  \caption{\textbf{The equalizer clears herding online, round by round.}
  Per-round collective accuracy at narrow $\beta$ on \emph{HiddenBench} ($n{=}24$)
  and \emph{Werewolf} ($n{=}16$). Orange is the \emph{baseline}; grey dashed is the
  \emph{operator}, obtained by clearing the equalized operator to its fixed point each
  round; blue is \emph{online pricing}, which applies one price update per discussion
  round. Shaded bands span the min--max over $5$ seeds. \textbf{(c)} Fraction of the
  narrow-$\beta$ gap closed over rounds on HiddenBench and Werewolf.}
  \label{fig:online}
\end{figure}

\paragraph{Online pricing clears herding round by round.}
Figure~\ref{fig:online} separates the ideal equalized operator from its online
implementation, which runs one price update per discussion round rather than to
convergence (the decentralized protocol of Section~\ref{sec:method-eq}). Because
prices start uniform, the first step sits at the baseline and the collective follows
the confident-wrong source; subsequent updates reduce the captured column, so the
online curve rises toward the operator curve. The recovery is nearly complete on
\emph{HiddenBench} (Figure~\ref{fig:online}a, $0.10\!\to\!0.61$) but partial on the
shorter-horizon \emph{Werewolf} (Figure~\ref{fig:online}b, $0.05\!\to\!0.18$ against
the operator's $0.24$). The two close their gaps at a similar per-round pace, so the
remaining Werewolf gap is a horizon effect, not a different clearing mechanism.

\begin{figure}[h]\centering
  \includegraphics[width=\linewidth]{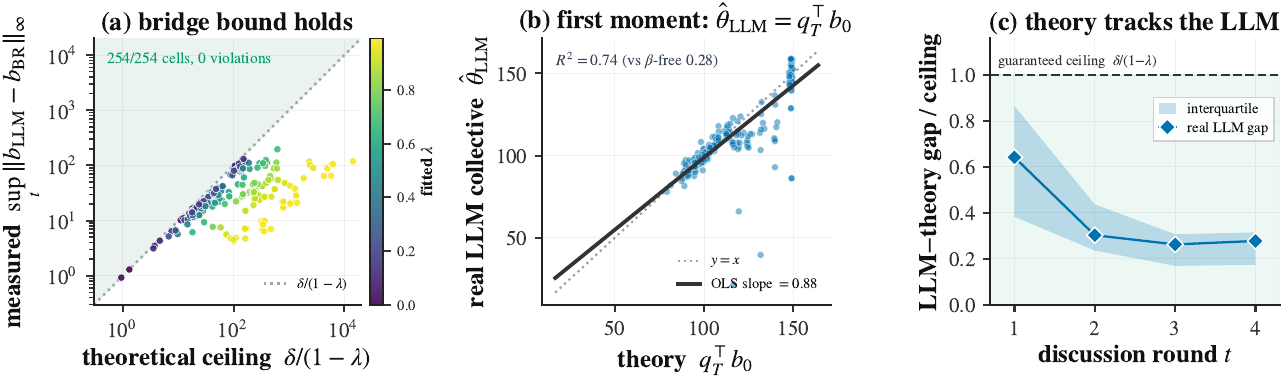}
  \caption{\textbf{The proxy--LLM bridge (Theorem~\ref{thm:bridge}) on real Qwen data.}
  \textbf{(a)} Measured sup-norm discrepancy $\sup_t\lVert b_{\mathrm{LLM}}-b_{\mathrm{BR}}\rVert_\infty$
  vs.\ the theoretical ceiling $\delta_T/(1-\lambda_{\max})$, pooling $254$ cells --- one per
  (arm, $\beta$, seed) with $\lambda_{\max}<1$ --- across four $8$-seed testbed runs spanning
  Qwen2.5-3B and -7B backbones, including one 7B run with an explicit anchoring-persona prompt
  and one 3B run at reduced damping ($\zeta{=}0.6$); coloured by fitted $\lambda_{\max}$; dotted
  line is $y=x$.
  \textbf{(b)} Real LLM collective estimate $\hat\theta_{\mathrm{LLM}}$ vs.\ the proxy's
  first-moment prediction $q_T^{\!\top}b_0$, pooling the $8$-seed 7B run from panel (a) with a
  further $24$-seed 7B sweep; dotted is $y=x$, solid is the OLS fit.
  \textbf{(c)} LLM--theory gap, normalized by the guaranteed ceiling $\delta/(1-\lambda)$ (dashed),
  across $T{=}4$ discussion rounds on the $8$-seed anchored-persona 7B run from panel (a) alone;
  band is the interquartile range.}
  \label{fig:bridge}
\end{figure}

\paragraph{The proxy predicts real LLM estimates.}
The discussion benchmarks run DeGroot updating ($\lambda{=}1$), the unanchored limit
of Theorem~\ref{thm:beta}: the proxy's effective sample size collapses and recovers
with $\beta$, and the LLM's collective accuracy tracks that swing (Figure~\ref{fig:three}A,B).
At $\lambda{=}1$ this is empirical, not a consequence of Theorem~\ref{thm:bridge}, whose
bound is vacuous here (Proposition~\ref{prop:sharp}); the two share the $\beta$-gating
mechanism, not a theorem mapping $N_{\mathrm{eff}}$ onto accuracy. The bridge bound requires
anchoring, so we certify the step-by-step correspondence on the anchored testbed: there the
LLM estimate is the influence-weighted average the theory predicts (slope $0.88$, $R^2{=}0.74$),
and the sup-norm bound holds on all $254$ cells (Figure~\ref{fig:bridge}). The anchoring
$\lambda$ is fit per agent, so the ceiling is fitted not prescribed---tight on the anchored
subset, satisfied over the pool; full provenance is in Appendix~\ref{app:bridge}.

\subsection{Ablations}
\label{sec:exp-ablations}
The herding floor is an effective-sample-size effect, not a small-network artifact: it holds at
every society size on the real benchmarks (Werewolf for $n\in\{6,8,10,12,16\}$, HiddenBench for
$n\in\{8,16,24\}$; full breakdown in Appendix~\ref{app:hb} and~\ref{app:ww}), so a larger network
does not dilute a captured hub. Table~\ref{tab:ablations} sweeps the remaining controls in the
controlled testbed (the ablation grid, plus the model-scale sweep, is in
Appendix~\ref{app:ablations} and~\ref{app:testbed}): the two-regime is invariant to
follower self-reliance and survives five
structural variants of the interaction model. It is, however, gated causally by social power,
vanishing entirely when the gate is off ($\zeta{=}0$).
\begin{table}[t]\centering\small
\caption{\textbf{Ablations (controlled testbed, Qwen2.5-3B, $8$ seeds).} Collective error, lower is
  better; narrow $\beta{=}0.1$ is the herding regime, wide $\beta{=}10$ the aggregate.
  Variants: memory, exposure-pricing, late-joiner, distance-decay, skeptic.}
  \label{tab:ablations}
  \begin{tabular}{llccl}\toprule
  Ablation & range & narrow $\beta$ & wide $\beta$ & reading \\\midrule
  social power $\zeta{=}0$  & gate off      & $\mathbf{5.5}$ & $4.9$  & \textbf{no damping $\Rightarrow$ no herding} \\
  social power $\zeta{>}0$  & $0.2$--$0.95$ & $\approx 42$    & $4$--$6$  & damping on $\Rightarrow$ herding \\
  self-reliance $\lambda$   & $0.3$--$0.7$  & $42$           & $11.5$ & robust to follower self-weight \\
  equalizer $k$ (Sinkhorn)  & $1$--$4$      & \multicolumn{2}{c}{column defect $0.14\!\to\!0.037$ (Fig.~\ref{fig:app-abl}a)} & one knob clears the market \\
  model variants            & $5$ kinds     & $18$--$59$      & $2$--$15$ & herding persists; eq clears each \\
  \bottomrule\end{tabular}
\end{table}
\section{Conclusion}
\label{sec:conclusion}
SNLA shows that a single parameter controls collective intelligence: the
attention width $\beta$. It governs whether an LLM-agent population pools its
information or herds. Narrow $\beta$ collapses collective accuracy onto a few
dominant sources; wide $\beta$ recovers wisdom-of-crowds behavior, once exposure
is balanced. One failure then remains, a high-power source that occupies too
many contexts, and a decentralized pricing rule removes it, restoring the
optimal collective weights whenever a dominant source is present, with no
retraining.

\bibliographystyle{plainnat}
\bibliography{Reference}

\clearpage
\crefalias{section}{appendix}
\crefalias{subsection}{appendix}
\vspace{1cm}
\begin{center}
    {\LARGE \textbf{Supplementary Materials}}
\end{center}

\section{Assumptions and Measurement of \texorpdfstring{$\delta$}{delta}}
\label{app:assumptions}

This appendix collects the assumptions of the three theorems and the
procedure measuring $\delta$ from the event log. Throughout, $P =
\mathrm{diag}(\lambda_1, \ldots, \lambda_n)$, $\lambda_{\max} = \max_i
\lambda_i$, and $\{\mC(t)\}$ is the row-stochastic realized-influence
sequence; operational and proxy beliefs $b(t), b^{\mathrm{BR}}(t)$ are
both initialized at $b^0$.

\begin{assumption}[Anchored cooperative regime]
\label{ass:fj}
The proxy is the quadratic, cooperative, anchored best response: the
anchoring weights satisfy $\lambda_i \in [0, 1)$ for all $i$, so that
$\lambda_{\max} < 1$.
\end{assumption}

The weight $\lambda_i < 1$ is the regime in which the persona anchor
$\rho_i$ exerts nonzero pull toward $b_i^0$. The restriction
$\lambda_{\max} < 1$ cannot in general be dropped:
Proposition~\ref{prop:sharp} exhibits a row-stochastic sequence and a
$\delta$-rational rule with $\lambda_{\max} = 1$ whose gap grows
linearly in $t$. This is a constructive counterexample, not a claim that
every $\lambda_{\max} = 1$ system diverges.

\begin{assumption}[$\delta$-rational emission]
\label{ass:delta}
There exists $\delta \geq 0$ such that, along the operational
trajectory,
\begin{equation}
\label{eq:delta}
\Bigl|
b_i(t+1)
- (1 - \lambda_i)\, b_i^0
- \lambda_i \sum_j \mC_{ij}(t)\, b_j(t)
\Bigr|
\ \leq\ \delta
\qquad
\text{for all } i \text{ and all } t \geq 0 ,
\end{equation}
where $\mC(t)$ is the realized-influence matrix produced by the
operational system at round $t$.
\end{assumption}

Two features of \eqref{eq:delta} matter. The best response is evaluated
at the LLM's \emph{own} beliefs $b(t)$, not the proxy
$b^{\mathrm{BR}}(t)$: a one-step condition presupposing nothing about
trajectory distance. And every quantity in \eqref{eq:delta} is logged,
so $\delta$ is measurable per emission, the assumption falsifiable.

\begin{assumption}[Signal model and signal--exposure independence]
\label{ass:indep}
The private signals are $b_i^0 = \theta^\star + \xi_i$ with
$\xi_1, \ldots, \xi_n$ independent, $\mathbb{E}[\xi_i] = 0$, and
$\mathrm{Var}(\xi_i) = s_i^2$, and the noise vector
$\xi = (\xi_1, \ldots, \xi_n)$ is independent of the realized sequence
$\{\mC(t)\}_{t \geq 0}$. The homoskedastic case is $s_i^2 \equiv s^2$.
\end{assumption}

Independence of $\xi$ from $\{\mC(t)\}$ holds when recipient selection
and event qualities depend on messages only through timing and routing,
not belief values; it is the exogenous-topology analogue from classical
opinion dynamics and matches our fixed-topology environments. It is what
licenses treating $q_T$ as fixed under $\mathbb{E}_\xi$ in
Proposition~\ref{prop:variance}: belief-dependent selection couples $q_T$
and $\xi$, voiding the conditional variance identities, and so lies
outside all variance-level results below. The Bridge Theorem, which does
not use Assumption~\ref{ass:indep}, is unaffected.

\paragraph{Measuring $\delta$.}
\label{app:measuring-delta}
Assumption~\ref{ass:delta} is checkable round by round. At round $t$,
both the realized-influence row $\mC_{i:}(t)$ and the operational
beliefs $b(t)$ are functions of the event log, so the per-emission
residual
\begin{equation}
\label{eq:delta-hat}
\hat\delta_i(t)
=
\Bigl|
b_i(t+1)
- (1 - \lambda_i)\, b_i^0
- \lambda_i \textstyle\sum_j \mC_{ij}(t)\, b_j(t)
\Bigr|
\end{equation}
is computable offline once the run is complete. The anchoring weights
$(\lambda_i)$ are not assumed known: they are fit per agent by least
squares over the run, minimizing $\sum_t \hat\delta_i(t)^2$ over
$\lambda_i \in [0, 1)$. For fixed $i$ this objective is a univariate
quadratic in $\lambda_i$, since $\hat\delta_i(t)$ is affine in
$\lambda_i$ at each $t$, so the fit is closed-form. The theorems require
the \emph{uniform} residual of Assumption~\ref{ass:delta}, so the value
that instantiates them is the maximum
\[
\delta_T = \max_{i,\, 0 \leq t < T}\, \hat\delta_i(t) ,
\]
not a quantile: an upper quantile bounds all but a fraction of the
emissions and yields only a trimmed, high-probability diagnostic, under
which the bridge bound holds after discarding the residual outliers. A
finite run of length $T$ measures $\delta_T$ and so supports the bridge
bound up to horizon $T$; extending it to all $t$ requires the residual
to remain uniformly bounded beyond the run, which is an assumption, not
a measurement. The empirical distribution of $\hat\delta_i(t)$ is itself
a finding: it quantifies how far a given backbone, persona, and topic
depart from a rational anchored opinion updater, and its tail governs
the gap between the uniform bound and its trimmed counterpart.

\section{Proof of \texorpdfstring{Theorem~\ref{thm:bridge}}{Theorem 1} (Bridge)}
\label{app:bridge-sec}
\label{app:proofs}

This appendix proves the three theorems and supporting results. The
development is self-contained: all constants are explicit, and the
stationary perturbation of the two-regime theorem is carried out from a
deviation-matrix identity (Lemma~\ref{lem:deviation}).

We use the following notation throughout. For a matrix $R$,
$\|R\|_\infty = \max_i \sum_j |R_{ij}|$ is the maximum absolute row sum,
which is the operator norm induced by the vector $\ell_\infty$ norm;
for a vector $v$, $\|v\|_p$ is the usual $\ell_p$ norm. A
row-stochastic matrix $\mC$ satisfies $\mC \geq 0$ entrywise and
$\mC \vone = \vone$. A nonnegative square matrix is \emph{irreducible}
if its support digraph is strongly connected. We write $e_j$ for the
$j$-th standard basis vector and $\Delta_n$ for the probability simplex
in $\R^n$. The exposure graph, social power, and realized influence are
$\bar{\mW}(t)$, $\pi(t)$, and $\mC(t)$ as defined in
Section~\ref{sec:method-cmi}; we abbreviate the per-pair
exposure--power score by
\begin{equation}
\label{eq:s-shorthand}
s_{ij}(t) \coloneqq \bar W_{ij}(t)\, \pi_j(t) ,
\end{equation}
so that the realized influence is the row-wise tempered allocation
$\mC_{ij}(t) = s_{ij}(t)^{1/\beta} / \sum_k s_{ik}(t)^{1/\beta}$. Since
$\pi_j(t) \geq \frac{1-\zeta}{n} > 0$ for every $j$
(Lemma~\ref{lem:power} below), $s_{ij}(t) > 0$ if and only if
$\bar W_{ij}(t) > 0$, so $\mC(t)$ has exactly the support of the
exposure graph.

\subsection{The proxy and its closed form}
\label{app:proxy}

The operational beliefs are emitted by the LLM backbones and are not
directly analyzable. The proxy replaces each emission by an anchored
best response driven by the \emph{same} realized influence sequence
$\{\mC(t)\}$. Writing the cooperative quadratic best response of
agent $i$ explicitly, the proxy is the recursion
\begin{equation}
\label{eq:proxy}
b^{\mathrm{BR}}(0) = b^0 ,
\qquad
b_i^{\mathrm{BR}}(t+1)
= (1 - \lambda_i)\, b_i^0
+ \lambda_i \sum_j \mC_{ij}(t)\, b_j^{\mathrm{BR}}(t) ,
\end{equation}
which in matrix form, with $P = \mathrm{diag}(\lambda_1, \ldots,
\lambda_n)$, reads
\begin{equation}
\label{eq:proxy-matrix}
b^{\mathrm{BR}}(t+1)
= (\mI - P)\, b^0 + P\, \mC(t)\, b^{\mathrm{BR}}(t) .
\end{equation}
The peer beliefs entering \eqref{eq:proxy} are the proxy's own
$b_j^{\mathrm{BR}}(t)$, never the operational $b_j(t)$: the proxy is a
self-contained linear dynamical system, coupled to the operational run
only through the realized matrices $\{\mC(t)\}$. Two classical models
are strict special cases. With uniform anchoring $\lambda_i \equiv
\lambda$ the recursion is the Friedkin--Johnsen model
\citep{friedkin1990social} on the realized influence; with $\lambda_i
\equiv 1$ it is the DeGroot model \citep{degroot1974reaching}.

\begin{lemma}[Best-response closed form]
\label{lem:closed}
Fix agent $i$ and round $t$, and let the peer beliefs
$b^{\mathrm{BR}}(t)$ be given. The cooperative quadratic best response
\[
b_i^{\mathrm{BR}}(t+1)
= \argmin_{b \in \R}\
(1 - \lambda_i)\, (b - b_i^0)^2
+ \lambda_i \sum_j \mC_{ij}(t)\, \bigl(b - b_j^{\mathrm{BR}}(t)\bigr)^2
\]
has the unique solution \eqref{eq:proxy}, for every $\lambda_i \in
[0,1]$ and every row-stochastic $\mC(t)$.
\end{lemma}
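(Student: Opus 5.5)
The objective in Lemma~\ref{lem:closed} is a univariate quadratic in $b$. The plan is to show it is strictly convex, take its first-order condition, and read off the minimizer.

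Write
\[
f(b) = (1-\lambda_i)(b-b_i^0)^2 + \lambda_i \sum_j \mC_{ij}(t)\bigl(b - b_j^{\mathrm{BR}}(t)\bigr)^2 .
\]
Expanding, the coefficient of $b^2$ is $(1-\lambda_i) + \lambda_i \sum_j \mC_{ij}(t)$. Row-stochasticity of $\mC(t)$ gives $\sum_j \mC_{ij}(t) = 1$, so this coefficient equals $(1-\lambda_i)+\lambda_i = 1 > 0$.

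That coefficient is the key point, and it holds for every $\lambda_i\in[0,1]$, including the endpoints where one of the two terms vanishes. It makes $f$ strictly convex with $f''(b)=2$. Hence $f$ is coercive, has exactly one critical point, and that point is the unique global minimizer.

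Setting $f'(b)=0$ gives
\[
2(1-\lambda_i)(b-b_i^0) + 2\lambda_i\sum_j \mC_{ij}(t)\bigl(b-b_j^{\mathrm{BR}}(t)\bigr) = 0 .
\]
Collecting the terms in $b$ and using $\sum_j \mC_{ij}(t)=1$ once more, this reduces to
\[
b = (1-\lambda_i)b_i^0 + \lambda_i\sum_j \mC_{ij}(t)\, b_j^{\mathrm{BR}}(t),
\]
which is exactly \eqref{eq:proxy}.

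There is no real obstacle here. The only point needing care is that the normalization of the quadratic coefficient uses row-stochasticity, not $\lambda_i<1$. This is what lets the statement cover the degenerate DeGroot endpoint $\lambda_i=1$, where the anchor term disappears but the peer term still has total weight one, and the endpoint $\lambda_i=0$, where the solution is $b_i^0$.
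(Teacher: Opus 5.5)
Your proposal is correct and follows the paper's proof essentially line for line. Both use row-stochasticity to show that the coefficient of $b^2$ is $(1-\lambda_i)+\lambda_i\sum_j \mC_{ij}(t)=1$, deduce strict convexity, and then solve the first-order condition (using $\sum_j \mC_{ij}(t)=1$ again) to obtain \eqref{eq:proxy}.
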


\begin{proof}
Write the objective as
\[
F(b)
= (1 - \lambda_i)\, (b - b_i^0)^2
+ \lambda_i \sum_j \mC_{ij}(t)\, \bigl(b - b_j^{\mathrm{BR}}(t)\bigr)^2 .
\]
The coefficient of $b^2$ is $(1-\lambda_i) + \lambda_i \sum_j
\mC_{ij}(t) = 1 > 0$, so $F$ is strictly convex with unique minimizer
at $F'(b) = 0$. Since
\[
F'(b)
= 2(1 - \lambda_i)(b - b_i^0)
+ 2 \lambda_i \sum_j \mC_{ij}(t)\,\bigl(b - b_j^{\mathrm{BR}}(t)\bigr)
= 2 \Bigl[ b - (1 - \lambda_i)\, b_i^0
- \lambda_i \sum_j \mC_{ij}(t)\, b_j^{\mathrm{BR}}(t) \Bigr]
\]
(using $\sum_j \mC_{ij}(t) = 1$), $F'(b) = 0$ gives \eqref{eq:proxy}.
\end{proof}

\subsection{Well-posedness of social power}
\label{app:power}

The realized influence is built from the exposure graph $\bar{\mW}(t)$
and the social power $\pi(t)$, the Katz--Bonacich centrality
\[
\pi(t)^\top
= \frac{1 - \zeta}{n}\, \vone^\top
\bigl(\mI - \zeta \bar{\mW}(t)\bigr)^{-1} ,
\qquad \zeta \in [0, 1) .
\]
The following lemma records that this is well defined and strictly
positive, a fact used repeatedly below (it guarantees that $\mC(t)$ has
exactly the support of $\bar{\mW}(t)$).

\begin{lemma}[Well-posedness of social power]
\label{lem:power}
For every row-stochastic $\bar{\mW}(t)$ and every $\zeta \in [0, 1)$,
the inverse $(\mI - \zeta \bar{\mW}(t))^{-1}$ exists, $\pi(t) \in
\Delta_n$, and
\[
\pi_j(t) \ \geq\ \frac{1 - \zeta}{n} \ >\ 0
\qquad \text{for every } j .
\]
At $\zeta = 0$, $\pi_j(t) = 1/n$ for all $j$.
\end{lemma}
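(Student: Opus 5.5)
The plan is to expand the inverse as a Neumann series and read off every claim from it. Fix a row-stochastic $\bar{\mW} = \bar{\mW}(t)$ and $\zeta \in [0,1)$.

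\emph{Invertibility.} Since $\bar{\mW} \geq 0$ and $\bar{\mW}\vone = \vone$, we have $\|\bar{\mW}\|_\infty = 1$. Hence $\|\zeta \bar{\mW}\|_\infty = \zeta < 1$. The series $\sum_{k \geq 0} \zeta^k \bar{\mW}^k$ then converges absolutely in the induced $\ell_\infty$ operator norm. Multiplying by $\mI - \zeta\bar{\mW}$ telescopes to $\mI$, so the series equals $(\mI - \zeta \bar{\mW})^{-1}$. This gives existence of the inverse.

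\emph{Nonnegativity and the lower bound.} Every term $\zeta^k \bar{\mW}^k$ is entrywise nonnegative. Separating the $k=0$ term gives
\[
(\mI - \zeta\bar{\mW})^{-1} = \mI + \sum_{k\geq 1}\zeta^k \bar{\mW}^k \geq \mI
\]
entrywise. Therefore
\[
\pi^\top = \tfrac{1-\zeta}{n}\vone^\top(\mI-\zeta\bar{\mW})^{-1} \geq \tfrac{1-\zeta}{n}\vone^\top,
\]
that is, $\pi_j \geq (1-\zeta)/n > 0$ for every $j$.

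\emph{Normalization.} Sum the entries of $\pi$ by right-multiplying by $\vone$. Each power is row-stochastic, so $\bar{\mW}^k \vone = \vone$ for all $k$. It follows that $(\mI-\zeta\bar{\mW})^{-1}\vone = \sum_k \zeta^k \vone = \vone/(1-\zeta)$. (Equivalently, $(\mI - \zeta\bar{\mW})\vone = (1-\zeta)\vone$; inverting gives the same identity.) Hence
\[
\pi^\top \vone = \tfrac{1-\zeta}{n}\cdot \tfrac{n}{1-\zeta} = 1.
\]
Together with nonnegativity, this gives $\pi \in \Delta_n$.

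\emph{The case $\zeta = 0$.} Here the inverse is $\mI$, so $\pi = \vone/n$.

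There is no real obstacle in this argument. The only point needing care is justifying convergence of the Neumann series, which follows from the bound $\|\zeta\bar{\mW}\|_\infty = \zeta < 1$ and submultiplicativity of the induced norm.
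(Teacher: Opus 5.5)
Your proposal is correct and follows the paper's proof essentially step for step. Both use the Neumann series justified by $\|\zeta\bar{\mW}(t)\|_\infty = \zeta < 1$, take the lower bound $(1-\zeta)/n$ from entrywise nonnegativity and the $k=0$ term, and get unit total mass from $\bar{\mW}(t)^k\vone = \vone$; your extra checks (the telescoping identification of the series with the inverse, and the shortcut $(\mI-\zeta\bar{\mW})\vone = (1-\zeta)\vone$) only refine the same argument.
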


\begin{proof}
Since $\bar{\mW}(t)$ is row-stochastic, $\|\bar{\mW}(t)\|_\infty = 1$,
hence $\|\zeta \bar{\mW}(t)\|_\infty = \zeta < 1$. The Neumann series
\[
\bigl(\mI - \zeta \bar{\mW}(t)\bigr)^{-1}
= \sum_{k=0}^\infty \zeta^k \bar{\mW}(t)^k
\]
converges absolutely, so the inverse exists. Every term is entrywise
nonnegative, so $\pi(t) \geq 0$; the $k = 0$ term gives
\[
\pi_j(t)
= \frac{1-\zeta}{n} \sum_{k \geq 0} \zeta^k
\bigl[\vone^\top \bar{\mW}(t)^k\bigr]_j
\ \geq\
\frac{1-\zeta}{n} \bigl[\vone^\top \bar{\mW}(t)^0\bigr]_j
= \frac{1-\zeta}{n} .
\]
Row-stochasticity gives $\bar{\mW}(t)^k \vone = \vone$, so
\[
\pi(t)^\top \vone
= \frac{1-\zeta}{n}\, \vone^\top
\sum_{k \geq 0} \zeta^k \bar{\mW}(t)^k \vone
= \frac{1-\zeta}{n} \sum_{k \geq 0} \zeta^k\, \vone^\top \vone
= \frac{1-\zeta}{n} \cdot n \cdot \frac{1}{1-\zeta}
= 1 ,
\]
so $\pi(t) \in \Delta_n$. At $\zeta = 0$, $\pi(t)^\top =
\frac{1}{n}\vone^\top$.
\end{proof}

\subsection{The Bridge Theorem}
\label{app:bridge}

The Bridge Theorem couples the operational system to the proxy
path-wise, for the realized $\{\mC(t)\}$, with no stationarity, mixing,
or convergence hypothesis: the contraction comes entirely from
$\lambda_{\max} < 1$.

\bridgethm*

\begin{proof}
Let $e(t) \coloneqq b(t) - b^{\mathrm{BR}}(t)$, so $e(0) = 0$.

\emph{Step 1 (residual).} Define
\[
u(t+1)
\coloneqq
b(t+1) - \Bigl[ (\mI - P)\, b^0 + P\, \mC(t)\, b(t) \Bigr] .
\]
Its $i$-th entry is the quantity bounded in \eqref{eq:delta}, so
\begin{equation}
\label{eq:u-bound}
\| u(t+1) \|_\infty \ \leq\ \delta , \qquad t \geq 0 .
\end{equation}

\emph{Step 2 (error recursion).} Subtracting \eqref{eq:proxy-matrix}
from the definition of $u(t+1)$, the anchor terms $(\mI - P)b^0$ cancel:
\begin{align*}
e(t+1)
&= \Bigl[ u(t+1) + (\mI - P) b^0 + P \mC(t)\, b(t) \Bigr]
- \Bigl[ (\mI - P) b^0 + P \mC(t)\, b^{\mathrm{BR}}(t) \Bigr] \\
&= u(t+1) + P \mC(t)\, e(t) .
\end{align*}

\emph{Step 3 (contraction).} Since $\|\mC(t)\|_\infty = 1$ and
$\|P\|_\infty = \lambda_{\max}$, $\| P \mC(t) \|_\infty \leq
\lambda_{\max}$, so by \eqref{eq:u-bound},
\begin{equation}
\label{eq:err-rec}
\| e(t+1) \|_\infty
\leq \| u(t+1) \|_\infty + \| P \mC(t)\, e(t) \|_\infty
\leq \delta + \lambda_{\max}\, \| e(t) \|_\infty .
\end{equation}

\emph{Step 4 (unrolling).} Induction on \eqref{eq:err-rec} from $e(0) =
0$ gives $\| e(t) \|_\infty \leq \delta \sum_{r=0}^{t-1}
\lambda_{\max}^{\,r}$: the inductive step is
\[
\| e(t+1) \|_\infty
\leq \delta + \lambda_{\max} \cdot \delta \sum_{r=0}^{t-1}
\lambda_{\max}^{\,r}
= \delta \sum_{r=0}^{t} \lambda_{\max}^{\,r} .
\]
Summing the geometric series, with $\lambda_{\max} < 1$,
\begin{equation}
\label{eq:bridge}
\| e(t) \|_\infty
\leq \delta\, \frac{1 - \lambda_{\max}^{\,t}}{1 - \lambda_{\max}}
\leq \frac{\delta}{1 - \lambda_{\max}}
= \bar\delta ,
\end{equation}
the bound of \cref{thm:bridge}.
\end{proof}

\begin{remark}[No stationarity is required]
\label{rem:pathwise}
The bound \eqref{eq:bridge} holds for every realization of the
time-varying, endogenous, feedback-coupled influence sequence
$\{\mC(t)\}$, with no stationarity, mixing, irreducibility, or
convergence hypothesis whatsoever. The only structural input is
row-stochasticity of each $\mC(t)$, which is automatic, and the strict
anchor $\lambda_{\max} < 1$. In particular the theorem applies verbatim
to the equalized sequence of Section~\ref{sec:method-eq}, whose
matrices are produced by the online pricing protocol and are in general
neither stationary nor mutually commuting.
\end{remark}

The next proposition shows the bound is tight and that the anchoring
hypothesis cannot be removed: at $\lambda_{\max} = 1$ the discrepancy
grows without bound.

\begin{proposition}[Sharpness and failure at $\lambda = 1$]
\label{prop:sharp}
The bound \eqref{eq:bridge} is attained. For every row-stochastic
sequence $\{\mC(t)\}$ and uniform anchoring $\lambda_i \equiv \lambda$,
the operational emission rule
\[
b(t+1) = (1 - \lambda)\, b^0 + \lambda\, \mC(t)\, b(t) + \delta \vone
\]
satisfies Assumption~\ref{ass:delta} with equality and produces
\[
b(t) - b^{\mathrm{BR}}(t)
= \delta\, \frac{1 - \lambda^{\,t}}{1 - \lambda}\, \vone
\quad (\lambda < 1) ,
\qquad
b(t) - b^{\mathrm{BR}}(t) = \delta\, t\, \vone
\quad (\lambda = 1) .
\]
In particular, at $\lambda = 1$ the gap grows linearly in $t$ and no
uniform-in-$t$ bridge exists.
\end{proposition}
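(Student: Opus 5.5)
The plan is a direct computation: substitute the emission rule into the residual, reduce the gap to a scalar recursion along $\vone$, and solve it. First I will check Assumption~\ref{ass:delta}. Substituting $b(t+1) = (1-\lambda) b^0 + \lambda \mC(t) b(t) + \delta\vone$ into \eqref{eq:delta-hat}, the anchor term and the influence term cancel exactly. This leaves $\hat\delta_i(t) = \delta$ for every $i$ and $t$, so the assumption holds with equality. In the notation of the proof of \cref{thm:bridge}, the residual vector is therefore $u(t+1) = \delta\vone$.

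Next I will reuse Step~2 of that proof. Its derivation of the error recursion used only the definitions and never $\lambda_{\max}<1$, so it remains valid at $\lambda=1$. With $P = \lambda \mI$ it reads
\[
e(t+1) = \delta\vone + \lambda\, \mC(t)\, e(t), \qquad e(0) = 0 .
\]
The key observation is that the ray spanned by $\vone$ is invariant: every $\mC(t)$ is row-stochastic, so $\mC(t)\vone = \vone$. I will prove by induction on $t$ that $e(t) = c_t \vone$ with $c_0 = 0$ and $c_{t+1} = \delta + \lambda c_t$. The inductive step is $e(t+1) = \delta\vone + \lambda c_t\, \mC(t)\vone = (\delta + \lambda c_t)\vone$. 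This argument uses nothing about the sequence $\{\mC(t)\}$ beyond row-stochasticity, which is why the conclusion holds for every such sequence.

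Solving the scalar recursion gives $c_t = \delta\sum_{r=0}^{t-1}\lambda^r$. For $\lambda<1$ this is $\delta(1-\lambda^t)/(1-\lambda)$. Hence $\|e(t)\|_\infty$ equals the first bound in \eqref{eq:bridge}, and it converges to $\delta/(1-\lambda)$ as $t\to\infty$. This shows attainment: the finite-$t$ bound is met with equality, and the uniform bound is approached in the limit, so the constant $1/(1-\lambda_{\max})$ cannot be improved. For $\lambda=1$ the sum is $\delta t$, so $e(t) = \delta t\vone$, which grows linearly for any $\delta>0$ and rules out any uniform-in-$t$ bound.

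No step is a real obstacle. The only point needing care is being explicit that attainment means equality in the intermediate bound $\delta(1-\lambda^t)/(1-\lambda)$, with the final ceiling $\delta/(1-\lambda)$ reached only asymptotically.
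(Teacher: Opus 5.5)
Your proposal is correct and follows the paper's proof step for step: the residual $u(t+1)=\delta\vone$, the error recursion $e(t+1)=\lambda\mC(t)e(t)+\delta\vone$, an induction on $e(t)=c_t\vone$ using $\mC(t)\vone=\vone$, and solving $c_{t+1}=\lambda c_t+\delta$. Your remark that attainment means equality in the intermediate bound $\delta(1-\lambda^t)/(1-\lambda)$, with the ceiling $\delta/(1-\lambda)$ reached only as $t\to\infty$, makes explicit a precision the paper leaves implicit.
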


\begin{proof}
The residual is $u(t+1) = \delta \vone$, so the error recursion
becomes $e(t+1) = \lambda\, \mC(t)\, e(t) + \delta \vone$. By
induction $e(t) = c_t \vone$ with $c_0 = 0$, $c_{t+1} = \lambda c_t +
\delta$: if $e(t) = c_t\vone$ then, since $\mC(t)\vone = \vone$,
\[
e(t+1) = \lambda\, \mC(t)\, (c_t \vone) + \delta \vone
= \lambda c_t\, \mC(t) \vone + \delta \vone
= (\lambda c_t + \delta)\, \vone ,
\]
Solving gives $c_t = \delta \sum_{r=0}^{t-1} \lambda^r$, equal to
$\delta \frac{1 - \lambda^t}{1 - \lambda}$ for $\lambda < 1$ and
$\delta t$ for $\lambda = 1$.
\end{proof}

Proposition~\ref{prop:sharp} makes anchoring a structural requirement:
unanchored, the per-round emission noise accumulates without bound, so
any proxy theory tolerating $\delta > 0$ must operate at $\lambda_{\max}
< 1$.

\subsection{Collective estimation by the proxy}
\label{app:collective}

The collective estimate is the population average of beliefs. We write
the operational and proxy estimates as
\[
\hat\theta_T \coloneqq \frac{1}{n} \vone^\top b(T) ,
\qquad
\hat\theta_T^{\mathrm{BR}} \coloneqq \frac{1}{n} \vone^\top
b^{\mathrm{BR}}(T) .
\]
The proxy estimate is a fixed linear functional of the private signals;
the next lemma identifies the weight $q_T$ and shows it conserves mass.
For an arbitrary row-stochastic sequence $\{\mC(t)\}$ set $M(t)
\coloneqq P \mC(t)$ and define the backward products
\begin{equation}
\label{eq:phi}
\Phi(T, r)
\coloneqq M(T-1)\, M(T-2) \cdots M(r)
\quad (0 \leq r < T) ,
\qquad
\Phi(T, T) \coloneqq \mI .
\end{equation}

\begin{lemma}[Trajectory representation and mass conservation]
\label{lem:traj}
For every row-stochastic sequence $\{\mC(t)\}$ and every $T \geq 0$,
\begin{equation}
\label{eq:traj}
b^{\mathrm{BR}}(T)
= G_T\, b^0 ,
\qquad
G_T \coloneqq \Phi(T, 0)
+ \sum_{r=0}^{T-1} \Phi(T, r+1)\, (\mI - P) ,
\end{equation}
and the collective weight
$q_T^\top \coloneqq \frac{1}{n} \vone^\top G_T$ satisfies
$\hat\theta_T^{\mathrm{BR}} = q_T^\top b^0$ and is mass-conserving,
$q_T^\top \vone = 1$.
\end{lemma}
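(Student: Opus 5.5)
The approach is induction on $T$ for the trajectory formula, then two short computations for $q_T$. The one identity the mass computation needs is $G_T \vone = \vone$, and the plan is to reach it through the recursion rather than by summing backward products.

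\textbf{Recursion for $G_T$.} First, rewrite the proxy recursion \eqref{eq:proxy-matrix} as the affine system
\[
b^{\mathrm{BR}}(t+1) = M(t)\, b^{\mathrm{BR}}(t) + (\mI - P)\, b^0, \qquad M(t) = P\,\mC(t).
\]
Then prove \eqref{eq:traj} by induction on $T$.
\begin{itemize}
\item \emph{Base case $T = 0$.} The sum is empty and $\Phi(0,0) = \mI$, so $G_0 = \mI$ and $b^{\mathrm{BR}}(0) = b^0$ holds trivially.
\item \emph{Inductive step.} Assume $b^{\mathrm{BR}}(T) = G_T b^0$. Then $b^{\mathrm{BR}}(T+1) = M(T)\, G_T\, b^0 + (\mI - P)\, b^0$.
\item \emph{Matching terms.} Use the composition rule $M(T)\,\Phi(T,r) = \Phi(T+1,r)$, which holds for all $0 \le r \le T$, including $r = T$ where $\Phi(T,T) = \mI$. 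Applying it to each term of $G_T$ turns $M(T)G_T$ into $\Phi(T+1,0) + \sum_{r=0}^{T-1}\Phi(T+1,r+1)(\mI-P)$. The leftover $(\mI - P)$ is exactly the $r = T$ term, since $\Phi(T+1,T+1) = \mI$. This gives $G_{T+1}$.
\end{itemize}
The step also yields the recursion $G_{T+1} = M(T)\, G_T + (\mI - P)$, which I use below.

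\textbf{Collective estimate.} With the formula in hand, averaging gives $\hat\theta^{\mathrm{BR}}_T = \frac1n \vone^\top G_T b^0 = q_T^\top b^0$ directly from the definition of $q_T$.

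\textbf{Mass conservation.} This reduces to showing $G_T \vone = \vone$, because then $q_T^\top \vone = \frac1n \vone^\top G_T \vone = \frac1n \vone^\top \vone = 1$. I prove $G_T\vone = \vone$ by induction using the recursion above.
\begin{itemize}
\item The base case is $G_0 \vone = \vone$.
\item For the step, row-stochasticity gives $\mC(T)\vone = \vone$, so $M(T)\vone = P\vone$. Hence
\[
G_{T+1}\vone = P\,\mC(T)\vone + (\mI - P)\vone = P\vone + \vone - P\vone = \vone.
\]
\end{itemize}
Equivalently, a constant initial vector $b^0 = c\vone$ is a fixed point of the proxy, since each row combines $1-\lambda_i$ and $\lambda_i$ times the same constant. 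This fixed-point view is a useful sanity check on the identity.

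\textbf{Expected obstacle.} Nothing here is deep. The only real risk is index bookkeeping: aligning the backward-product convention \eqref{eq:phi} with the $r = T$ boundary term in the inductive step. The recursion route sidesteps the alternative of telescoping $\sum_r \Phi(T,r+1)(\mI-P)\vone$ directly, which is where that misalignment would most likely creep in. Note that the lemma needs no condition $\lambda_{\max} < 1$ and holds for all $\lambda_i \in [0,1]$.
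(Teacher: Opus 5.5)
Your proof is correct, and the trajectory induction (including absorbing the leftover $(\mI - P)$ as the $r = T$ summand via $\Phi(T+1,T+1) = \mI$) is exactly the paper's argument. The only difference is in mass conservation: you induct on the recursion $G_{T+1} = M(T)G_T + (\mI - P)$, whereas the paper telescopes the sum using $\Phi(T,r+1)(\mI-P)\vone = \Phi(T,r+1)\vone - \Phi(T,r)\vone$. Both steps rest on the same fact $M(r)\vone = P\vone$ and are equally short.
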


\begin{proof}
\emph{Representation.} Induction on $T$. At $T = 0$, $\Phi(0,0) = \mI$
and $G_0 = \mI$. Assuming \eqref{eq:traj} at $T$, by
\eqref{eq:proxy-matrix} and the hypothesis,
\begin{align*}
b^{\mathrm{BR}}(T+1)
&= (\mI - P)\, b^0 + M(T)\, b^{\mathrm{BR}}(T) \\
&= (\mI - P)\, b^0 + M(T)\, G_T\, b^0 \\
&= \Bigl[ M(T)\, \Phi(T, 0)
+ \sum_{r=0}^{T-1} M(T)\, \Phi(T, r+1)\, (\mI - P)
+ (\mI - P) \Bigr] b^0 .
\end{align*}
Since $M(T)\Phi(T, r) = \Phi(T+1, r)$ and $(\mI - P) = \Phi(T+1,
T+1)(\mI - P)$,
\[
b^{\mathrm{BR}}(T+1)
= \Bigl[ \Phi(T+1, 0)
+ \sum_{r=0}^{T} \Phi(T+1, r+1)\, (\mI - P) \Bigr] b^0
= G_{T+1}\, b^0 ,
\]
the anchor term supplying the $r = T$ summand. Then
$\hat\theta_T^{\mathrm{BR}} = \frac{1}{n} \vone^\top G_T b^0 =
q_T^\top b^0$.

\emph{Mass conservation.} We show $G_T \vone = \vone$, giving
$q_T^\top \vone = \frac{1}{n}\vone^\top \vone = 1$. For $0 \leq r
< T$, using $\mC(r)\vone = \vone$,
\[
\Phi(T, r)\, \vone
= \Phi(T, r+1)\, M(r)\, \vone
= \Phi(T, r+1)\, P\, \mC(r)\, \vone
= \Phi(T, r+1)\, P\, \vone ,
\]
hence
\[
\Phi(T, r+1)\, (\mI - P)\, \vone
= \Phi(T, r+1)\, \vone - \Phi(T, r+1)\, P\, \vone
= \Phi(T, r+1)\, \vone - \Phi(T, r)\, \vone .
\]
Summing over $r$ telescopes:
\[
\sum_{r=0}^{T-1} \Phi(T, r+1)\, (\mI - P)\, \vone
= \Phi(T, T)\, \vone - \Phi(T, 0)\, \vone
= \vone - \Phi(T, 0)\, \vone ,
\]
so $G_T \vone = \Phi(T,0)\vone + [\vone - \Phi(T,0)\vone] = \vone$.
\end{proof}

\begin{proposition}[Unbiasedness and variance]
\label{prop:variance}
Under Assumption~\ref{ass:indep}, conditionally on the realized
sequence $\{\mC(t)\}$,
\[
\mathbb{E}\bigl[\hat\theta_T^{\mathrm{BR}}\bigr] = \theta^\star ,
\qquad
\mathrm{Var}\bigl(\hat\theta_T^{\mathrm{BR}}\bigr) = q_T^\top S\, q_T ,
\qquad
S = \mathrm{diag}(s_1^2, \ldots, s_n^2) ,
\]
and in the homoskedastic case
$\mathrm{Var}(\hat\theta_T^{\mathrm{BR}}) = s^2 \|q_T\|_2^2$.
\end{proposition}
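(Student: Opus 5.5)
The argument rests on Lemma~\ref{lem:traj}, which gives $\hat\theta_T^{\mathrm{BR}} = q_T^\top b^0$. Here $q_T^\top = \frac{1}{n}\vone^\top G_T$ is a deterministic function of the realized sequence $\{\mC(t)\}$, and it satisfies $q_T^\top \vone = 1$. Conditioning on $\{\mC(t)\}$ therefore fixes $q_T$. By Assumption~\ref{ass:indep}, the noise $\xi$ is independent of that sequence, so conditioning leaves the law of $\xi$ unchanged: the entries remain independent with mean zero and variances $s_i^2$.

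Next substitute $b^0 = \theta^\star \vone + \xi$. This gives
\[
\hat\theta_T^{\mathrm{BR}} = \theta^\star\, q_T^\top \vone + q_T^\top \xi = \theta^\star + q_T^\top \xi ,
\]
where mass conservation is what removes any bias. Taking the conditional expectation and using linearity with $\mathbb{E}[\xi \mid \{\mC(t)\}] = 0$ yields unbiasedness.

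For the variance, the conditional variance of $q_T^\top \xi$ is $q_T^\top \mathrm{Cov}(\xi)\, q_T$. Independence of the coordinates makes the cross terms vanish, so $\mathrm{Cov}(\xi) = S$ and the variance equals $q_T^\top S q_T = \sum_i q_{T,i}^2 s_i^2$. In the homoskedastic case $S = s^2 \mI$, which gives $s^2\|q_T\|_2^2$.

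There is no serious obstacle; the argument is essentially bookkeeping. The one point that needs care is justifying that $q_T$ may be treated as a constant inside the expectation. This is where signal--exposure independence is essential: if $\mC(t)$ depended on belief values, $q_T$ would be correlated with $\xi$ and the identity would fail. I would note this explicitly, in line with the remark after Assumption~\ref{ass:indep}.
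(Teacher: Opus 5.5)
Your proposal is correct and follows the paper's proof essentially step for step: you write $\hat\theta_T^{\mathrm{BR}} = \theta^\star + q_T^\top \xi$ via Lemma~\ref{lem:traj} and mass conservation, treat $q_T$ as fixed given $\{\mC(t)\}$ by signal--exposure independence, and read off the mean and the diagonal-covariance variance. Your explicit note that independence of the whole vector $\xi$ from the sequence keeps its coordinates mutually independent after conditioning states more carefully a step the paper takes for granted.
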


\begin{proof}
By $b^0 = \theta^\star \vone + \xi$ and $q_T^\top \vone = 1$
(Lemma~\ref{lem:traj}), $\hat\theta_T^{\mathrm{BR}} = q_T^\top b^0 =
\theta^\star + q_T^\top \xi$. Conditionally on $\{\mC(t)\}$, $q_T$ is
fixed and $\xi \perp \{\mC(t)\}$ with $\mathbb{E}[\xi] = 0$, so
$\mathbb{E}[\hat\theta_T^{\mathrm{BR}}] = \theta^\star$. By
independence of the $\xi_i$,
\[
\mathrm{Var}\bigl(\hat\theta_T^{\mathrm{BR}}\bigr)
= \mathrm{Var}\bigl(q_T^\top \xi\bigr)
= \sum_{i=1}^n q_{T,i}^2\, \mathrm{Var}(\xi_i)
= \sum_{i=1}^n q_{T,i}^2\, s_i^2
= q_T^\top S\, q_T .
\]
In the homoskedastic case $S = s^2 \mI$, giving $s^2 \|q_T\|_2^2$.
\end{proof}

\begin{definition}[Effective sample size]
\label{def:neff}
For a weight vector $q \in \R^n$ with $q^\top \vone = 1$, the
\emph{effective sample size} is $N_{\mathrm{eff}}(q) \coloneqq
1/\|q\|_2^2$ \citep{kish1965survey}.
\end{definition}

In the homoskedastic case
$\mathrm{Var}(\hat\theta_T^{\mathrm{BR}}) = s^2/N_{\mathrm{eff}}(q_T)$:
the collective estimate behaves like the average of
$N_{\mathrm{eff}}(q_T)$ independent signals. A dispersed weight $q_T
\approx \frac{1}{n}\vone$ gives $N_{\mathrm{eff}} \approx n$, the
wisdom-of-crowds regime \citep{golub2010naive}; a concentrated weight
$q_T \approx e_h$ gives $N_{\mathrm{eff}} \approx 1$, herding. The
remaining theorems characterize $N_{\mathrm{eff}}(q_T)$ under the
baseline and the equalized allocators.

The Bridge Theorem transfers these proxy moments to the operational
estimate.

\begin{corollary}[Moment transfer]
\label{cor:moments}
Under Assumptions~\ref{ass:fj}--\ref{ass:indep}, conditionally on the
realized sequence $\{\mC(t)\}$,
\[
\bigl| \hat\theta_T - \hat\theta_T^{\mathrm{BR}} \bigr|
\leq \bar\delta \ \text{ a.s.},
\qquad
\bigl| \mathbb{E}[\hat\theta_T] - \theta^\star \bigr|
\leq \bar\delta ,
\qquad
\bigl| \mathrm{sd}(\hat\theta_T)
- \mathrm{sd}(\hat\theta_T^{\mathrm{BR}}) \bigr|
\leq \bar\delta .
\]
In the homoskedastic case $\mathrm{sd}(\hat\theta_T^{\mathrm{BR}}) =
s\|q_T\|_2$, so $\bigl| \mathrm{sd}(\hat\theta_T) - s\|q_T\|_2 \bigr|
\leq \bar\delta$.
\end{corollary}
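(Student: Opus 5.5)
The plan is to derive all three inequalities of Corollary~\ref{cor:moments} from the pathwise bound of Theorem~\ref{thm:bridge}, combined with the unbiasedness and variance identities of Proposition~\ref{prop:variance}.

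\emph{Almost-sure bound.} Everything is conditional on the realized sequence $\{\mC(t)\}$. On every realization of $\xi$, Theorem~\ref{thm:bridge} gives $\|b(T) - b^{\mathrm{BR}}(T)\|_\infty \le \bar\delta$. Averaging the coordinates then yields
\[
|\hat\theta_T - \hat\theta_T^{\mathrm{BR}}| = \Bigl|\tfrac1n \vone^\top \bigl(b(T) - b^{\mathrm{BR}}(T)\bigr)\Bigr| \le \bar\delta ,
\]
because an average never exceeds the largest coordinate in absolute value. This step uses only Assumptions~\ref{ass:fj} and~\ref{ass:delta}.

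\emph{Bias bound.} Write $D \coloneqq \hat\theta_T - \hat\theta_T^{\mathrm{BR}}$, so that $|D| \le \bar\delta$ almost surely. Proposition~\ref{prop:variance} supplies $\mathbb{E}[\hat\theta_T^{\mathrm{BR}}] = \theta^\star$ conditionally on $\{\mC(t)\}$, and this uses Assumption~\ref{ass:indep}. Taking expectations gives
\[
|\mathbb{E}\hat\theta_T - \theta^\star| = |\mathbb{E} D| \le \mathbb{E}|D| \le \bar\delta .
\]
Integrability of $\hat\theta_T$ follows from $\hat\theta_T = \hat\theta_T^{\mathrm{BR}} + D$, with $D$ bounded and $\hat\theta_T^{\mathrm{BR}} = q_T^\top b^0$ square-integrable.

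\emph{Standard-deviation bound.} I would use Minkowski's inequality in the centered $L^2$ space. Since $\mathrm{sd}(X) = \|X - \mathbb{E}X\|_{L^2}$ is a seminorm, the reverse triangle inequality gives
\[
|\mathrm{sd}(\hat\theta_T) - \mathrm{sd}(\hat\theta_T^{\mathrm{BR}})| \le \mathrm{sd}(D) \le \|D\|_{L^2} \le \bar\delta .
\]
The middle inequality holds because centering only reduces the $L^2$ norm. Finally, Proposition~\ref{prop:variance} gives $\mathrm{sd}(\hat\theta_T^{\mathrm{BR}}) = s\|q_T\|_2$ in the homoskedastic case, which yields the last claim.

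The only subtle point is measure-theoretic. The bridge bound must hold pointwise in $\xi$ for a fixed sequence $\{\mC(t)\}$, so that the conditional expectations are legitimate. I expect this to be unproblematic because the bridge bound is pathwise. The bound $\delta$ in Assumption~\ref{ass:delta} must be read as holding uniformly over realizations; where it is instead measured per run, it is the deterministic constant $\bar\delta$ that enters all three inequalities.
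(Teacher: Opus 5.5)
Your proposal is correct and follows the paper's proof essentially step for step. The three steps match: you average the bridge bound to get the pathwise gap, you bound the bias by $|\mathbb{E}D|\le\mathbb{E}|D|$ using the proxy's unbiasedness, and you bound the standard deviation by Minkowski's inequality for the centered $L^2$ seminorm together with $\mathrm{sd}(D)\le\|D\|_{L^2}\le\bar\delta$. Your extra remarks, on integrability and on reading $\delta$ as a deterministic constant that holds uniformly over realizations, spell out points the paper leaves implicit.
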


\begin{proof}
\emph{Pathwise gap.} With $e(T) = b(T) - b^{\mathrm{BR}}(T)$,
\[
\bigl| \hat\theta_T - \hat\theta_T^{\mathrm{BR}} \bigr|
= \Bigl| \frac{1}{n} \vone^\top e(T) \Bigr|
\leq \frac{1}{n} \sum_{i=1}^n |e_i(T)|
\leq \| e(T) \|_\infty
\leq \bar\delta
\]
by Theorem~\ref{thm:bridge}. Set $z_T = \hat\theta_T -
\hat\theta_T^{\mathrm{BR}}$, $|z_T| \leq \bar\delta$ a.s.

\emph{Mean.} Since $\mathbb{E}[\hat\theta_T^{\mathrm{BR}}] =
\theta^\star$,
\[
\bigl| \mathbb{E}[\hat\theta_T] - \theta^\star \bigr|
= \bigl| \mathbb{E}[z_T] \bigr|
\leq \mathbb{E}\,|z_T|
\leq \bar\delta .
\]

\emph{Standard deviation.} With $\widetilde X = X - \mathbb{E}[X]$,
Minkowski's inequality gives
\[
\mathrm{sd}(\hat\theta_T)
= \bigl\| \widetilde{\hat\theta_T^{\mathrm{BR}}} + \widetilde z_T
\bigr\|_{L^2}
\leq \bigl\| \widetilde{\hat\theta_T^{\mathrm{BR}}} \bigr\|_{L^2}
+ \| \widetilde z_T \|_{L^2}
= \mathrm{sd}(\hat\theta_T^{\mathrm{BR}}) + \mathrm{sd}(z_T) ,
\]
and symmetrically, so $|\mathrm{sd}(\hat\theta_T) -
\mathrm{sd}(\hat\theta_T^{\mathrm{BR}})| \leq \mathrm{sd}(z_T) \leq
\bar\delta$, the last by $\mathrm{sd}(z_T)^2 \leq \mathbb{E}[z_T^2]
\leq \bar\delta^2$. The homoskedastic identity is
Proposition~\ref{prop:variance}.
\end{proof}
\section{Proof of \texorpdfstring{Theorem~\ref{thm:beta}}{Theorem 2} (Attention Width)}
\label{app:tworegime-sec}

This appendix proves Theorem~\ref{thm:beta}: graph constants and the
tempered-allocation bound (Appendix~\ref{app:flatten}), a stationary
perturbation bound (Appendix~\ref{app:deviation}), then the two
regimes.

\subsection{Constants and quantitative tempered allocation}
\label{app:flatten}

The two-regime theorem fixes the exposure graph and varies the
attention width $\beta$, so we work with the fixed-graph realized
influence $\mC(\beta)$ whose row $i$ is the tempered allocation of the
score row $s_{i:} = (s_{ij})_j$, with $s_{ij} = \bar W_{ij} \pi_j$ as in
\eqref{eq:s-shorthand}. We record the combinatorial constants attached
to the exposure graph. Let
\[
B_{ij} = \mathbf{1}\{\bar W_{ij} > 0\} ,
\qquad
d_i = \sum_j B_{ij} ,
\qquad
D = \mathrm{diag}(d_1, \ldots, d_n) ,
\qquad
d_{\max} = \max_i d_i ,
\]
so $B$ is the unweighted exposure support, $d_i$ the out-degree of
agent $i$, and $D$ the degree matrix. Per row define the
\emph{log-spread} and, when $d_i \geq 2$, the \emph{dominance gap}
\begin{equation}
\label{eq:Lg}
L_i
= \max_{j, k\, :\, B_{ij} = B_{ik} = 1}
\bigl| \log s_{ij} - \log s_{ik} \bigr| ,
\qquad
g_i = \log \frac{s_{i,(1)}}{s_{i,(2)}} ,
\end{equation}
where $s_{i,(1)} \geq s_{i,(2)}$ are the two largest scores in row $i$
(set $g_i = +\infty$ if $d_i = 1$). The global constants are
\[
L = \max_i L_i ,
\qquad
\underline g = \min_i g_i .
\]
Since $\pi_j \geq \frac{1-\zeta}{n} > 0$ (Lemma~\ref{lem:power}), we
have $s_{ij} > 0 \iff B_{ij} = 1$, so row $i$ of $\mC(\beta)$ is
supported exactly on the $d_i$ exposed sources, and $\mC(\beta)$ is
irreducible whenever $B$ is.

The next lemma controls how sharply $\Gamma_\beta$ concentrates as a
function of $\beta$. For $s \in \R^n_{\geq 0}$, $s \neq 0$, write $N(s)
= \{j : s_j > 0\}$, $|N(s)| = d$, $[\Gamma_\beta(s)]_j = s_j^{1/\beta}
/ \sum_{k \in N(s)} s_k^{1/\beta}$ on $N(s)$ (else $0$), $\vone_{N(s)}$
the indicator of $N(s)$, and $L(s), g(s)$ the log-spread and dominance
gap of \eqref{eq:Lg}.

\begin{lemma}[Quantitative tempered allocation]
\label{lem:flatten}
Let $s \in \R^n_{\geq 0}$, $s \neq 0$, with $|N(s)| = d$. Then for
every $\beta > 0$:
\begin{enumerate}
\item[\emph{(i)}] \emph{(Flattening.)}
\[
\Bigl\| \Gamma_\beta(s) - \tfrac{1}{d} \vone_{N(s)} \Bigr\|_1
\ \leq\
e^{L(s)/\beta} - 1
\ \leq\
\frac{2 L(s)}{\beta}
\qquad \text{whenever } \beta \geq L(s) .
\]
\item[\emph{(ii)}] \emph{(Concentration.)} If $s$ has a unique
maximizer $j^\star$ with gap $g(s) > 0$, then
\[
1 - [\Gamma_\beta(s)]_{j^\star}
\ \leq\
(d - 1)\, e^{-g(s)/\beta} .
\]
\end{enumerate}
Consequently $\Gamma_\beta(s) \to \frac{1}{d}\vone_{N(s)}$ as $\beta
\to \infty$ and $\Gamma_\beta(s) \to e_{j^\star}$ as $\beta \to 0$.
\end{lemma}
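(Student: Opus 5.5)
The plan is to work entirely with log-scores relative to a reference entry. Restrict to $N(s)$ and write $s_j^{1/\beta} = \exp(\log s_j/\beta)$. Both parts then reduce to elementary bounds on ratios of exponentials.

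\emph{Part (i).} Set $w_j = s_j^{1/\beta}$ for $j \in N(s)$ and $Z = \sum_{k\in N(s)} w_k$. By the definition of the log-spread $L = L(s)$, every ratio satisfies $w_j/w_k \in [e^{-L/\beta}, e^{L/\beta}]$. Averaging over $k$ gives
\[
\frac{d\,w_j}{Z} = \Bigl(\frac{1}{d}\sum_k \frac{w_k}{w_j}\Bigr)^{-1} \in \bigl[e^{-L/\beta}, e^{L/\beta}\bigr].
\]
Hence $|[\Gamma_\beta(s)]_j - 1/d| \le \frac{1}{d}\max(e^{L/\beta}-1,\,1-e^{-L/\beta}) = \frac{1}{d}(e^{L/\beta}-1)$. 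Summing the $d$ entries yields the first inequality. For the second inequality, put $x = L/\beta \in [0,1]$ when $\beta \ge L$. By convexity of $e^x - 1$ on $[0,1]$, we have $e^x - 1 \le (e-1)x \le 2x$, which gives $2L/\beta$. A slightly sharper route is to use that the positive and negative deviations cancel in mass, so the $\ell_1$ norm equals twice the positive part. The crude bound already suffices, so I would not pursue this.

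\emph{Part (ii).} With $j^\star$ the unique maximizer, write
\[
1 - [\Gamma_\beta(s)]_{j^\star} = \frac{\sum_{k\ne j^\star} w_k}{w_{j^\star} + \sum_{k\ne j^\star} w_k} \le \sum_{k \ne j^\star} \frac{w_k}{w_{j^\star}}.
\]
Each ratio satisfies $w_k/w_{j^\star} = (s_k/s_{j^\star})^{1/\beta} \le (s_{(2)}/s_{(1)})^{1/\beta} = e^{-g(s)/\beta}$. There are $d-1$ such terms, which gives $(d-1)e^{-g(s)/\beta}$.

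\emph{Limits.} The consequences follow directly. As $\beta\to\infty$, the bound in (i) tends to $0$ because $L$ is finite, since all entries in $N(s)$ are positive. As $\beta \to 0$, the bound in (ii) tends to $0$ because $g > 0$. Since $\Gamma_\beta(s)$ is a probability vector, a mass at $j^\star$ tending to $1$ forces convergence to $e_{j^\star}$.

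The only step needing care is the constant in (i): the relative-deviation estimate must be applied to both sides, and one must note that $1 - e^{-x} \le e^x - 1$ so that the lower side is dominated. Everything else is routine.
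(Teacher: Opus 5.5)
Your proof is correct and follows essentially the same route as the paper: for (i), pairwise ratio bounds $e^{\pm L/\beta}$ on the tempered weights, inverting the normalizing sum, then $1-e^{-x}\le e^x-1$ and the chord bound $e^x-1\le(e-1)x$; for (ii), keeping only the maximizer's term in the denominator. The only cosmetic difference is that you bound the off-maximizer mass as one sum rather than entry by entry, which gives the same estimate.
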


\begin{proof}
Set $a_j = \beta^{-1} \log s_j$ for $j \in N(s)$. Then, for $j \in
N(s)$,
\[
[\Gamma_\beta(s)]_j
= \frac{e^{a_j}}{\sum_{k \in N(s)} e^{a_k}}
= \Bigl( \sum_{k \in N(s)} e^{a_k - a_j} \Bigr)^{-1} .
\]

\emph{(i) Flattening.} For $j, k \in N(s)$, $|a_k - a_j| \leq
L(s)/\beta$, so each $e^{a_k - a_j} \in [e^{-L(s)/\beta},
e^{L(s)/\beta}]$ and their sum lies in $[d\,e^{-L(s)/\beta},
d\,e^{L(s)/\beta}]$; taking reciprocals,
\[
\frac{1}{d}\, e^{-L(s)/\beta}
\ \leq\
[\Gamma_\beta(s)]_j
\ \leq\
\frac{1}{d}\, e^{L(s)/\beta}
\qquad (j \in N(s)) .
\]
With $x = L(s)/\beta \geq 0$, using $1 - e^{-x} \leq e^x - 1$,
\[
\Bigl| [\Gamma_\beta(s)]_j - \frac{1}{d} \Bigr|
\ \leq\
\frac{1}{d}\,(e^x - 1) .
\]
Summing over the $d$ supported coordinates,
\[
\Bigl\| \Gamma_\beta(s) - \tfrac{1}{d}\vone_{N(s)} \Bigr\|_1
= \sum_{j \in N(s)} \Bigl| [\Gamma_\beta(s)]_j - \frac{1}{d} \Bigr|
\ \leq\
d \cdot \frac{e^x - 1}{d}
= e^{L(s)/\beta} - 1 ,
\]
the first inequality. For the second, when $\beta \geq L(s)$, $x \in
[0,1]$ and convexity of $e^x - 1$ gives $e^x - 1 \leq (e-1)x \leq 2x =
2L(s)/\beta$.

\emph{(ii) Concentration.} Let $j^\star$ be the unique maximizer. For
$k \neq j^\star$ in $N(s)$, retaining only the $j^\star$ term in the
denominator,
\[
[\Gamma_\beta(s)]_k
= \frac{e^{a_k}}{\sum_{m \in N(s)} e^{a_m}}
\ \leq\
\frac{e^{a_k}}{e^{a_{j^\star}}}
= e^{(\log s_k - \log s_{j^\star})/\beta}
\ \leq\
e^{-g(s)/\beta} ,
\]
using $\log s_k - \log s_{j^\star} \leq -g(s)$ ($s_{j^\star} =
s_{(1)}$, $s_k \leq s_{(2)}$). Summing over $k \neq j^\star$,
\[
1 - [\Gamma_\beta(s)]_{j^\star}
= \sum_{k \neq j^\star} [\Gamma_\beta(s)]_k
\ \leq\
(d - 1)\, e^{-g(s)/\beta} .
\]
Limits: $\beta \to \infty$ in (i) gives $\Gamma_\beta(s) \to
\frac{1}{d}\vone_{N(s)}$; $\beta \to 0$ in (ii) gives
$\Gamma_\beta(s) \to e_{j^\star}$.
\end{proof}

\subsection{A self-contained stationary perturbation bound}
\label{app:deviation}

The wide-context regime compares $\nu_{\mC(\beta)}$ with a reference
chain, via the following Perron--Frobenius facts and an exact
perturbation identity.

\begin{lemma}[Stationary distribution; maximum principle]
\label{lem:pf}
Let $\mC$ be row-stochastic and irreducible. Then:
\begin{enumerate}
\item[\emph{(i)}] if $x \in \R^n$ satisfies $\mC x = x$, then $x$ is a
constant multiple of $\vone$;
\item[\emph{(ii)}] there is a unique $\nu_{\mC} \in \Delta_n$ with
$\nu_{\mC}^\top \mC = \nu_{\mC}^\top$, and $\nu_{\mC, j} > 0$ for every
$j$.
\end{enumerate}
\end{lemma}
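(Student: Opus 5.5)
For (i), I would use the maximum principle. Take $x$ with $\mC x = x$, and let $m = \max_i x_i$, attained at some index $i_0$. Row-stochasticity gives
\[
x_{i_0} = \sum_j \mC_{i_0 j}\, x_j \leq m ,
\]
with equality only if $x_j = m$ for every $j$ with $\mC_{i_0 j} > 0$. The set $\{i : x_i = m\}$ is therefore closed under taking out-neighbours in the support digraph of $\mC$. Irreducibility means this digraph is strongly connected, so every vertex is reachable from $i_0$, and hence $x_i = m$ for all $i$. That is, $x = m\vone$.

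For (ii), I would first get existence and uniqueness up to scaling of a left fixed vector, then positivity, then the sign.
\begin{itemize}
\item[--] \emph{Existence.} Since $\mC\vone = \vone$, the number $1$ is an eigenvalue of $\mC$, so it is also an eigenvalue of $\mC^\top$. Hence a nonzero $v$ with $v^\top \mC = v^\top$ exists.
\item[--] \emph{Uniqueness up to scaling.} By (i), the right eigenspace of eigenvalue $1$ is one-dimensional. Left and right geometric multiplicities coincide, because $\mathrm{rank}(\mC - \mI) = \mathrm{rank}(\mC^\top - \mI)$. So the left fixed space is also one-dimensional.
\item[--] \emph{Nonnegativity.} I would avoid invoking Perron--Frobenius by name and use a Cesàro-average argument. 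The averages $A_N = \frac{1}{N}\sum_{k=0}^{N-1} \mC^k$ are row-stochastic and lie in a compact set, so some subsequence converges to a row-stochastic $A$. Since $A_N\mC - A_N = \frac{1}{N}(\mC^N - \mI) \to 0$, the limit satisfies $A\mC = A$. Any row of $A$ is then a left fixed probability vector.
\item[--] \emph{Definition of $\nu_{\mC}$.} Uniqueness up to scaling, together with the normalization $\vone^\top\nu = 1$, pins down a unique $\nu_{\mC} \in \Delta_n$.
\item[--] \emph{Positivity.} Let $Z = \{j : \nu_j = 0\}$ and suppose $Z$ is nonempty. For $j \in Z$, $0 = \nu_j = \sum_i \nu_i \mC_{ij}$. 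Every term is nonnegative, so $\mC_{ij} = 0$ whenever $\nu_i > 0$. There are then no arcs from the nonempty complement of $Z$ into $Z$, which contradicts strong connectivity. So $\nu_j > 0$ for every $j$.
\end{itemize}

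The only delicate step is making sure the left fixed vector can be taken of one sign. The Cesàro compactness argument handles this cleanly. Alternatively, one could show that $|v|$ is also left-fixed via the triangle inequality $|v|^\top\mC \geq |v|^\top$ together with summing both sides. Everything else is routine.
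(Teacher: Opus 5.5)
Your proof is correct. Part (i) and the positivity step of (ii) are the paper's arguments. Where you differ is in how you obtain existence and uniqueness of $\nu_{\mC}$.

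The paper gets existence by applying Brouwer's fixed-point theorem to $p \mapsto \mC^\top p$ on $\Delta_n$, which gives a nonnegative fixed vector directly. It proves uniqueness only \emph{after} positivity, by a min-ratio argument. For two stationary $\nu,\nu'$ it sets $c=\min_j \nu'_j/\nu_j$. Then $\nu'-c\nu$ is a nonnegative left-fixed vector with a zero entry, and the positivity argument forces it to vanish.

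You instead deduce uniqueness from part (i) by rank duality, $\mathrm{rank}(\mC-\mI)=\mathrm{rank}(\mC^\top-\mI)$. You then produce a sign-definite fixed vector separately, by Ces\`aro averaging plus compactness (or the $|v|$ trick).

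What your route buys:
\begin{itemize}
\item It replaces a topological fixed-point theorem with Bolzano--Weierstrass and elementary linear algebra.
\item It proves something slightly stronger: the entire real left eigenspace of $\mC$ at eigenvalue $1$ is one-dimensional, not merely that $\Delta_n$ contains exactly one stationary vector.
\end{itemize}

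What the paper's route buys:
\begin{itemize}
\item It stays inside the nonnegative cone and never uses (i) to prove (ii).
\item It gets the sign for free, because the fixed point is found directly in $\Delta_n$.
\end{itemize}

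One minor redundancy in your write-up: the first existence bullet (eigenvalue $1$ of $\mC^\top$) is superseded by the Ces\`aro step, which already supplies a nonzero left-fixed vector.
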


\begin{proof}
\emph{(i)} Let $i^\star \in \argmax_i x_i$ and $m = x_{i^\star}$. From
$\mC x = x$ and row-stochasticity,
\[
m = x_{i^\star} = \sum_j \mC_{i^\star j}\, x_j
\ \leq\ \sum_j \mC_{i^\star j}\, m = m ,
\]
so $x_j = m$ for every out-neighbor $j$ of $i^\star$. The set $\{i :
x_i = m\}$ is thus closed under out-neighbors, hence all of $\{1,
\ldots, n\}$ by irreducibility, so $x = m\vone$.

\emph{(ii) Existence.} The linear map $p \mapsto \mC^\top p$ sends
$\Delta_n$ into itself ($\mC \geq 0$ and $\vone^\top \mC^\top p =
\vone^\top p$), so Brouwer gives a fixed point $\nu_{\mC} \in
\Delta_n$.

\emph{Positivity.} Let $Z = \{j : \nu_{\mC, j} = 0\}$. For $j \in Z$,
$0 = \sum_i \nu_{\mC,i} \mC_{ij}$ forces $\mC_{ij} = 0$ for all $i \in
Z^c$: no edge from $Z^c$ into $Z$, contradicting irreducibility unless
$Z = \emptyset$. Hence $\nu_{\mC} > 0$.

\emph{Uniqueness.} Let $\nu, \nu'$ be stationary (both $> 0$), $c =
\min_j \nu'_j / \nu_j$, $v = \nu' - c\nu \geq 0$. Then $v^\top \mC =
v^\top$ and $v_{j_0} = 0$ at the minimizing index, so the positivity
argument forces $v = 0$, i.e.\ $\nu' = c\nu$. Total mass gives $c = 1$,
so $\nu' = \nu$.
\end{proof}

\begin{lemma}[Deviation matrix and exact perturbation identity]
\label{lem:deviation}
Let $P^\star$ be row-stochastic and irreducible with stationary
distribution $\nu^\star$. Then:
\begin{enumerate}
\item[\emph{(i)}] $Q \coloneqq \mI - P^\star + \vone \nu^{\star\top}$ is
invertible; with $Z \coloneqq Q^{-1}$, the \emph{deviation matrix} is
$H \coloneqq Z - \vone \nu^{\star\top}$.
\item[\emph{(ii)}] $Z \vone = \vone$, $\nu^{\star\top} Z =
\nu^{\star\top}$, and $(\mI - P^\star)\, H = \mI - \vone
\nu^{\star\top}$.
\item[\emph{(iii)}] For every row-stochastic $\widetilde{\mC}$ with a
stationary distribution $\tilde\nu$, setting $E \coloneqq
\widetilde{\mC} - P^\star$,
\[
\tilde\nu^\top - \nu^{\star\top} = \tilde\nu^\top E\, H ,
\qquad\text{hence}\qquad
\| \tilde\nu - \nu^\star \|_1
\leq \| E \|_\infty\, \kappa(P^\star) ,
\quad
\kappa(P^\star) \coloneqq \| H \|_\infty .
\]
\end{enumerate}
\end{lemma}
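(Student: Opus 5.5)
The plan is to prove the three parts in order, with (iii) as the payoff and the most delicate step.

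For (i) I will check that $Q$ has trivial kernel. Suppose $Qx = 0$, so $x - P^\star x + \vone(\nu^{\star\top}x) = 0$. Left-multiplying by $\nu^{\star\top}$ and using $\nu^{\star\top}P^\star = \nu^{\star\top}$ and $\nu^{\star\top}\vone = 1$ gives $\nu^{\star\top}x = 0$. The equation then reduces to $P^\star x = x$. Lemma~\ref{lem:pf}(i) forces $x = c\vone$, and $\nu^{\star\top}x = c = 0$, so $x = 0$ and $Q$ is invertible.

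For (ii) I will read off the identities from the fact that $\vone$ and $\nu^\star$ are right and left fixed vectors of $Q$:
\[
Q\vone = \vone - \vone + \vone = \vone, \qquad \nu^{\star\top}Q = \nu^{\star\top}.
\]
Applying $Z = Q^{-1}$ gives $Z\vone = \vone$ and $\nu^{\star\top}Z = \nu^{\star\top}$.

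For the fundamental identity, write $\mI - P^\star = Q - \vone\nu^{\star\top}$. Then
\[
(\mI - P^\star)Z = \mI - \vone\nu^{\star\top}Z = \mI - \vone\nu^{\star\top}.
\]
Also $(\mI - P^\star)\vone\nu^{\star\top} = 0$, since $P^\star\vone = \vone$. Subtracting gives $(\mI - P^\star)H = \mI - \vone\nu^{\star\top}$.

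For (iii), take $\tilde\nu$ stationary for $\widetilde{\mC}$, normalized so that $\tilde\nu^\top\vone = 1$. The stationarity condition $\tilde\nu^\top\widetilde{\mC} = \tilde\nu^\top$ rewrites as
\[
\tilde\nu^\top(\mI - P^\star) = \tilde\nu^\top E.
\]
Right-multiplying by $H$ and using (ii):
\[
\tilde\nu^\top E H = \tilde\nu^\top(\mI - \vone\nu^{\star\top}) = \tilde\nu^\top - \nu^{\star\top}.
\]
This is the exact identity.

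The norm bound is the step needing care, because of which operator norm controls row vectors. For a row vector $v^\top$ and a matrix $R$, we have $\|v^\top R\|_1 \le \|v\|_1 \max_i \sum_j |R_{ij}| = \|v\|_1\|R\|_\infty$. This holds because $v^\top R = \sum_i v_i R_{i\cdot}$, so $\|v^\top R\|_1 \le \sum_i |v_i|\,\|R_{i\cdot}\|_1$.

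Applying this twice gives
\[
\|\tilde\nu - \nu^\star\|_1 \le \|\tilde\nu\|_1\,\|E\|_\infty\,\|H\|_\infty = \|E\|_\infty\,\kappa(P^\star),
\]
since $\tilde\nu \in \Delta_n$ gives $\|\tilde\nu\|_1 = 1$.

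The main obstacle is this last point: one must make sure $\tilde\nu$ is a probability vector (nonnegative with unit mass), so that its $\ell_1$ norm equals one. One must also use the max-row-sum norm correctly for left multiplication, rather than mistakenly invoking the $\ell_\infty$-induced norm on column vectors.
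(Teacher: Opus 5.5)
Your proposal is correct and follows the paper's proof essentially step for step: the kernel argument via Lemma~\ref{lem:pf}(i) for invertibility, the fixed-vector identities $Q\vone = \vone$ and $\nu^{\star\top}Q = \nu^{\star\top}$ for $Z$, the rewrite $\mI - P^\star = Q - \vone\nu^{\star\top}$, and right-multiplying $\tilde\nu^\top(\mI - P^\star) = \tilde\nu^\top E$ by $H$. The paper also bounds the norm by applying the row-vector inequality $\|v^\top R\|_1 \le \|v\|_1\|R\|_\infty$ twice, phrasing the first application as a convex combination of the rows of $E$, which is your inequality with $\|\tilde\nu\|_1 = 1$.
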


\begin{proof}
\emph{(i) Invertibility.} Suppose $Q x = 0$. Multiply on the left by
$\nu^{\star\top}$ and use $\nu^{\star\top} P^\star = \nu^{\star\top}$
and $\nu^{\star\top} \vone = 1$:
\[
0 = \nu^{\star\top} Q x
= \nu^{\star\top} x - \nu^{\star\top} P^\star x
+ (\nu^{\star\top}\vone)(\nu^{\star\top} x)
= \nu^{\star\top} x - \nu^{\star\top} x + \nu^{\star\top} x
= \nu^{\star\top} x ,
\]
so $\nu^{\star\top} x = 0$, and $Qx = 0$ reduces to $P^\star x = x$. By
Lemma~\ref{lem:pf}(i), $x = c\vone$, and $0 = \nu^{\star\top} x = c$,
so $x = 0$: $Q$ is invertible.

\emph{(ii) Identities.} $Q\vone = \vone$ gives $Z\vone = \vone$, and
$\nu^{\star\top} Q = \nu^{\star\top}$ gives $\nu^{\star\top} Z =
\nu^{\star\top}$. Since $(\mI - P^\star)\vone = 0$,
\[
(\mI - P^\star) H
= (\mI - P^\star)\bigl(Z - \vone\nu^{\star\top}\bigr)
= (\mI - P^\star) Z ,
\]
the term $(\mI-P^\star)\vone\nu^{\star\top}$ vanishing. Now $\mI -
P^\star = Q - \vone\nu^{\star\top}$, so
\[
(\mI - P^\star) Z
= \bigl(Q - \vone\nu^{\star\top}\bigr) Z
= Q Z - \vone \nu^{\star\top} Z
= \mI - \vone\nu^{\star\top} ,
\]
using $QZ = \mI$.

\emph{(iii) Perturbation identity.} Stationarity $\tilde\nu^\top
\widetilde{\mC} = \tilde\nu^\top$ gives
\[
\tilde\nu^\top (\mI - P^\star)
= \tilde\nu^\top - \tilde\nu^\top P^\star
= \tilde\nu^\top \widetilde{\mC} - \tilde\nu^\top P^\star
= \tilde\nu^\top (\widetilde{\mC} - P^\star)
= \tilde\nu^\top E .
\]
Multiplying on the right by $H$ and using (ii),
\[
\tilde\nu^\top E\, H
= \tilde\nu^\top (\mI - P^\star) H
= \tilde\nu^\top \bigl(\mI - \vone\nu^{\star\top}\bigr)
= \tilde\nu^\top - (\tilde\nu^\top\vone)\nu^{\star\top}
= \tilde\nu^\top - \nu^{\star\top} ,
\]
the stated identity, exact in $E$.

\emph{Norm bound.} Since $\tilde\nu^\top E = \sum_i \tilde\nu_i
E_{i:}$ is a convex combination of rows of $E$,
\[
\| \tilde\nu^\top E \|_1
= \Bigl\| \sum_i \tilde\nu_i\, E_{i:} \Bigr\|_1
\leq \sum_i \tilde\nu_i\, \| E_{i:} \|_1
\leq \max_i \| E_{i:} \|_1
= \| E \|_\infty .
\]
For any row vector $w^\top$,
\[
\| w^\top H \|_1
= \sum_j \Bigl| \sum_i w_i H_{ij} \Bigr|
\leq \sum_i |w_i| \sum_j |H_{ij}|
\leq \Bigl( \max_i \sum_j |H_{ij}| \Bigr) \sum_i |w_i|
= \| H \|_\infty\, \| w \|_1 .
\]
Chaining with $w^\top = \tilde\nu^\top E$,
\[
\| \tilde\nu - \nu^\star \|_1
= \| \tilde\nu^\top E\, H \|_1
\leq \| H \|_\infty\, \| \tilde\nu^\top E \|_1
\leq \| H \|_\infty\, \| E \|_\infty
= \kappa(P^\star)\, \| E \|_\infty . \qedhere
\]
\end{proof}

The identity (iii) is exact and $\kappa(P^\star) = \|H\|_\infty$ is a
standard Markov-chain condition number
\citep{meyer1975role}.

\subsection{The two-regime theorem}
\label{app:tworegime}

We now prove that the attention width $\beta$ controls the collective
accuracy of the baseline allocator. The object analyzed is the
stationary distribution $\nu_{\mC(\beta)}$ of the fixed-graph realized
influence $\mC(\beta)$, which is the infinite-horizon limit of the
collective weight $q_T$ under uniform anchoring (see
Remark~\ref{rem:abel}); accordingly we measure accuracy by
\[
\mathrm{Var}_\infty(\beta)
\coloneqq s^2\, \bigl\| \nu_{\mC(\beta)} \bigr\|_2^2 ,
\qquad
N_{\mathrm{eff}}(\beta)
\coloneqq \bigl\| \nu_{\mC(\beta)} \bigr\|_2^{-2} .
\]

\begin{remark}[Relation to the operational weight]
\label{rem:abel}
For uniform anchoring $\lambda_i \equiv \lambda$ and a fixed
irreducible $\mC$, Lemma~\ref{lem:traj} gives the infinite-horizon
weight
\[
q_\infty(\lambda)^\top
= \frac{1-\lambda}{n}\, \vone^\top (\mI - \lambda \mC)^{-1}
= (1 - \lambda) \sum_{k \geq 0} \lambda^k\, \tfrac{1}{n}
\vone^\top \mC^k ,
\]
and $q_\infty(\lambda) \to \nu_{\mC}$ as $\lambda \uparrow 1$: the
Ces\`aro averages $\frac{1}{N}\sum_{k < N} \frac{1}{n}\vone^\top \mC^k$
converge to $\nu_{\mC}^\top$ for every irreducible $\mC$
\citep[Ch.~4]{levin2026markov}, and Abel summation agrees with the
Ces\`aro limit. The two-regime theorem is therefore the unanchored
long-horizon limit of the \emph{baseline} allocator; the guarantee for
the equalized allocator (Theorem~\ref{thm:eq}) is exact at every
horizon and anchoring level and does not rely on this remark.
\end{remark}

\betathm*

\begin{proof}
\textbf{Wide context.}
\emph{Matrix bound \eqref{eq:wide-matrix}.} Row $i$ of $\mC(\beta)$ is
$\Gamma_\beta(s_{i:})$ and row $i$ of $D^{-1}B$ is
$\frac{1}{d_i}\vone_{N(s_{i:})}$, on the same support. By
Lemma~\ref{lem:flatten}(i) their $\ell_1$ distance is $\leq
e^{L_i/\beta} - 1 \leq e^{L/\beta} - 1$; maximizing over rows,
\begin{equation}
\label{eq:wide-matrix}
\bigl\| \mC(\beta) - D^{-1} B \bigr\|_\infty
\ \leq\ e^{L/\beta} - 1 .
\end{equation}

\emph{Reference stationary distribution.} With $B = B^\top$,
\[
\bigl( d^\top D^{-1} B \bigr)_j
= \sum_i d_i \cdot \frac{B_{ij}}{d_i}
= \sum_i B_{ij}
= \sum_i B_{ji}
= d_j ,
\]
so $\nu^\star = d/(\vone^\top d)$ is stationary for $D^{-1}B$, unique
by Lemma~\ref{lem:pf}(ii).

\emph{Stationary perturbation.} Lemma~\ref{lem:deviation}(iii) with
$P^\star = D^{-1}B$, $E = \mC(\beta) - D^{-1}B$ yields
\begin{equation}
\label{eq:wide-pi}
\bigl\| \nu_{\mC(\beta)} - \nu^\star \bigr\|_1
\leq \kappa_B\, \| E \|_\infty
\leq \kappa_B \bigl( e^{L/\beta} - 1 \bigr) ,
\end{equation}
using \eqref{eq:wide-matrix}.

\emph{Variance transfer.} With $\Delta = \nu_{\mC(\beta)} -
\nu^\star$, since all entries lie in $[0,1]$,
\[
\bigl| \| \nu_{\mC(\beta)} \|_2^2 - \| \nu^\star \|_2^2 \bigr|
= \Bigl| \sum_j \Delta_j \bigl( \nu_{\mC(\beta), j} + \nu^\star_j
\bigr) \Bigr|
\leq \max_j \bigl( \nu_{\mC(\beta), j} + \nu^\star_j \bigr)\,
\| \Delta \|_1
\leq 2 \| \Delta \|_1 ,
\]
since each summand of the max is at most $1$. As $\| \nu^\star \|_2^2 =
\sum_k d_k^2 / (\sum_k d_k)^2$, multiplying by $s^2$ gives the matching variance bound stated in the theorem.

\emph{Floor and degree-regularity.} By Cauchy--Schwarz on $d, \vone$,
\[
\Bigl( \sum_k d_k \Bigr)^2
= \langle d, \vone \rangle^2
\leq \| d \|_2^2\, \| \vone \|_2^2
= n \sum_k d_k^2 ,
\]
so $\sum_k d_k^2 / (\sum_k d_k)^2 \geq 1/n$, with equality iff $d
\propto \vone$ (degree-regular); then \eqref{eq:wide-pi} gives
$N_{\mathrm{eff}}(\beta) \to n$ as $\beta \to \infty$.

\medskip
\textbf{Narrow context.}
Assume $B$ irreducible, so $\mC(\beta)$ is irreducible for every $\beta
> 0$ and $\nu \coloneqq \nu_{\mC(\beta)} > 0$ is unique
(Lemma~\ref{lem:pf}). Assume the unique dominant pair: $j_i^\star =
\argmax_k s_{ik}$ is unique for each row, $j_i^\star = h$ for $i \neq h$,
$j_h^\star = g \neq h$; so $j_i^\star \in S \coloneqq \{h,g\}$ for all
$i$. Write $\rho = \rho(\beta)$ and assume $\rho < 1$. By
Lemma~\ref{lem:flatten}(ii),
\begin{equation}
\label{eq:rowconc}
\sum_{k \neq j_i^\star} \mC_{ik}(\beta)
= 1 - \mC_{i, j_i^\star}(\beta)
\leq (d_i - 1)\, e^{-g_i/\beta}
\leq (d_{\max} - 1)\, e^{-\underline g/\beta}
= \rho .
\end{equation}

\emph{Mass outside $S$.} For $k \in S^c$, $\mC_{ik}(\beta)$ is
off-maximizer, so $\sum_{k \in S^c} \mC_{ik}(\beta) \leq \rho$ by
\eqref{eq:rowconc}. By stationarity,
\[
\nu(S^c)
= \sum_{k \in S^c} \nu_k
= \sum_{k \in S^c} \sum_i \nu_i\, \mC_{ik}(\beta)
= \sum_i \nu_i \sum_{k \in S^c} \mC_{ik}(\beta)
\leq \rho \sum_i \nu_i
= \rho ,
\]
\begin{equation}
\label{eq:narrow}
\nu(S^c) \leq \rho(\beta) ,
\qquad
N_{\mathrm{eff}}(\beta) \leq \frac{2}{(1 - \rho(\beta))^2} ,
\end{equation}
the first claim of \eqref{eq:narrow}.

\emph{Sandwich for $\nu_h$.} By \eqref{eq:rowconc}, $\mC_{hh}(\beta)
\leq \rho$ (off-maximizer) and $\mC_{ih}(\beta) \geq 1 - \rho$ for $i
\neq h$. Lower bound:
\[
\nu_h
\geq \sum_{i \neq h} \nu_i\, (1 - \rho)
= (1 - \rho)(1 - \nu_h)
\quad\Longrightarrow\quad
\nu_h (1 + (1-\rho)) \geq 1 - \rho
\quad\Longrightarrow\quad
\nu_h \geq \frac{1 - \rho}{2 - \rho} .
\]
Upper bound:
\[
\nu_h
= \nu_h\, \mC_{hh}(\beta) + \sum_{i \neq h} \nu_i\, \mC_{ih}(\beta)
\leq \rho\, \nu_h + (1 - \nu_h)
\quad\Longrightarrow\quad
\nu_h (2 - \rho) \leq 1
\quad\Longrightarrow\quad
\nu_h \leq \frac{1}{2 - \rho} .
\]

\emph{Bounds for $\nu_g$.} Since $\mC_{hg}(\beta) \geq 1 - \rho$ and
$\mC_{ig}(\beta) \leq \rho$ for $i \neq h$,
\begin{equation}
\label{eq:pig}
\nu_h (1 - \rho)
\leq \nu_g
\leq \nu_h + \rho(1 - \nu_h)
\leq \nu_h + \rho .
\end{equation}

\emph{Upper bound on $N_{\mathrm{eff}}$.} Keeping the two $S$
coordinates and using $(a+b)^2 \leq 2(a^2+b^2)$,
\[
\| \nu \|_2^2
\geq \nu_h^2 + \nu_g^2
\geq \tfrac12 (\nu_h + \nu_g)^2
= \tfrac12 \bigl( 1 - \nu(S^c) \bigr)^2
\geq \tfrac12 (1 - \rho)^2 ,
\]
using $\nu_h + \nu_g = 1 - \nu(S^c)$. Thus $N_{\mathrm{eff}}(\beta)
\leq 2(1-\rho)^{-2}$, the second claim of \eqref{eq:narrow}. At $\beta
\leq \underline g / \log(2(d_{\max}-1))$, $\rho \leq \frac12$ and
$N_{\mathrm{eff}}(\beta) \leq 8$.

\emph{Two-sided bound \eqref{eq:narrow-sandwich}.} Suppose $\rho \leq
\frac14$. We bound $\|\nu\|_2^2$ from above. For $\nu_h$, using
$\frac{1}{1-x} \leq 1 + \frac{x}{1-x}$ with $x = \rho/2$,
\[
\nu_h \leq \frac{1}{2 - \rho}
= \frac12 \cdot \frac{1}{1 - \rho/2}
\leq \frac12 \Bigl( 1 + \frac{\rho/2}{1 - \rho/2} \Bigr)
\leq \frac12 \Bigl( 1 + \tfrac{4}{7}\rho \Bigr)
\leq \frac12 + \frac13 \rho ,
\]
where $\frac{\rho/2}{1-\rho/2} \leq \frac{\rho/2}{7/8} = \frac47\rho$ at
$\rho \leq \frac14$, and $\frac{1}{2}\cdot\frac{4}{7} = \frac{2}{7} \leq
\frac13$. By \eqref{eq:pig}, $\nu_g \leq \nu_h + \rho \leq \frac12 +
\frac43\rho$. The mass on $S^c$ contributes at most $\sum_{k \in S^c}
\nu_k^2 \leq (\sum_{k \in S^c}\nu_k)^2 \leq \rho^2$. Summing,
\[
\| \nu \|_2^2
\leq \Bigl( \tfrac12 + \tfrac13\rho \Bigr)^2
+ \Bigl( \tfrac12 + \tfrac43\rho \Bigr)^2
+ \rho^2
= \tfrac12 + \tfrac{\rho}{3} + \tfrac{4\rho}{3}
+ \tfrac{\rho^2}{9} + \tfrac{16\rho^2}{9} + \rho^2
= \tfrac12 + \tfrac53\rho + \tfrac{26}{9}\rho^2 .
\]
At $\rho \leq \frac14$, $\frac{26}{9}\rho^2 \leq \frac{26}{9}\cdot
\frac14 \rho = \frac{26}{36}\rho \leq \frac56\rho$, so $\|\nu\|_2^2 \leq
\frac12 + \frac53\rho + \frac56\rho = \frac12 + \frac52\rho$.
Therefore
\[
N_{\mathrm{eff}}(\beta)
= \frac{1}{\|\nu\|_2^2}
\geq \frac{1}{\frac12 + \frac52\rho}
= \frac{2}{1 + 5\rho}
\geq 2(1 - 5\rho)
= 2 - 10\rho ,
\]
using $\frac{1}{1+u} \geq 1 - u$. For the upper bound, $(1-\rho)^{-2}
\leq 1 + 4\rho$ on $[0,\frac14]$ gives $N_{\mathrm{eff}}(\beta) \leq
2(1+4\rho) = 2 + 8\rho$. Combining,
\[
\bigl| N_{\mathrm{eff}}(\beta) - 2 \bigr|
\leq \max\{ 8\rho, 10\rho \}
= 10\rho
\leq 16\rho ,
\]
\begin{equation}
\label{eq:narrow-sandwich}
\bigl| N_{\mathrm{eff}}(\beta) - 2 \bigr| \leq 16\, \rho(\beta)
\qquad (\rho(\beta) \leq \tfrac14) ,
\end{equation}
which is \eqref{eq:narrow-sandwich}.

\emph{Limit.} As $\beta \to 0$, $\rho \to 0$, so
\eqref{eq:narrow-sandwich} gives $N_{\mathrm{eff}}(\beta) \to 2$ and
$\nu \to \frac12(e_h + e_g)$ by \eqref{eq:pig}.
\end{proof}

Both regimes are \emph{column} pathologies of $\mC$: the stationary
mass is the degree distribution $d/(\vone^\top d)$ in the wide limit
and concentrates on $\{h, g\}$ in the narrow limit. Row-stochasticity
constrains only the reader side; the equalized allocator removes the
remaining column freedom.
\section{Proof of \texorpdfstring{Theorem~\ref{thm:eq}}{Theorem 3} (Equalization)}
\label{app:eq-sec}

This appendix proves Theorem~\ref{thm:eq} and develops the
exposure-pricing protocol: the exact-optimality guarantee
(Appendix~\ref{app:eq}), then the Sinkhorn equivalence and the targeted
heteroskedastic variant (Appendix~\ref{app:pricing}).

\subsection{Exact optimality under equalization}
\label{app:eq}

The equalized allocator drives each $\mC(t)$ toward double
stochasticity. The guarantee below is stated for an \emph{arbitrary}
row-stochastic sequence in terms of its measured per-round column
defects, so it covers the endogenous, feedback-coupled operational
sequence with no exogeneity, stationarity, irreducibility, aperiodicity,
or mixing hypothesis. Recall the column defect of a row-stochastic
$\mC$,
\begin{equation}
\label{eq:coldef}
\delta_{\mathrm{col}}(\mC)
\coloneqq \frac{1}{n}\, \bigl\| \vone^\top \mC - \vone^\top \bigr\|_1
= \frac{1}{n} \sum_{j=1}^n \Bigl| \sum_{i=1}^n \mC_{ij} - 1 \Bigr| ,
\end{equation}
which is zero exactly when $\mC$ has unit column sums, i.e.\ is doubly
stochastic.

\eqthm*

\begin{proof}
At $\lambda = 0$, $M(t) = 0$ and Lemma~\ref{lem:traj} gives $q_T =
\frac1n\vone$, so both sides of \eqref{eq:eq-main} vanish. Assume
$\lambda \in (0,1)$ and write $u \coloneqq \frac1n\vone$.

\emph{Step 1 (normalized backward recursion).} With $P = \lambda \mI$,
$\Phi(T, r) = \lambda^{T-r}\, \mC(T-1) \cdots \mC(r)$. Define
\[
p_r^\top
\coloneqq \lambda^{-(T-r)}\, \tfrac1n \vone^\top \Phi(T, r)
\qquad (0 \leq r \leq T) ,
\]
so $p_T = u$ and, peeling off $\mC(r)$,
\[
p_r^\top
= \lambda^{-(T-r)} \tfrac1n \vone^\top \Phi(T, r+1)\, \lambda \mC(r)
= \Bigl[ \lambda^{-(T-r-1)} \tfrac1n \vone^\top \Phi(T, r+1) \Bigr]
\mC(r)
= p_{r+1}^\top \mC(r) .
\]
Let $\varrho_r^\top \coloneqq \frac1n (\vone^\top \mC(r) -
\vone^\top)$, so $\|\varrho_r\|_1 = \delta_r$ by \eqref{eq:coldef}.
Then
\[
p_r^\top - u^\top
= p_{r+1}^\top \mC(r) - u^\top
= (p_{r+1} - u)^\top \mC(r) + \bigl( u^\top \mC(r) - u^\top \bigr)
= (p_{r+1} - u)^\top \mC(r) + \varrho_r^\top ,
\]
using $u^\top \mC(r) - u^\top = \varrho_r^\top$. For any row-stochastic
$\mC$, the $\ell_1$ norm contracts:
\begin{equation}
\label{eq:contract}
\| v^\top \mC \|_1
= \sum_j \Bigl| \sum_i v_i \mC_{ij} \Bigr|
\leq \sum_i |v_i| \sum_j \mC_{ij}
= \sum_i |v_i|
= \| v \|_1 ,
\end{equation}
Applying \eqref{eq:contract} and the triangle inequality,
\[
\| p_r - u \|_1
\leq \| (p_{r+1} - u)^\top \mC(r) \|_1 + \| \varrho_r \|_1
\leq \| p_{r+1} - u \|_1 + \delta_r .
\]
Backward induction from $p_T = u$ (so $\|p_T - u\|_1 = 0$) yields
\begin{equation}
\label{eq:p-bound}
\| p_r - u \|_1
\leq \sum_{v=r}^{T-1} \delta_v
\leq (T - r)\, \bar\delta_{\mathrm{col}} .
\end{equation}

\emph{Step 2 (convex combination).} By Lemma~\ref{lem:traj} with $\mI
- P = (1 - \lambda)\mI$,
\[
q_T^\top
= \tfrac1n \vone^\top \Phi(T, 0)
+ (1 - \lambda) \sum_{r=0}^{T-1} \tfrac1n \vone^\top \Phi(T, r+1) .
\]
Substituting $\frac1n\vone^\top \Phi(T, r) = \lambda^{T-r} p_r^\top$,
\[
q_T^\top
= \lambda^T p_0^\top
+ (1 - \lambda) \sum_{r=0}^{T-1} \lambda^{T-r-1} p_{r+1}^\top .
\]
The scalar coefficients are nonnegative and sum to one:
\[
\lambda^T + (1 - \lambda) \sum_{r=0}^{T-1} \lambda^{T-r-1}
= \lambda^T + (1 - \lambda) \cdot \frac{1 - \lambda^T}{1 - \lambda}
= \lambda^T + (1 - \lambda^T) = 1 ,
\]
Since the coefficients sum to one, subtracting $u^\top$ termwise:
\[
q_T^\top - u^\top
= \lambda^T (p_0 - u)^\top
+ (1 - \lambda) \sum_{r=0}^{T-1} \lambda^{T-r-1} (p_{r+1} - u)^\top .
\]
Taking $\ell_1$ norms and applying \eqref{eq:p-bound},
\[
\| q_T - u \|_1
\leq \lambda^T\, T \bar\delta_{\mathrm{col}}
+ (1 - \lambda) \bar\delta_{\mathrm{col}}
\sum_{r=0}^{T-1} \lambda^{T-r-1}\, (T - r - 1) .
\]
Substituting $j = T - r - 1$, $\sum_{j=0}^{T-1} j\, \lambda^j \leq
\frac{\lambda}{(1-\lambda)^2}$, so
\begin{equation}
\label{eq:eq-main}
\| q_T - u \|_1
\leq \bar\delta_{\mathrm{col}}
\Bigl( T\lambda^T + (1 - \lambda) \cdot
\frac{\lambda}{(1-\lambda)^2} \Bigr)
= \bar\delta_{\mathrm{col}}
\Bigl( \frac{\lambda}{1 - \lambda} + T\lambda^T \Bigr) ,
\end{equation}
which is the main bound of \cref{thm:eq}.

\emph{(i)} If $\bar\delta_{\mathrm{col}} = 0$, \eqref{eq:eq-main} forces
$q_T = u = \frac1n\vone$ exactly. Then
Proposition~\ref{prop:variance} gives
$\mathrm{Var}(\hat\theta_T^{\mathrm{BR}}) = s^2\|u\|_2^2 = s^2 \cdot
\frac1n = s^2/n$ and $N_{\mathrm{eff}}(q_T) = \|u\|_2^{-2} = n$.

\emph{(ii)} By Corollary~\ref{cor:moments} ($\bar\delta =
\delta/(1-\lambda)$), $\bigl|\mathrm{sd}(\hat\theta_T) -
s\|q_T\|_2\bigr| \leq \frac{\delta}{1-\lambda}$. By
$\|\cdot\|_2 \leq \|\cdot\|_1$ and \eqref{eq:eq-main},
\[
\bigl| s\|q_T\|_2 - s\|u\|_2 \bigr|
\leq s\, \| q_T - u \|_2
\leq s\, \| q_T - u \|_1
\leq s\, \bar\delta_{\mathrm{col}}
\Bigl( \frac{\lambda}{1-\lambda} + T\lambda^T \Bigr) ,
\]
and $\|u\|_2 = 1/\sqrt n$. Combining,
\[
\Bigl| \mathrm{sd}(\hat\theta_T) - \frac{s}{\sqrt n} \Bigr|
\leq \bigl| \mathrm{sd}(\hat\theta_T) - s\|q_T\|_2 \bigr|
+ \bigl| s\|q_T\|_2 - s\|u\|_2 \bigr|
\leq \frac{\delta}{1-\lambda}
+ s\, \bar\delta_{\mathrm{col}}
\Bigl( \frac{\lambda}{1-\lambda} + T\lambda^T \Bigr) . \qedhere
\]
\end{proof}

\begin{remark}[Quantified herding impossibility]
\label{rem:noherd}
Write $\Delta = \bar\delta_{\mathrm{col}}(\frac{\lambda}{1-\lambda} +
T\lambda^T)$. By \eqref{eq:eq-main} and $\|\cdot\|_2 \leq \|\cdot\|_1$,
$\|q_T\|_2 \leq \|u\|_2 + \|q_T - u\|_2 \leq \frac{1}{\sqrt n} +
\Delta$, so
\[
N_{\mathrm{eff}}(q_T)
= \frac{1}{\|q_T\|_2^2}
\geq \frac{n}{(1 + \sqrt n\, \Delta)^2} .
\]
For example $\Delta \leq 1/\sqrt n$ already forces $N_{\mathrm{eff}}
\geq n/4$. Contrast Theorem~\ref{thm:beta}: in the narrow regime the
baseline collapses to $N_{\mathrm{eff}} \leq 8$ regardless of $n$,
whereas the equalized allocator cannot collapse at any attention width.
\end{remark}

\section{The Pricing Protocol}
\label{app:pricing}

This appendix derives two facts about the pricing protocol: one price
round equals one Sinkhorn iteration on the tempered scores (so it
converges to a doubly stochastic limit, invariant to social power), and
retargeting the clearing marginal extends exact optimality to the
heteroskedastic optimum. Throughout, $K_{ij} = s_{ij}^{1/\beta} = (\bar
W_{ij}\pi_j)^{1/\beta}$, $\mathrm{rn}(M) = \mathrm{diag}(M\vone)^{-1}
M$, $\mathrm{cn}(M) = M\, \mathrm{diag}(\vone^\top M)^{-1}$.

\subsection{One price round is one Sinkhorn iteration}
\label{app:sinkhorn}

Recall the two local rules. Each source $j$ holds a price $y_j > 0$;
the reader rule sets $\mC_{ij} = (K_{ij}/y_j) / \sum_k (K_{ik}/y_k)$,
i.e.\ $\mC = \mathrm{rn}(K \mathrm{diag}(y)^{-1})$, and the source rule
updates $y_j \leftarrow y_j\, a_j$ with $a_j = \sum_i \mC_{ij}$.

\begin{lemma}[Sinkhorn equivalence]
\label{lem:sinkhorn}
Let $\mC = \mathrm{rn}(K\mathrm{diag}(y)^{-1})$ be the current realized
influence and $a_j = \sum_i \mC_{ij}$ its column sums. After the source
update $y_j \leftarrow y_j a_j$, the next realized influence is
\[
\mC'
= \mathrm{rn}\bigl( K\, \mathrm{diag}(y \odot a)^{-1} \bigr)
= \mathrm{rn}\bigl( \mathrm{cn}(\mC) \bigr) .
\]
That is, one price round applies a column normalization followed by a
row normalization to $\mC$, which is one Sinkhorn iteration on $K$.
\end{lemma}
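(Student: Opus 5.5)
The plan is to unwind the definitions directly; the claim is essentially algebraic. Write $Y = \mathrm{diag}(y)$ and $R = \mathrm{diag}(K Y^{-1}\vone)$, so that the current operator is $\mC = R^{-1} K Y^{-1}$. The column sums of $\mC$ are $a = \mC^\top \vone$. We assume $a_j > 0$ for every $j$, so that all normalizations below are well defined.

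\textbf{Positivity of the column sums.} This follows from irreducibility or, more weakly, from every source having at least one reader with $K_{ij} > 0$. Since $\pi_j > 0$ by Lemma~\ref{lem:power}, we have $K_{ij} > 0$ exactly on the exposure support, so $a_j > 0$ whenever column $j$ of $\bar{\mW}$ is nonzero. I will state this as the standing nondegeneracy condition, and I expect it to be the only genuine subtlety.

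\textbf{Step 1: the new operator equals the row-normalized column-normalization.} After the update, the new prices are $y \odot a$, that is, $\mathrm{diag}(y \odot a) = Y A$ with $A = \mathrm{diag}(a)$. By definition the new operator is $\mC' = \mathrm{rn}(K (YA)^{-1}) = \mathrm{rn}(K Y^{-1} A^{-1})$. Next observe that $\mathrm{cn}(\mC) = \mC A^{-1} = R^{-1} K Y^{-1} A^{-1}$. The key fact is that row normalization is invariant under left multiplication by any positive diagonal matrix: if $\Lambda$ is diagonal and positive, then $\mathrm{rn}(\Lambda M) = \mathrm{rn}(M)$. This holds because $\mathrm{diag}(\Lambda M\vone)^{-1}\Lambda = \mathrm{diag}(M\vone)^{-1}$ entrywise. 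Applying it with $\Lambda = R^{-1}$ gives
\[
\mathrm{rn}(\mathrm{cn}(\mC)) = \mathrm{rn}(R^{-1} K Y^{-1} A^{-1}) = \mathrm{rn}(K Y^{-1} A^{-1}) = \mC',
\]
which establishes both displayed equalities.

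\textbf{Step 2: identification with one Sinkhorn step.} Sinkhorn scaling of $K$ keeps iterates of the form $\mathrm{diag}(x) K \mathrm{diag}(z)$ and alternates column and row normalization. Since $\mC$ already has this form, with $x = R^{-1}\vone$ and $z = 1/y$, and $\mC'$ is obtained by one column normalization followed by one row normalization, the price round is exactly one full Sinkhorn sweep on $K$. The only scaling variables are the prices $y$ on the column side; the reader rule recomputes the row scaling each time, which is the row half-step.
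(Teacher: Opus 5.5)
Your proof is correct and follows the paper's argument essentially line for line: you factor the updated price matrix as $\mathrm{diag}(y)^{-1}\mathrm{diag}(a)^{-1}$, recognize $K\mathrm{diag}(y)^{-1}$ as a positive left (row) rescaling of $\mC$, and use invariance of $\mathrm{rn}$ under left diagonal scaling to get $\mC' = \mathrm{rn}(\mC\,\mathrm{diag}(a)^{-1}) = \mathrm{rn}(\mathrm{cn}(\mC))$. Your explicit nondegeneracy condition $a_j > 0$ (every source has at least one exposed reader) is a useful addition that the paper leaves implicit in its requirement that prices stay positive.
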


\begin{proof}
The reader rule at $y' = y \odot a$ gives $\mC' =
\mathrm{rn}(K\,\mathrm{diag}(y\odot a)^{-1})$, the first equality. With
$\mathrm{diag}(y\odot a)^{-1} = \mathrm{diag}(a)^{-1}
\mathrm{diag}(y)^{-1}$,
\[
K\,\mathrm{diag}(y\odot a)^{-1}
= \bigl( K\,\mathrm{diag}(y)^{-1} \bigr)\, \mathrm{diag}(a)^{-1} .
\]
Since $\mathrm{rn}(D_{\mathrm{row}} M) = \mathrm{rn}(M)$ and
$K\mathrm{diag}(y)^{-1}$ is a positive row scaling of $\mC$,
\[
\mC'
= \mathrm{rn}\bigl( K\mathrm{diag}(y)^{-1} \mathrm{diag}(a)^{-1} \bigr)
= \mathrm{rn}\bigl( \mC\, \mathrm{diag}(a)^{-1} \bigr) .
\]
Finally $\mC\,\mathrm{diag}(a)^{-1} = \mathrm{cn}(\mC)$ since $a^\top
= \vone^\top \mC$, so $\mC' = \mathrm{rn}(\mathrm{cn}(\mC))$.
\end{proof}

\begin{lemma}[Convergence of inner iterations]
\label{lem:sinkhorn-conv}
Fix a round and hold $K = K(t)$ fixed during the price updates. If $K$
has total support, then iterating the price round of
Lemma~\ref{lem:sinkhorn} produces $\mC^{(\ell)} \to \mC^\star$, the
unique doubly stochastic matrix with $\mathrm{supp}\,\mC^\star =
\mathrm{supp}\,K$, and $\delta_{\mathrm{col}}(\mC^{(\ell)}) \to 0$
geometrically. Total support holds whenever $K_{ii} > 0$ for all $i$ and
$\mathrm{supp}\,K$ is symmetric.
\end{lemma}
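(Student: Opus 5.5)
By Lemma~\ref{lem:sinkhorn}, iterating the price round with $K$ held fixed is exactly the Sinkhorn--Knopp alternation of column and row normalizations started from $\mathrm{rn}(K\,\mathrm{diag}(y)^{-1})$. The convergence claim therefore reduces to the classical Sinkhorn--Knopp theorem \citep{sinkhorn1967diagonal}.

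The plan has three steps.

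\emph{Step 1: the limit exists and is unique.} If a nonnegative square $K$ has total support, there exist positive diagonal $D_1, D_2$ such that $D_1 K D_2$ is doubly stochastic. This matrix is unique, and its support equals $\mathrm{supp}\,K$, because total support means every positive entry of $K$ lies on a positive diagonal. The alternating normalizations converge to it from any positive diagonal initial scaling. Our starting point $K\,\mathrm{diag}(y)^{-1}$ is such a scaling, so $\mC^{(\ell)} \to \mC^\star$ with $\mathrm{supp}\,\mC^\star = \mathrm{supp}\,K$.

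\emph{Step 2: the column defect decays geometrically.} Since every $\mC^{(\ell)}$ is row-stochastic and $\mC^{(\ell)} \to \mC^\star$, continuity of $\delta_{\mathrm{col}}$ gives $\delta_{\mathrm{col}}(\mC^{(\ell)}) \to 0$. For the geometric rate I would invoke the linear convergence of Sinkhorn in Hilbert's projective metric. That result (Franklin--Lorenz) is stated for positive matrices. For a matrix with total support but zeros, I would instead cite the Soules or Knight asymptotic linear rate, which holds whenever the limit exists. The same conclusion follows from a local contraction argument around the fixed point $\mC^\star$. This step is the main obstacle: a uniform Hilbert-metric contraction fails when $K$ has zeros, so the rate is only asymptotically geometric, with a constant depending on $K$. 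I would state it in exactly that form. Finally, note $\delta_{\mathrm{col}}$ is at most $\frac1n\sum_j |a_j - 1|$, which is controlled by the distance of $\mC^{(\ell)}$ to $\mC^\star$.

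\emph{Step 3: the sufficient condition for total support.} Suppose $K_{ii} > 0$ for all $i$ and $\mathrm{supp}\,K$ is symmetric. Take any positive entry $K_{ij}$. If $i = j$, the identity permutation is a positive diagonal through it. If $i \neq j$, symmetry gives $K_{ji} > 0$, so the transposition $(i\,j)$, which fixes every other index and uses the positive diagonal entries there, is a permutation whose diagonal is entirely positive and passes through $(i,j)$. Hence every positive entry lies on a positive diagonal, which is exactly the definition of total support.
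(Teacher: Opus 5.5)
Your proposal is correct and follows the paper's route. Lemma~\ref{lem:sinkhorn} reduces the price rounds to Sinkhorn--Knopp on $K$, convergence of $\mC^{(\ell)}$ gives $\delta_{\mathrm{col}}(\mC^{(\ell)})\to 0$, and total support comes from the identity and transposition diagonals, which you spell out more explicitly than the paper's one-line sketch. Your caveat on the rate is a refinement: the Franklin--Lorenz Hilbert-metric contraction the paper cites is uniform only for positive $K$, so when $K$ has zeros the Soules/Knight linear rate is the right justification of ``geometrically'' (it holds under total support, not merely whenever the limit exists), and reading uniqueness as uniqueness among scalings $D_1KD_2$ is the correct interpretation of the statement.
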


\begin{proof}
Each price round is one Sinkhorn iteration $\mathrm{rn} \circ
\mathrm{cn}$ on $K$ (Lemma~\ref{lem:sinkhorn}). For $K$ with total
support, the alternating row/column normalization converges to the
unique doubly stochastic $\mathrm{diag}(u)\, K\, \mathrm{diag}(v)$, with
geometric rate in the Hilbert projective metric
\citep{franklin1989scaling}; convergence of the
iterates gives $\delta_{\mathrm{col}}(\mC^{(\ell)}) \to 0$. If $K_{ii} >
0$ for all $i$, every diagonal entry is positive, so each index lies on
a positive diagonal (a trivial support permutation); symmetry of
$\mathrm{supp}\,K$ then gives total support.
\end{proof}

\begin{remark}[Diagonal invariance of the cleared limit]
\label{rem:invariance}
One price round absorbs a right diagonal scaling of the tempered scores,
$\mathrm{cn}(K D_2) = \mathrm{cn}(K)$, but not a left scaling, since the
column sums it normalizes change under $K \mapsto D_1 K$. Both are
absorbed only in the fully balanced Sinkhorn limit $\mC^\star$, the
doubly stochastic projection of $K$ (Remark~\ref{rem:kl}), which is
invariant to $K \mapsto D_1 K D_2$. Two consequences. The social-power
factor $\pi$ enters as a right scaling, $K = (\bar{\mW} \odot
\vone\pi^\top)^{\circ 1/\beta}$, so it is removed after one round: the
cleared allocator caps each source's share regardless of its power,
attenuating the high-power columns of Theorem~\ref{thm:beta}. The row
normalization defining $\bar{\mW}$ is a left scaling and is removed only
in the limit; the one-round iterate retains it until
$\delta_{\mathrm{col}} \to 0$, which Theorem~\ref{thm:eq} tracks through
the measured defect.
\end{remark}

\begin{remark}[One Sinkhorn step does not clear exactly]
\label{rem:convergence}
A single price round (Lemma~\ref{lem:sinkhorn}) reduces the column
defect but need not make $\mC(t)$ doubly stochastic. Run to convergence
each round with $K(t)$ held fixed, under the total-support condition of
Lemma~\ref{lem:sinkhorn-conv} (e.g.\ self-exposure $A_{ii}(t) > 0$ and
symmetric support), $\delta_{\mathrm{col}}(\mC(t)) \to 0$ and
Theorem~\ref{thm:eq}(i) is exact. Run once per round as in
Algorithm~\ref{alg:snla-eq}, the realized matrices need not clear, and
Theorem~\ref{thm:eq} applies in its general form through the measured
defects $\delta_r$; the deviation of $q_T$ from uniform is then
controlled by the $\bar\delta_{\mathrm{col}}$ observed in the run, not
asserted a priori. Exact optimality is a property of the cleared limit;
the one-round protocol attains it up to $\bar\delta_{\mathrm{col}}$.
\end{remark}

\subsection{Targeted clearing and heteroskedastic optimality}
\label{app:target}

Uniform clearing drives $q_T$ to $\frac1n\vone$, optimal when signal
precisions are equal. When precisions differ, the best linear unbiased
weight is $q_i \propto s_i^{-2}$, and the protocol attains it by
clearing to a non-uniform target $\mu$ at no change to the two rules.

\begin{corollary}[Targeted clearing; heteroskedastic optimality]
\label{cor:target}
Fix a target $\mu \in \Delta_n$ with $\mu_j > 0$ for all $j$. Replace
the source rule's clearing target by $\mu$, i.e.\ $a_j \leftarrow
\frac{1}{\mu_j}\sum_i \mu_i \mC_{ij}$ with fixed point $\mu^\top \mC =
\mu^\top$, and the readout by $\hat\theta_T^{\mu} = \mu^\top b(T)$.
Define the weighted defects $\delta_r^\mu \coloneqq \|\mu^\top \mC(r) -
\mu^\top\|_1$ and $\bar\delta^\mu \coloneqq \max_{0 \leq r < T}
\delta_r^\mu$. Then the induced weight $q_T(\mu)^\top \coloneqq \mu^\top
G_T$ satisfies $q_T(\mu)^\top \vone = 1$ and
\[
\bigl\| q_T(\mu) - \mu \bigr\|_1
\leq \bar\delta^\mu
\Bigl( \frac{\lambda}{1-\lambda} + T\lambda^T \Bigr) ;
\]
at exact clearing, $q_T(\mu) = \mu$ for every $T \geq 0$ and every
$\lambda \in [0,1)$. Consequently, under Assumption~\ref{ass:indep}
with heteroskedastic variances $s_1^2, \ldots, s_n^2$, the choice
$\mu_i \propto s_i^{-2}$ attains
\[
\mathrm{Var}\bigl( \mu^\top b^{\mathrm{BR}}(T) \bigr)
= \mu^\top S\, \mu
= \Bigl( \sum_{i=1}^n s_i^{-2} \Bigr)^{-1} ,
\]
the best-linear-unbiased optimum, exactly. Uniform clearing $\mu =
\frac1n\vone$ (Theorem~\ref{thm:eq}) is the special case for unknown or
exchangeable precisions.
\end{corollary}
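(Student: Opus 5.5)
The plan is to mirror the proof of Theorem~\ref{thm:eq}, replacing the uniform vector $u = \frac1n\vone$ by the target $\mu$ throughout. Mass conservation comes first and is immediate: Lemma~\ref{lem:traj} gives $G_T\vone = \vone$ for every row-stochastic sequence. Hence $q_T(\mu)^\top\vone = \mu^\top G_T\vone = \mu^\top\vone = 1$. The case $\lambda = 0$ is trivial, since then $G_T = \mI$ and $q_T(\mu) = \mu$. So I will assume $\lambda\in(0,1)$ with uniform anchoring $P = \lambda\mI$, which gives $\Phi(T,r) = \lambda^{T-r}\mC(T-1)\cdots\mC(r)$.

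\emph{Step 1 (backward recursion).} Define $p_r^\top \coloneqq \lambda^{-(T-r)}\mu^\top\Phi(T,r)$. Then $p_T = \mu$ and $p_r^\top = p_{r+1}^\top\mC(r)$. Write $\varrho_r^\top \coloneqq \mu^\top\mC(r)-\mu^\top$, so that $\|\varrho_r\|_1 = \delta_r^\mu$. This gives
\[
p_r^\top-\mu^\top = (p_{r+1}-\mu)^\top\mC(r) + \varrho_r^\top .
\]
The $\ell_1$ contraction \eqref{eq:contract} then yields $\|p_r-\mu\|_1\le \|p_{r+1}-\mu\|_1+\delta_r^\mu$. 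Backward induction gives $\|p_r-\mu\|_1\le (T-r)\bar\delta^\mu$.

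\emph{Step 2 (convex combination).} Expanding $G_T$ from Lemma~\ref{lem:traj} gives
\[
q_T(\mu)^\top = \lambda^Tp_0^\top + (1-\lambda)\sum_{r=0}^{T-1}\lambda^{T-r-1}p_{r+1}^\top .
\]
The coefficients are nonnegative and sum to one, so I can subtract $\mu^\top$ termwise. Bounding each term by Step 1 and using $\sum_j j\lambda^j\le \lambda/(1-\lambda)^2$, exactly as in \eqref{eq:eq-main}, produces the stated bound $\bar\delta^\mu(\frac{\lambda}{1-\lambda}+T\lambda^T)$. At exact clearing, $\bar\delta^\mu = 0$, so the bound forces $q_T(\mu) = \mu$ for every $T$ and every $\lambda\in[0,1)$.

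\emph{Step 3 (variance).} The readout is $\mu^\top b^{\mathrm{BR}}(T) = \mu^\top G_T b^0 = q_T(\mu)^\top b^0$. By the argument of Proposition~\ref{prop:variance}, using Assumption~\ref{ass:indep} and the mass conservation above, this estimator is unbiased with variance $q_T(\mu)^\top S\,q_T(\mu)$. At exact clearing this equals $\mu^\top S\mu$. Take $\mu_i = s_i^{-2}/\sum_k s_k^{-2}$. Then
\[
\mu^\top S\mu = \frac{\sum_i s_i^{-4}s_i^2}{\bigl(\sum_k s_k^{-2}\bigr)^2} = \Bigl(\sum_i s_i^{-2}\Bigr)^{-1}.
\]
This is the best-linear-unbiased optimum: by Cauchy--Schwarz, any $w$ with $w^\top\vone = 1$ satisfies $1 = (\sum_i w_i s_i\cdot s_i^{-1})^2\le w^\top S w\sum_i s_i^{-2}$.

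The only step needing care is Step 1. The identity $p_r^\top = p_{r+1}^\top\mC(r)$ must be checked to hold with left vector $\mu^\top$ rather than $\frac1n\vone^\top$. It does, because $\mu$ enters only at the terminal condition $p_T = \mu$, and the defect term is defined with $\mu$. Beyond that, the proof is a verbatim transcription of Theorem~\ref{thm:eq}. The fixed point of the modified source rule, $\mu^\top\mC = \mu^\top$, is exactly the condition $\delta_r^\mu = 0$, which is what connects the protocol to the bound.
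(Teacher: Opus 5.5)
Your proposal is correct and follows the paper's proof essentially step for step. It uses mass conservation from $G_T\vone=\vone$, the normalized backward recursion $p_r^\top=\lambda^{-(T-r)}\mu^\top\Phi(T,r)$ with $u$ replaced by $\mu$ in Steps 1--2 of Theorem~\ref{thm:eq}, and the explicit evaluation of $\mu^\top S\mu$ together with Cauchy--Schwarz for optimality. You are also slightly more careful than the paper: you note that the variance identity comes from the \emph{argument} of Proposition~\ref{prop:variance}, which is stated only for the uniform readout, rather than citing the proposition itself.
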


\begin{proof}
\emph{Mass and variance.} By Lemma~\ref{lem:traj}, $q_T(\mu)^\top
\vone = \mu^\top G_T \vone = 1$, and Proposition~\ref{prop:variance}
gives $\mathrm{Var}(\mu^\top b^{\mathrm{BR}}(T)) = q_T(\mu)^\top S\,
q_T(\mu)$.

\emph{Bound.} The case $\lambda = 0$ is immediate as in
Theorem~\ref{thm:eq}. For $\lambda \in (0,1)$ set $p_r^\top \coloneqq
\lambda^{-(T-r)} \mu^\top \Phi(T, r)$, so $p_T = \mu$ and $p_r^\top =
p_{r+1}^\top \mC(r)$ as before. Steps 1 and 2 of the proof of
Theorem~\ref{thm:eq} apply with $u$ replaced by $\mu$ throughout: they
use only the $\ell_1$ contraction \eqref{eq:contract} and the per-round
residual $\mu^\top \mC(r) - \mu^\top$, whose $\ell_1$ norm is
$\delta_r^\mu$ by definition. This yields the stated bound, and at
exact clearing ($\bar\delta^\mu = 0$) it forces $q_T(\mu) = \mu$.

\emph{Optimality.} At exact clearing, with $\mu_i = s_i^{-2}/\sum_k
s_k^{-2}$,
\[
\sum_i \mu_i^2 s_i^2
= \frac{\sum_i s_i^{-4} s_i^2}{(\sum_k s_k^{-2})^2}
= \frac{\sum_i s_i^{-2}}{(\sum_k s_k^{-2})^2}
= \Bigl( \sum_i s_i^{-2} \Bigr)^{-1} .
\]
This is the minimum of $q^\top S q$ over $q^\top \vone = 1$: by
Cauchy--Schwarz on $(q_i s_i)_i, (s_i^{-1})_i$,
\[
1 = \Bigl( \sum_i q_i \Bigr)^2
= \Bigl( \sum_i (q_i s_i)\, s_i^{-1} \Bigr)^2
\leq \Bigl( \sum_i q_i^2 s_i^2 \Bigr) \Bigl( \sum_i s_i^{-2} \Bigr) ,
\]
so $q^\top S q \geq (\sum_i s_i^{-2})^{-1}$, equality iff $q_i \propto
s_i^{-2}$.
\end{proof}

\begin{remark}[Why pricing rather than quotas]
\label{rem:kl}
The fully cleared limit of the protocol is an information projection:
among all matrices with the given support and unit row and column sums,
it is the unique minimizer of $\mathrm{KL}(\mC \,\|\, K)$ with $K =
(s_{ij}^{1/\beta})$ \citep{csiszar1975divergence}. Equalization thus
deviates from the relevance-driven allocation as little as possible
subject to clearing. Hard per-source quotas or random subsampling also
clear the market, but they are cruder projections that discard the
score information; pricing is the minimal-distortion mechanism in the
entropic geometry.
\end{remark}
\section{Experimental details}
\label{app:exp-details}

This appendix documents the full experimental configuration and reports the complete result set
underlying the three figures in the main text. All quantities are read directly from the committed
run artifacts; where a value is a design choice we give the governing source file.

\subsection{Models and inference}
\label{app:models}

All agents are instruction-tuned LLMs served locally with vLLM~$0.23.0$. We use the
\textbf{Qwen2.5-Instruct} family at $0.5$/$1.5$/$3$/$7$/$14$B and, for the cross-family generality
check (App.~\ref{app:llama}), \textbf{Llama-3.1-8B-Instruct} and \textbf{Llama-3.2-1B/3B-Instruct}.
The heterogeneous ``society'' pairs a \emph{strong} \texttt{Qwen2.5-7B-Instruct} with a \emph{weak}
\texttt{Qwen2.5-1.5B-Instruct}; the controlled testbed (App.~\ref{app:testbed}) uses a single
\texttt{Qwen2.5-3B-Instruct} by default, swept over the scale ladder for App.~\ref{app:ablations}.

\textbf{Decoding.} Greedy ($\text{temperature}{=}0$, $\text{top-}p{=}1.0$, no repetition penalty, no
per-request sampling seed), so per-prompt generation is deterministic; the two-regime is not an
artifact of greedy decoding (Fig.~\ref{fig:app-abl}c contrasts greedy vs.\ $\tau{=}0.7$). Maximum new
tokens are $320$ for the discussion benchmarks and $96$ for the estimation testbed; the context window
is $4096$ tokens by default and raised to $16384$/$32768$ for the large-$n$ discussion runs whose
transcripts overflow $4096$. The two cooperative-boundary environments that require a norm-setting
free-text turn (GovSim) or match the source protocol (Debate) use $\text{temperature}{=}0.7$.

\textbf{Operator arms.} Each benchmark is run under three realizations of the influence operator $C$:
the \emph{baseline} tempered allocator $\Gamma_\beta$, the \emph{equalized} allocator
$\Gamma^{\mathrm{EQ}}$, and (for the herding environments) an \emph{adversary-off} causal control.
The allocator definitions and their parameters are given next.

\subsection{The influence operator and its parameters}
\label{app:operator}

\begin{itemize}
  \item \textbf{Context-width tempering $\Gamma_\beta$ (baseline).} Row-wise,
  $[\Gamma_\beta(s)]_j = s_j^{1/\beta}/\sum_k s_k^{1/\beta}$: $\beta\!\to\!0$ is a hard argmax (herd
  onto the single most-exposed source), $\beta{=}1$ is proportional, $\beta\!\to\!\infty$ is uniform.
  Realized influence is $C(\beta)=\Gamma_\beta(\bar W\odot\pi)$.
  \item \textbf{Equalized allocator $\Gamma^{\mathrm{EQ}}$.} $k$ Sinkhorn iterations
  (column-normalize under the stationary weights $\mu$, then row-normalize) that drive the exposure
  matrix toward doubly stochastic, removing the exposure-column concentration a dominant source
  exploits. Default $k{=}4$; the $k$-robustness sweep is in App.~\ref{app:ablations}.
  \item \textbf{Social power $\zeta$ (herding gate).} A PageRank-style centrality mixing
  $\pi^{\!\top}=\frac{1-\zeta}{n}\mathbf{1}^{\!\top}(I-\zeta\bar W)^{-1}$, $\zeta\!\in\![0,1)$:
  $\zeta{=}0$ gives uniform influence $1/n$; $\zeta\!\uparrow\!1$ concentrates influence on central
  sources. Default $\zeta{=}0.6$ in the discussion benchmarks and $\zeta{=}0.7$ in the testbed.
  \item \textbf{Follower self-reliance $\lambda$.} A Friedkin--Johnsen anchoring $P{=}\mathrm{diag}(\lambda)$,
  $\lambda_i\!\in\![0,1]$: $\lambda{=}1$ is DeGroot updating, $\lambda{<}1$ anchors an agent to its own
  prior. The testbed uses $\lambda{=}0.8$ for followers with the pinned leader forced to $1.0$; the
  discussion harness uses $\lambda{=}1.0$ (DeGroot) unless the optional anchored-prompt flag is set.
  \item \textbf{Background exposure.} In the centralized (hub) topology every agent reads the hub at
  weight $1.0$ and every other peer at a floor weight $0.15$, which keeps the exposure support full;
  a coverage gate keeps the visible peers covering $90\%$ of each row's influence mass.
\end{itemize}

\subsection{How the context width $\beta$ is realized}
\label{app:beta}

$\beta$ is \emph{not} a parameter of the language model: the backbone is a fixed, greedily-decoded
instruction model that never sees $\beta$. It is a knob of the simulation harness that controls, for
each agent and round, \emph{which peers' messages enter that agent's prompt and with what displayed
weight}. Each round the harness forms the influence row $C_{i:}(\beta)$ (Sec.~\ref{app:operator}) and
builds agent $i$'s context by listing peers in descending $C_{ij}$ order until their cumulative
normalized influence mass reaches the coverage threshold ($0.9$); peers below the threshold are
\emph{withheld entirely}. This makes $\beta$ a genuine information bottleneck, not a cosmetic weight:
\begin{itemize}
  \item \textbf{Narrow $\beta$.} $C_{i:}$ concentrates on the single most-exposed source (the hub), so
  the threshold is met by that source alone and every other agent's private facts never appear in $i$'s
  prompt. The agent cannot pool what it never sees and defers to the dominant, confident-wrong source:
  herding.
  \item \textbf{Wide $\beta$.} $C_{i:}$ spreads out, so covering $90\%$ of the mass requires nearly all
  peers; the prompt then carries the distributed private facts and the agent pools them: wisdom.
\end{itemize}
Each shown peer is also labelled with its weight in the prompt (e.g.\ ``\texttt{[weight 0.31] Agent
7}''), but the load-bearing effect is the hard inclusion/exclusion above. Because $\beta$ acts entirely
through prompt construction --- never through model weights, fine-tuning, or decoding temperature --- the
mechanism is backbone-agnostic, which is why the narrow-$\beta$ collapse and its equalizer fix replicate
on a second model family (App.~\ref{app:llama}). The equalized allocator $\Gamma^{\mathrm{EQ}}$ changes
\emph{which} column $C$ concentrates on (it removes the concentration), not the gating rule; the online
priced protocol reaches the same $C$ by updating per-source prices once per round. The exact prompt
templates, including the gated peer block, are in App.~\ref{app:prompts}.

\subsection{Environments}
\label{app:envs}

\paragraph{HiddenBench (hidden-profile reasoning).} The $65$-task benchmark of
\citet{li2025hiddenbench} (bit-identical to the public \texttt{HiddenBench} release, built on the
Stasser--Titus hidden-profile paradigm). Each task distributes unshared clues round-robin across the
$n$ agents so that no agent can answer alone and only pooling recovers the correct option. The
\emph{injected adversary} pins the hub agent to a fixed wrong option ($(\text{answer}\bmod K)+1$) and
makes it structurally stubborn (the hub reads no one and never updates), so it broadcasts a confident
wrong answer at every round; the \emph{adversary-off} control removes the pin. Collective accuracy is
the answer at the first round all $n$ agents agree on a valid option (else the final-round majority);
the pre-discussion baseline is the round-$0$ majority over the distributed beliefs. Large-$n$ runs use
$n\!\in\!\{8,16,24\}$, $T{=}5$ discussion rounds, $12$ tasks/seed, $5$ seeds, and a $9$-point $\beta$
grid ($n{=}24$) or $5$-point grid.

\paragraph{Werewolf (social deduction).} Standard one-werewolf games compiled into the HiddenBench
format: each of the $N{-}1$ innocents holds one truthful alibi clue whose pooling uniquely identifies
the werewolf. The werewolf plays a naturally deceptive role --- it broadcasts a confident false
accusation --- and we seat it at the influence hub, realized through the same hub-pin mechanism as the
HiddenBench adversary, so its deflection dominates each reader's context at narrow $\beta$. Game files
exist for $N\!\in\!\{6,8,10,12,14,16\}$ ($100$ games each; $20$ sampled per run), $T{=}4$, $5$ seeds, a
$5$-point $\beta$ grid; the collective is scored correct when the village converges on the werewolf.
Both seats use the same model, so Werewolf isolates $\beta\times$adversary$\times$allocator, not
placement.

\paragraph{AgentsNet (distributed graph colouring).} The coordination benchmark of
\citet{grotschla2025agentsnet}. Each agent is a graph node choosing a colour in $1..\Delta{+}1$
different from all neighbours; a valid colouring always exists, so residual \emph{colouring
conflicts} (edges whose endpoints share a colour, on the final beliefs, lower is better) measure
coordination failure. Graphs are Barab\'asi--Albert (BA, attachment $m{=}2$; the default), plus
Watts--Strogatz (degree $\approx 4$, rewire $p{=}0.3$) and Delaunay for the topology ablation. The
society seats $2$ strong ($7$B) agents either on the highest-degree nodes (\emph{central}) or the
lowest-degree nodes (\emph{peripheral}), with everyone else weak ($1.5$B); placement is chosen by node
degree. Runs use $n\!\in\!\{8,16,24,32,50\}$, $T{=}4$, $30$ graphs, $5$ seeds, a $5$-point $\beta$ grid.

\paragraph{Boundary environments.} \emph{Debate}~\citep{du2024improving} (network variant): agents on a
BA graph answer a hard multiple-choice question then revise after reading peers; collective accuracy
is the majority of final answers ($n{=}8$, $T{=}3$, $40$ tasks, $5$ seeds, a disclosed hub adversary).
\emph{GovSim}~\citep{piatti2024cooperate}: $n{=}5$ fishers share a lake of capacity $20n$ over up to $12$
months; we report survival months and the sustainability rate (fraction of runs that do not collapse
below the threshold). \emph{MARBLE}~\citep{zhu2025multiagentbench}: a MultiAgentBench coding-collaboration team
whose relationship graph is set to the thresholded $C(\beta)$ attention structure, scored by total
milestone completion, with the strong agent seated at the hub vs.\ a leaf.

\subsection{Seeds, protocol, and statistics}
\label{app:seeds}

Every run reports the mean over independent seeds, each seed re-drawing task instances, fact
distribution, and graph/game structure. Seeds are explicit and offset from a per-experiment base:
HiddenBench/Werewolf use base $4200$ with seeds $\{4200,5200,6200,7200,8200\}$ ($5$ seeds); AgentsNet
uses base $5500$ ($5$ seeds); the estimation testbed uses base $51000$ ($8$ seeds); the $N_{\mathrm{eff}}$
suite uses base $53000$ ($24$ seeds, with $1000$ bootstrap resamples over seeds to check the stability
of the reported $N_{\mathrm{eff}}$ estimates); Debate uses base $7300$ ($5$ seeds). Unless stated,
cells are $5$ seeds. The two-regime ``gap'' is wide-$\beta$ minus narrow-$\beta$ collective accuracy
(discussion benchmarks) or the analogous narrow$\to$wide change in error (testbed).

\subsection{Hardware, software, and reproducibility}
\label{app:hardware}

All LLM inference runs on a single node with $8\times$ \textbf{NVIDIA RTX PRO 6000} (Blackwell,
\texttt{sm\_120}, $96$\,GB each; driver $580.159.04$, CUDA $13.0$). We use \textbf{vLLM $0.23.0$} under
\textbf{Python $3.12$}, \textbf{PyTorch $2.11.0$}, \textbf{Transformers $5.12.1$}, and NumPy $2.3.5$,
in \texttt{bfloat16} with tensor-parallel size $1$ (one model replica per GPU). The in-process vLLM
engine is used throughout, except MARBLE and the corrected GovSim runs, which use a persistent
OpenAI-compatible vLLM server. Randomness flows through the explicit seeds above; greedy decoding is
used for reproducibility, with the caveat that greedy generation is batch-composition sensitive in
\texttt{bfloat16}, so bit-identical per-prompt decoding requires batch size $1$. (The committed
CPU-only \texttt{environment.yml} provisions the analysis/plotting environment only; the GPU inference
stack above is installed separately and its exact versions are recorded here from the live
environment, as no GPU lockfile is committed.)

\subsection{Prompt templates}
\label{app:prompts}

Every agent receives a \emph{system} message (its role and private information) and a per-round
\emph{user} message (the task instruction plus the $\beta$-gated peer block of Sec.~\ref{app:beta});
every reply ends with a parseable \texttt{BELIEF: <integer>} line, from which we read the agent's
choice. We reproduce the exact templates verbatim below; \texttt{\{...\}} marks a per-task field filled
from the benchmark item. Boxes are colour-coded throughout this section: \textbf{grey} is a
harness-constructed template, \textbf{blue} is the exact text a model actually saw, \textbf{green} is
a model's real, unedited output.

\begin{tmplbox}{HiddenBench --- system message (template)}
\begin{verbatim}
{scenario description}

Facts known to EVERYONE:
- {shared fact 1}
- ...
Facts known ONLY to you:
- {this agent's private fact(s)}

Options:
1) {option 1}
2) {option 2}
...

First, in 1-2 sentences, state the specific fact(s) YOU were given that others
may not have. Then weigh all available facts and choose. End with exactly one
line:
BELIEF: <integer>
where <integer> is the option number of your current best decision (digits only).
\end{verbatim}
\end{tmplbox}

\begin{tmplbox}{HiddenBench --- round-0 user message (template, no peers yet)}
\begin{verbatim}
Choose the single best option by its number. Options: 1) ...; 2) ...; ...

Using only the information you were given, share your unique fact(s) and state
your current best answer.
\end{verbatim}
\end{tmplbox}

\begin{tmplbox}{HiddenBench --- round $t{\ge}1$ user message (template; the peer block is
$\beta$-gated, Sec.~\ref{app:beta})}
\begin{verbatim}
Choose the single best option by its number. Options: 1) ...; 2) ...; ...

Messages from the agents you are listening to (with their weight in your context):
--- [weight 0.71] Agent 0 ---
{Agent 0's most recent message}
--- [weight 0.12] Agent 3 ---
{Agent 3's most recent message}

Pool everyone's facts -- especially any you did not originally have -- and
reconsider. If a peer revealed a decisive fact, update accordingly. Give your
updated answer.
\end{verbatim}
\end{tmplbox}

\begin{tmplbox}{Anchoring suffix (appended to the system message in the anchored runs that make
the bridge bound non-vacuous, $\lambda_{\max}<1$; App.~\ref{app:bridge})}
\begin{verbatim}
IMPORTANT: weigh your OWN reading of the facts substantially. Do not simply adopt
the group's majority; keep significant weight on your own current choice and
update only partially toward what others report.
\end{verbatim}
\end{tmplbox}

The system message carries the node id, its neighbour ids, the palette size $K$, and the goal; the
round-$t$ user message shows only the neighbours $C(\beta)$ attends to (non-neighbours have zero
exposure):

\begin{tmplbox}{AgentsNet --- system + user message (template)}
\begin{verbatim}
GRAPH COLOURING. Your neighbours are: {neighbour ids}. There are {K} colours,
numbered 1 to {K}. GOAL: choose a colour DIFFERENT from every one of your
neighbours.

Reason briefly, then end with exactly one line:
BELIEF: <integer>
where <integer> is the colour number you choose (digits only).

    [user, round >= 1]
Colours currently chosen by the neighbours you are attending to
(with their weight in your context):
  - Agent 3: colour 2   [weight 0.44]
  - Agent 7: undecided  [weight 0.19]

Choose a colour that DIFFERS from every neighbour shown above. If they already
occupy several colours, pick the smallest colour none of them uses. Give your
updated colour.
\end{verbatim}
\end{tmplbox}

\paragraph{Worked example on a real item.} To make the templates concrete, the box below is the
verbatim system prompt the harness produced for one agent on the committed HiddenBench item ``Safe
Haven After the Spill'' (item $34$ of the $65$) --- a textbook hidden profile in which the \emph{shared}
facts favour Maple Lodge (the listed safe site) but the correct answer is Cedar Station, recoverable
only by pooling the three \emph{unshared} facts (Pine Retreat lies in the drifting cloud; the road to
Maple Lodge is washed out; Cedar Station alone has working communications). The four agents are each
dealt one clue --- Agent~0 (the hub) the drifting cloud, Agent~1 the washed-out road, Agent~2 the
Cedar-Station radio --- except a fourth, Agent~3, that is dealt none. The box below is the verbatim
system prompt for \emph{Agent~1}, holder of the decisive ``washed-out road'' clue:

\begin{promptbox}{Worked example --- real system prompt (item 34, ``Safe Haven After the Spill'')}
\begin{verbatim}
You are part of a university field team conducting research in a rural
region. Suddenly, a chemical spill at a manufacturing plant several miles
away has triggered a region-wide alert. Everyone on your team must decide
together where to relocate for safety and communication with authorities.
You have very little time. There are three potential locations:
- Maple Lodge (a youth camp to the east)
- Pine Retreat (a cabin complex to the north)
- Cedar Station (an old ranger station to the south)
All locations are usually reachable in under an hour. Your goal: choose the
only safe and reachable location, maximizing your team's safety and your
ability to coordinate with emergency services.

Facts known to EVERYONE:
- Maple Lodge is listed on the evacuation plan as a safe site for
  chemical-related incidents.
- Pine Retreat is farther from the spill than where you currently are.
- Cedar Station is an older building, less commonly used, but the route to
  it is familiar to everyone on the team.
- Maps indicate that all sites should have food and water supplies for at
  least 24 hours.

Facts known ONLY to you:
- Emergency traffic updates say the only road to Maple Lodge has been washed
  out by heavy rains this morning, making it inaccessible by car or foot.

Options:
1) Maple Lodge
2) Pine Retreat
3) Cedar Station

First, in 1-2 sentences, state the specific fact(s) YOU were given that
others may not have. Then weigh all available facts and choose. End with
exactly one line:
BELIEF: <integer>
where <integer> is the option number of your current best decision (digits
only).
\end{verbatim}
\end{promptbox}

Agent~3's system prompt on this item is identical --- same scenario, same shared facts, same options,
same answer contract --- except for its private-facts block, which reads:

\begin{promptbox}{Worked example --- Agent~3's private-facts block (item 34), verbatim, all else identical to the box above}
\begin{verbatim}
Facts known ONLY to you:
(no extra facts beyond the shared ones)
\end{verbatim}
\end{promptbox}

On this item the two regimes play out verbatim on the same Qwen2.5-7B backbone: Agent~3's answer
depends entirely on which peers $\beta$ lets it read. At narrow $\beta{=}0.1$ the $0.9$ coverage
threshold is met by the hub (Agent~0) alone, so Agent~3 sees only the hub's message --- which,
leaning on the shared ``Maple Lodge is the listed safe site'', argues merely that Pine Retreat is
dangerous --- and it herds onto the wrong shared favourite:

\begin{promptbox}{Peer block shown at narrow $\beta{=}0.1$}
\begin{verbatim}
Messages from the agents you are listening to (with their weight in your context):
--- [weight 1.00] Agent 0 ---
The chemical cloud is drifting north, making Pine Retreat dangerous.  BELIEF: 1
\end{verbatim}
\end{promptbox}

\begin{outputbox}{Follower's real reply at narrow $\beta{=}0.1$}
\begin{verbatim}
Based on the new information that the chemical cloud is drifting north, Pine
Retreat is now a dangerous location.
BELIEF: 1                                     <-- Maple Lodge  (WRONG)
\end{verbatim}
\end{outputbox}

At wide $\beta{=}10$ the same follower's prompt clears nearly all peers, so it now also reads the two
decisive unshared clues, pools them, and flips to the correct answer:

\begin{promptbox}{Peer block shown at wide $\beta{=}10$}
\begin{verbatim}
Messages from the agents you are listening to (with their weight in your context):
--- [weight 0.42] Agent 0 --- ...Pine Retreat dangerous.               BELIEF: 1
--- [weight 0.29] Agent 1 --- ...road to Maple Lodge washed out...     BELIEF: 2
--- [weight 0.29] Agent 2 --- ...Cedar Station has ... radio.          BELIEF: 3
\end{verbatim}
\end{promptbox}

\begin{outputbox}{Follower's real reply at wide $\beta{=}10$}
\begin{verbatim}
Based on the updated information:
- Maple Lodge is listed as a safe site but the road is washed out, making it
  inaccessible.
- Pine Retreat ... the chemical cloud is drifting north, making it dangerous.
- Cedar Station has a functional landline, backup generator, and emergency
  broadcast radio ...
BELIEF: 3                                     <-- Cedar Station  (CORRECT)
\end{verbatim}
\end{outputbox}

Nothing changed but $\beta$ --- same item, same follower, same greedily-decoded backbone --- yet the
context bottleneck alone turns a herded wrong answer into the pooled correct one. This single item is
drawn from the full $65$-task benchmark for illustration; averaged over the sampled HiddenBench runs
the same swing is the $0.09\pm0.08$ narrow-$\beta$ floor versus the $0.73\pm0.07$ wide-$\beta$
accuracy of Table~\ref{tab:app-hb-var}.

\section{Full experimental results}
\label{app:exp-results}

All tables in this section are computed from the committed \texttt{data.json}/\texttt{cells.json}
artifacts. Error is the estimation testbed's collective error (lower is better); collective accuracy
and colouring conflicts are as defined above.

\subsection{Controlled estimation testbed: two regimes across model scale}
\label{app:testbed}

The testbed is a measurement instrument (App.~\ref{app:operator}: it exposes $\delta$, $\lambda$, and
$N_{\mathrm{eff}}$, which no cooperative benchmark reports), not a collective-intelligence claim in its
own right. A wrong ``leader'' is pinned at the influence centre; the collective error is high at
narrow $\beta$ (the followers herd onto the leader) and low at wide $\beta$ (they pool the distributed
noisy signals). Table~\ref{tab:app-scale} sweeps the crowd model over the capability ladder
($n{=}12$, $8$ seeds); the two-regime is present at every scale, and equalization reduces the
narrow-$\beta$ error the more capable the crowd (the $1.5$B row is the unstable/oscillating point noted
in the diagnostics, not a failure of the effect).

\begin{table}[!htbp]\centering\small
  \caption{Two-regime by model scale in the controlled testbed (collective error, lower is better;
  $n{=}12$, $8$ seeds). Baseline $\Gamma_\beta$ vs.\ equalized $\Gamma^{\mathrm{EQ}}$ at narrow
  ($\beta{=}0.1$) and wide ($\beta{=}10$) context width.}
  \label{tab:app-scale}
  \begin{adjustbox}{max width=\linewidth}
  \begin{tabular}{lcccc}\toprule
  crowd model & base narrow & base wide & eq.\ narrow & eq.\ wide \\\midrule
  Qwen2.5-0.5B & $57.6\pm1.3$ & $8.7\pm5.1$  & $45.9\pm22.2$ & $37.2\pm23.5$ \\
  Qwen2.5-1.5B & $14.3\pm3.5$ & $8.0\pm4.1$  & $7.7\pm4.6$  & $9.4\pm5.4$  \\
  Qwen2.5-3B   & $41.5\pm4.7$ & $5.4\pm4.1$  & $15.5\pm23.6$ & $5.4\pm2.7$  \\
  Qwen2.5-7B   & $46.5\pm3.7$ & $12.5\pm8.9$ & $15.2\pm21.0$ & $12.1\pm16.9$ \\
  \bottomrule
  \end{tabular}
\end{adjustbox}
\end{table}

\subsection{Mechanism ablations}
\label{app:ablations}

Table~\ref{tab:app-ablations} is the controlled-testbed ablation grid (Qwen2.5-3B, $n{=}12$,
$8$ seeds) over social power, self-reliance, Sinkhorn iterations, and five structural variants of the
interaction model, read from the per-setting run data. The central result is that the narrow-$\beta$ herding
error is \emph{gated} by social power: it collapses to the healthy floor at $\zeta{=}0$ ($5.5$, no
concentration, no herding) and jumps to $\approx 42$ for any $\zeta{>}0$. Equalization lowers the
narrow-$\beta$ error under every setting; a single Sinkhorn iteration ($k{=}1$) already clears the
dominant column, so the effect is $k$-robust. At wide $\beta$ there is no dominant column left to
clear, so equalization has little upside and can slightly distort an already-balanced allocation
(e.g.\ the $k{=}4$ eq.-wide cell, and the smallest crowd in Table~\ref{tab:app-scale}); the
mechanism's value is at narrow $\beta$, where a cheap price step recovers most of the wide-$\beta$
wisdom without paying for wide context. The herding persists across five structural variants of the
interaction model (memory, exposure-pricing, late-joiner, distance-decay, skeptic), each of which
equalization again clears. Self-reliance $\lambda$ has no measurable effect over $[0.3,0.7]$ (the
followers' anchoring does not change the pinned-leader herding). Horizon and background exposure are
not swept in this testbed; population is checked separately on the real benchmarks
(Appendix~\ref{app:hb},~\ref{app:ww}), not as a testbed grid axis.

\begin{table}[!htbp]\centering\small
  \caption{Controlled-testbed ablations (collective error; Qwen2.5-3B, $n{=}12$, $8$ seeds).
  ``base'' is $\Gamma_\beta$, ``eq'' is $\Gamma^{\mathrm{EQ}}$; narrow $=\beta{=}0.1$, wide $=\beta{=}10$.}
  \label{tab:app-ablations}
  \begin{adjustbox}{max width=\linewidth}
  \begin{tabular}{lcccc}\toprule
  setting & base narrow & base wide & eq.\ narrow & eq.\ wide \\\midrule
  \multicolumn{5}{@{}l}{\textit{Social power $\zeta$}} \\
  $\zeta{=}0.0$  & $\mathbf{5.5\pm2.6}$ & $4.9\pm2.0$ & $11.4\pm16.2$ & $8.7\pm5.6$ \\
  $\zeta{=}0.2$  & $42.2\pm4.3$ & $4.7\pm2.3$ & $25.5\pm33.3$ & $4.0\pm1.5$ \\
  $\zeta{=}0.4$  & $42.1\pm4.2$ & $5.2\pm1.5$ & $18.7\pm23.2$ & $4.8\pm3.5$ \\
  $\zeta{=}0.6$  & $42.1\pm4.3$ & $5.6\pm4.2$ & $17.3\pm21.4$ & $9.4\pm8.2$ \\
  $\zeta{=}0.8$  & $42.1\pm4.2$ & $3.9\pm2.2$ & $15.3\pm17.3$ & $10.3\pm10.1$ \\
  $\zeta{=}0.9$  & $41.7\pm3.9$ & $5.3\pm1.9$ & $16.7\pm19.6$ & $6.6\pm2.5$ \\
  $\zeta{=}0.95$ & $41.5\pm3.6$ & $4.5\pm2.0$ & $18.7\pm22.8$ & $8.1\pm8.5$ \\\midrule
  \multicolumn{5}{@{}l}{\textit{Self-reliance $\lambda$}} \\
  $0.3$--$0.7$ (all) & $42.1\pm4.2$ & $11.5\pm18.5$ & $10.0\pm13.7$ & $16.0\pm17.5$ \\\midrule
  \multicolumn{5}{@{}l}{\textit{Sinkhorn iterations $k$}} \\
  $k{=}1$ & $39.4\pm5.9$ & $4.7\pm2.7$ & $6.9\pm6.5$ & $5.1\pm2.4$ \\
  $k{=}2$ & $39.4\pm5.9$ & $4.7\pm2.7$ & $6.9\pm6.5$ & $4.5\pm1.6$ \\
  $k{=}4$ & $39.4\pm5.9$ & $4.7\pm2.7$ & $6.9\pm6.5$ & $12.9\pm15.0$ \\\midrule
  \multicolumn{5}{@{}l}{\textit{Structural variants}} \\
  memory         & $42.3\pm8.6$ & $14.5\pm31.2$ & $1.9\pm2.1$ & $1.9\pm2.0$ \\
  exposure-price & $50.6\pm1.7$ & $5.3\pm2.3$  & $4.3\pm1.9$ & $4.9\pm1.9$ \\
  late-joiner    & $28.7\pm5.0$ & $2.2\pm1.2$  & $4.4\pm2.4$ & $5.6\pm3.0$ \\
  distance-decay & $59.1\pm1.1$ & $4.4\pm2.2$  & $6.5\pm3.7$ & $11.0\pm8.8$ \\
  skeptic        & $18.4\pm8.1$ & $3.4\pm1.8$  & $5.0\pm2.3$ & $5.3\pm5.0$ \\
  \bottomrule
  \end{tabular}
\end{adjustbox}
\end{table}

\begin{figure}[!htbp]\centering
  \includegraphics[width=\linewidth]{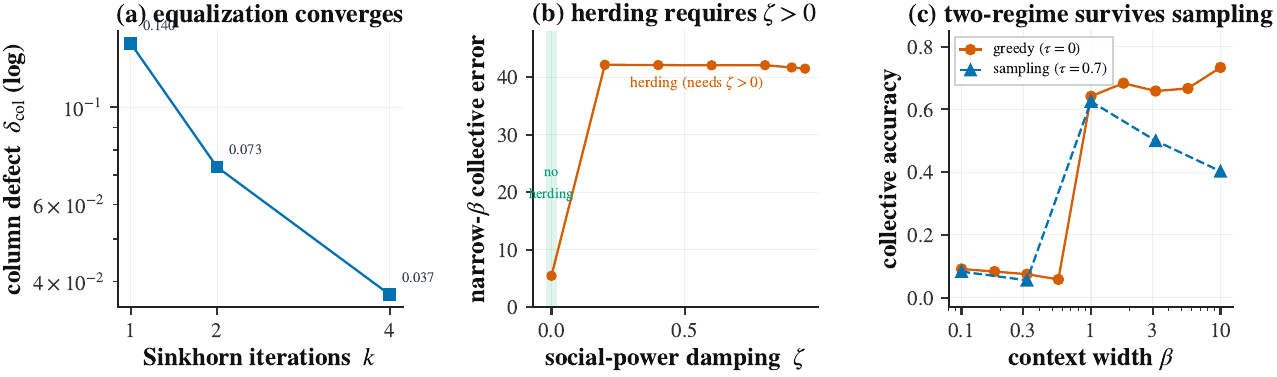}
  \caption{\textbf{Mechanism ablations (visual).} \textbf{(a)} the equalized allocator's column defect
  falls monotonically with Sinkhorn iterations $k$, from $0.14$ at $k{=}1$ to $0.037$ at $k{=}4$;
  \textbf{(b)} herding is gated by social power,
  absent at $\zeta{=}0$ and present for any $\zeta{>}0$ (cf.\ Table~\ref{tab:app-ablations});
  \textbf{(c)} on real HiddenBench ($n{=}24$) the two-regime holds under both greedy ($\tau{=}0$) and
  stochastic ($\tau{=}0.7$) decoding.}
  \label{fig:app-abl}
\end{figure}

\subsection{Proxy$\leftrightarrow$LLM bridge and effective sample size}
\label{app:bridge}

\paragraph{The bound holds.} Pooling the bridge cells (three $\beta$-sweep conditions plus the
anchored-persona run), the measured sup-norm proxy$\leftrightarrow$LLM discrepancy is at or under the
theoretical ceiling $\delta/(1-\lambda)$ in \textbf{$254/254$ cells ($0$ violations)}, median margin
$2.5\times$ (Theorem~\ref{thm:bridge}). The first-moment identity
$\hat\theta_{\mathrm{LLM}}\!\approx\! q_T^{\!\top}b_0$ holds with OLS slope $0.88$ ($R^2{=}0.74$, vs.\
$0.28$ for a $\beta$-free mean-$b_0$ baseline) --- the LLM collective is the influence-weighted average
the theory predicts.

\paragraph{The anchoring weights are measured, not assumed.} The ceiling $\delta_T/(1-\lambda_{\max})$
depends on the per-agent anchoring $\lambda_i$, which we do not set but fit by least squares over each
run: for fixed $i$ the residual $\hat\delta_i(t)$ is affine in $\lambda_i$ at each $t$, so
$\sum_t\hat\delta_i(t)^2$ is a univariate quadratic and the minimiser on $[0,1)$ is closed-form.
Reporting a goodness-of-fit residual after fitting $\lambda$ is standard; what matters is where the
resulting ceiling is genuinely small, not merely finite. It is tightest on the $89$ anchored-persona
cells, whose explicit anchoring prompt keeps self-reliance low (median $\lambda{=}0.25$), so $99\%$
have a ceiling below $250$, against the $\sim\!10^{5}$ an unanchored persona produces as
$\lambda\!\to\!1$. Across the full $254$-cell pool the fitted $\lambda$ is higher (median $0.61$) and
the ceiling looser, with $70\%$ below $250$; the bound holds with zero violations throughout, but is
non-vacuous in the strong sense only on the anchored subset and merely satisfied on the rest. Because
$\lambda_i$ is fit from the $T$ residuals of one run, $\delta_T$ is the worst-case residual
\emph{after} each agent's best-fit anchoring, so ``zero violations'' is a statement about per-agent
best-fit ceilings rather than about a prescribed $\lambda$.

\paragraph{$N_{\mathrm{eff}}$ across network topology.} Table~\ref{tab:app-neff} reports the proxy
$N_{\mathrm{eff}}$ ($q_T$-based, $n{=}12$, $24$ seeds) as $\beta$ widens. The herding floor at narrow
$\beta$ and its recovery toward the degree ceiling at wide $\beta$ appear on every graph family that
has a concentrated hub (BA, ER, WS, SBM); the \emph{regular} graph, which has no hub, shows no floor
($N_{\mathrm{eff}}{=}12$ throughout) --- the floor is a property of hub concentration, not of $n$.
This is consistent with Theorem~\ref{thm:beta}'s wide-context prediction: for the degree-regular
graph, the closed form $(\sum_k d_k)^2/\sum_k d_k^2$ evaluates to exactly $12.00=n$ and the measured
$N_{\mathrm{eff}}$ recovers it, while the Barab\'asi--Albert graph's degree heterogeneity evaluates the
same closed form at $8.89$. The wide-$\beta$ measurement is $9.8$; the two numbers are not the same
quantity --- $8.89$ is the $\beta\!\to\!\infty$ stationary-weight ($\nu$) ceiling of the theorem,
whereas $9.8$ is the finite-$\beta$, finite-horizon $q_T$-based estimate --- but they agree in
magnitude and both sit well above the narrow-$\beta$ floor, which is the prediction being tested. The
same diagnosis does not transfer directly to the HiddenBench and Werewolf exposure graphs
(Appendix~\ref{app:hb}, \ref{app:ww}): their hub structure is asymmetric ($\bar W\neq\bar W^\top$),
outside this part of the theorem, and their \emph{binary} support is near-degree-regular, so the gap
to oracle visible in Figure~\ref{fig:three}A,B reflects the hidden-profile information ceiling rather
than degree heterogeneity.

\begin{table}[!htbp]\centering\small
  \caption{Effective sample size $N_{\mathrm{eff}}$ (proxy $q_T$; $n{=}12$, $24$ seeds) at narrow vs.\
  wide $\beta$, by graph topology. Narrow $\beta$ herds onto the hub (small $N_{\mathrm{eff}}$); wide
  $\beta$ recovers the degree ceiling. Regular graphs have no hub and hence no floor.}
  \label{tab:app-neff}
  \begin{adjustbox}{max width=\linewidth}
  \begin{tabular}{lcc}\toprule
  topology & $N_{\mathrm{eff}}$ narrow $\beta$ & $N_{\mathrm{eff}}$ wide $\beta$ \\\midrule
  Barab\'asi--Albert (3B) & $3.0$  & $9.8$  \\
  Barab\'asi--Albert (7B) & $3.0$  & $9.8$  \\
  Erd\H{o}s--R\'enyi       & $2.8$  & $11.1$ \\
  Watts--Strogatz          & $4.3$  & $11.2$ \\
  stochastic block model   & $6.0$  & $11.6$ \\
  regular (no hub)         & $12.0$ & $12.0$ \\
  \bottomrule
  \end{tabular}
\end{adjustbox}
\end{table}

\begin{figure}[!htbp]\centering
  \includegraphics[width=\linewidth]{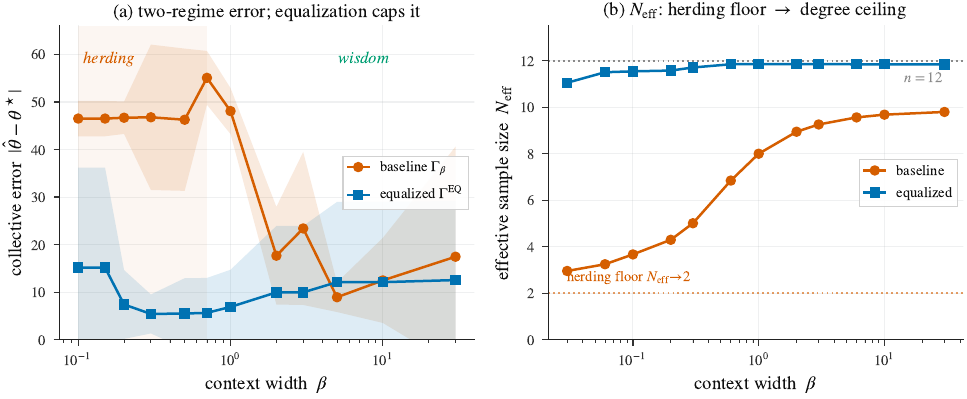}
  \caption{\textbf{The closed-form two-regime, realized on the LLM testbed.} Baseline collective error
  shows the two regimes across $\beta$ while the equalized operator holds error low; $N_{\mathrm{eff}}$
  rises from the narrow-$\beta$ floor toward the degree ceiling (Theorems~\ref{thm:beta},
  \ref{thm:eq}). Real Qwen2.5-7B.}
  \label{fig:app-core}
\end{figure}

\subsection{HiddenBench: two-regime, causation, floor, and equalizer}
\label{app:hb}

Table~\ref{tab:app-hb-2x2} is the causation $2\times2$ at $n{=}8$ (adversary $\{$on,off$\}$ $\times$
allocator $\{$baseline, $\Gamma^{\mathrm{EQ}}\}$, $5$ seeds): only the adversary-on/baseline cell
collapses at narrow $\beta$, and it is repaired by \emph{either} intervention. Table~\ref{tab:app-hb-n}
shows the floor is $n$-independent (flat narrow-$\beta$ accuracy from $n{=}8$ to $n{=}24$) and that
equalization recovers narrow-$\beta$ accuracy at every size. The equalizer is robust to its one knob:
at $n{=}8$ every $k\!\in\!\{1,2,4\}$ lifts narrow-$\beta$ accuracy off the floor (from $0.08$ to
$0.60$--$0.67$).

\begin{table}[!htbp]\centering\small
  \caption{HiddenBench causation $2\times2$ (collective accuracy; $n{=}8$, $5$ seeds).}
  \label{tab:app-hb-2x2}
  \begin{adjustbox}{max width=\linewidth}
  \begin{tabular}{lcccccc}\toprule
  cell & $\beta{=}0.1$ & $0.3$ & $1.0$ & $3.0$ & $10$ & gap \\\midrule
  adv-on, baseline           & $\mathbf{0.083}$ & $0.067$ & $0.667$ & $0.700$ & $0.733$ & $+0.65$ \\
  adv-on, equalized          & $0.667$ & $0.633$ & $0.633$ & $0.600$ & $0.600$ & $-0.07$ \\
  adv-off, baseline          & $0.583$ & $0.583$ & $0.750$ & $0.767$ & $0.717$ & $+0.13$ \\
  adv-off, equalized         & $0.683$ & $0.817$ & $0.717$ & $0.717$ & $0.767$ & $+0.08$ \\
  \bottomrule
  \end{tabular}
\end{adjustbox}
\end{table}

\begin{table}[!htbp]\centering\small
  \caption{HiddenBench $N_{\mathrm{eff}}$ floor across $n$ (adversary-on baseline vs.\ equalized;
  collective accuracy; $5$ seeds). The narrow-$\beta$ floor is flat in $n$; equalization recovers it
  at every $n$.}
  \label{tab:app-hb-n}
  \begin{adjustbox}{max width=\linewidth}
  \begin{tabular}{lccccc}\toprule
   & \multicolumn{3}{c}{baseline $\Gamma_\beta$} & \multicolumn{2}{c}{equalized $\Gamma^{\mathrm{EQ}}$} \\
  \cmidrule(lr){2-4}\cmidrule(lr){5-6}
  $n$ & narrow $\beta$ & wide $\beta$ & gap & narrow $\beta$ & recovery \\\midrule
  $8$  & $0.083$ & $0.733$ & $+0.65$ & $0.667$ & $+0.58$ \\
  $16$ & $0.083$ & $0.633$ & $+0.55$ & $0.583$ & $+0.50$ \\
  $24$ & $0.050$ & $0.767$ & $+0.72$ & $0.617$ & $+0.57$ \\
  \bottomrule
  \end{tabular}
\end{adjustbox}
\end{table}

\paragraph{Variance across arms.} Table~\ref{tab:app-hb-var} gives mean $\pm$ std at narrow and
wide $\beta$ (seed-level spread, pooled across the causation cells above). The baseline gate and
the herding floor are both low-variance and clearly separated (adv-on/baseline: $0.09\pm0.08$ at
narrow $\beta$ vs.\ $0.73\pm0.07$ at wide $\beta$); the equalizer's narrow-$\beta$ recovery is
high-variance ($0.58\pm0.13$), i.e.\ some seeds clear the collapse fully and others only partially
--- an honest property of the mechanism, not noise to be hidden.

\begin{table}[!htbp]\centering\small
  \caption{HiddenBench, mean $\pm$ std collective accuracy at narrow ($\beta{=}0.1$) and wide
  ($\beta{=}10$) context width.}
  \label{tab:app-hb-var}
  \begin{tabular}{lcc}\toprule
  arm & narrow $\beta$ & wide $\beta$ \\\midrule
  adv-on, baseline  & $0.092\pm0.075$ & $0.733\pm0.069$ \\
  equalized         & $0.583\pm0.134$ & $0.633\pm0.153$ \\
  adv-off           & $0.517\pm0.110$ & $0.667\pm0.146$ \\
  \bottomrule
  \end{tabular}
\end{table}

\subsection{Werewolf: two-regime, floor, and equalizer}
\label{app:ww}

Table~\ref{tab:app-ww} gives the $n{=}6$ arms: at narrow $\beta$ the village herds onto the werewolf's
false accusation and convicts the wrong player on \emph{every} game ($0.00$); at wide $\beta$ it pools
the alibis and convicts the werewolf ($0.87$). Removing the werewolf's deflection (the intrinsic
control) removes the collapse ($0.71$), and the fully-cleared equalizer recovers narrow-$\beta$ to
$0.55$ at this size. The narrow-$\beta$ floor is $n$-independent, pinned at $0.00$ for every
$n\!\in\!\{6,8,10,12,16\}$ while wide-$\beta$ stays high. The equalizer's recovery, however, is not
$n$-independent: at $n{=}6$ a single Sinkhorn iteration already lifts narrow-$\beta$ off the floor,
but the recovery weakens as $n$ grows (the fully-cleared operator falls to $0.24$ at $n{=}16$;
Table~\ref{tab:app-ww-var}), so unlike HiddenBench, Werewolf does not recover equally at every size.

\begin{table}[!htbp]\centering\small
  \caption{Werewolf $n{=}6$ ($20$ games, $5$ seeds; collective accuracy). ``equalized'' is the
  fully-cleared operator $\Gamma^{\mathrm{EQ}}$ run to its fixed point each round.
  The $N_{\mathrm{eff}}$ floor is $n$-independent (narrow-$\beta$ pinned at $0.00$ for all $n$).}
  \label{tab:app-ww}
  \begin{adjustbox}{max width=\linewidth}
  \begin{tabular}{lccccc}\toprule
  arm ($n{=}6$) & $\beta{=}0.1$ & $0.3$ & $1.0$ & $3.0$ & $10$ \\\midrule
  baseline (wolf deflects)         & $\mathbf{0.00}$ & $0.00$ & $0.00$ & $0.88$ & $0.87$ \\
  equalized $\Gamma^{\mathrm{EQ}}$ (fully cleared) & $0.55$ & $0.51$ & $0.48$ & $0.48$ & $0.48$ \\
  wolf does not deflect (control)  & $0.71$ & $0.72$ & $0.66$ & $0.89$ & $0.91$ \\
  \bottomrule
  \end{tabular}
\end{adjustbox}
\vspace{5pt}
\parbox{\linewidth}{\small\emph{$N_{\mathrm{eff}}$ floor}: narrow-$\beta$ accuracy by $n$
    ($6,8,10,12,16$) $= 0.00,0.00,0.00,0.00,0.00$; wide-$\beta$ $= 0.87,0.83,0.70,0.63,0.59$.}
\end{table}

\paragraph{Variance across arms.} Table~\ref{tab:app-ww-var} gives mean $\pm$ std at $n{=}16$
(seed-level spread). The narrow-$\beta$ floor has \emph{zero} variance ($0.00\pm0.00$): every seed
lynches the wrong player, not just most of them. The \emph{online-priced} allocator (one price update
per round, as in Figure~\ref{fig:online}) recovers narrow-$\beta$ to $0.18\pm0.07$, just below the
fully-cleared operator at the same size ($0.24$; Figure~\ref{fig:online}b) and well below the
operator's $0.55$ at $n{=}6$ (Table~\ref{tab:app-ww}) --- the operator recovery itself weakens with
$n$, and online pricing tracks just under it within $T{=}4$.

\begin{table}[!htbp]\centering\small
  \caption{Werewolf $n{=}16$, mean $\pm$ std collective accuracy at narrow ($\beta{=}0.1$) and wide
  ($\beta{=}10$) context width. Here ``online-priced'' is the one-update-per-round protocol of
  Figure~\ref{fig:online}, \emph{not} the fully-cleared operator of Table~\ref{tab:app-ww}.}
  \label{tab:app-ww-var}
  \begin{tabular}{lcc}\toprule
  arm ($n{=}16$) & narrow $\beta$ & wide $\beta$ \\\midrule
  baseline (wolf deflects)      & $0.000\pm0.000$ & $0.585\pm0.077$ \\
  online-priced                 & $0.180\pm0.074$ & $0.575\pm0.097$ \\
  wolf does not deflect         & $0.160\pm0.063$ & $0.585\pm0.055$ \\
  \bottomrule
  \end{tabular}
\end{table}

\subsection{AgentsNet: placement and its ablations}
\label{app:agentsnet}

Placement is the complementary facet: seating the strong agents at the high-degree influence centre
beats seating them at the periphery, and the advantage grows monotonically with scale (main-text
Figure~\ref{fig:three}c; $+1.5$ at $n{=}8$ to $+10.6$ at $n{=}50$). Table~\ref{tab:app-agentsnet} gives the ablation
matrix. Crucially, the placement gap is \emph{degree/topology}-driven, not social-power-driven: it is
invariant to $\zeta$ and to the number of Sinkhorn iterations, and equalization \emph{lowers}
conflicts but does \emph{not} collapse the gap --- so these ablations do not contradict
Theorem~\ref{thm:eq} (which neutralizes social-power column concentration, not a degree-induced
coordination advantage). The Watts--Strogatz control, whose hubs are weak, shows the gap shrink
($+2.2$ at $n{=}16$) and reverse sign ($-2.9$ at $n{=}32$), consistent with the effect being carried
by hub degree; per-seed conflicts were not logged for this ablation, so the $n{=}32$ sign flip is
not independently verified against sampling noise and should be read as suggestive rather than
confirmed.

\begin{table}[!htbp]\centering\small
  \caption{AgentsNet placement ablations (mean colouring conflicts, lower is better; $5$ seeds).
  Gap $=$ peripheral $-$ central (positive $=$ centre wins).}
  \label{tab:app-agentsnet}
  {\renewcommand{\arraystretch}{1.15}\setlength{\tabcolsep}{4pt}%
  \begin{tabularx}{\linewidth}{@{}>{\raggedright\arraybackslash}p{0.27\linewidth}cccY@{}}
  \toprule
  setting & central & peripheral & gap & reading \\\midrule
  baseline $n{=}16$            & $9.3$  & $15.1$ & $+5.8$ & --- \\
  equalized $n{=}16$           & $7.6$  & $14.0$ & $+6.4$ & eq lowers conflicts, gap survives \\
  equalized $n{=}32$           & $13.2$ & $23.4$ & $+10.2$ & same at scale \\
  $k{=}1$ / $k{=}2$ $n{=}16$   & $7.0$/$7.1$ & $13.5$/$13.9$ & $+6.5$/$+6.8$ & Sinkhorn-$k$ invariant \\
  $\zeta{=}0$ $n{=}16$         & $10.6$ & $16.6$ & $+6.0$ & $\zeta$-invariant \\
  $\zeta{=}0.9$ $n{=}16$       & $9.5$  & $15.2$ & $+5.7$ & $\zeta$-invariant \\
  $T{=}8$ $n{=}16$             & $9.2$  & $14.7$ & $+5.5$ & horizon-robust \\
  $n_{\mathrm{strong}}{=}1$ $n{=}32$ & $17.2$ & $23.3$ & $+6.1$ & one hub agent already wins \\
  small-world (WS) $n{=}16$    & $13.9$ & $16.1$ & $+2.2$ & weak hubs $\to$ small gap \\
  small-world (WS) $n{=}32$    & $33.0$ & $30.1$ & $-2.9$ & homogeneous $\to$ gap vanishes/flips \\
  \bottomrule
  \end{tabularx}}
\end{table}

\subsection{Cross-family generality: Llama-3.1-8B}
\label{app:llama}

Table~\ref{tab:app-llama} re-runs the HiddenBench $2\times2$ with Llama-3.1-8B in place of Qwen-7B
(same code, model swapped). All three signatures replicate: the narrow-$\beta$ herding collapse
($0.02$), the equalizer's flat recovery ($+0.37$), and the adversary-off no-collapse control ($0.55$).
The honest scope is that \emph{capability bounds the wide-$\beta$ regime}: Llama-8B is a weaker
reasoner here, so its wide-$\beta$ ceiling is lower ($0.35$ vs.\ Qwen's $0.73$) and its adversary-off
curve \emph{declines} with $\beta$ ($0.55\!\to\!0.33$) rather than rising --- a weak model is hurt by
wide context. The narrow-$\beta$ collapse and the equalizer fix are model-family-agnostic; the
wide-$\beta$ \emph{wisdom} magnitude is capability-bounded. Mean $\pm$ std at the endpoints (narrow
$\beta{=}0.1$, wide $\beta{=}10$): baseline $0.017\pm0.020\to0.267\pm0.110$; equalized
$0.383\pm0.076\to0.392\pm0.114$; adversary-off $0.550\pm0.085\to0.325\pm0.082$. All three arms
separate cleanly at narrow $\beta$, confirming the signatures in Table~\ref{tab:app-llama} are not
an artifact of seed noise.

\begin{table}[!htbp]\centering\small
  \caption{Cross-family generality: HiddenBench $2\times2$ with Llama-3.1-8B (collective accuracy;
  $5$ seeds). Individual $\approx 0.08$, oracle $\approx 0.76$.}
  \label{tab:app-llama}
  \begin{adjustbox}{max width=\linewidth}
  \begin{tabular}{lccccc}\toprule
  arm & $\beta{=}0.1$ & $0.3$ & $1.0$ & $3.0$ & $10$ \\\midrule
  baseline $\Gamma_\beta$   & $\mathbf{0.02}$ & $0.03$ & $0.35$ & $0.26$ & $0.27$ \\
  equalized $\Gamma^{\mathrm{EQ}}$ & $0.38$ & $0.31$ & $0.38$ & $0.36$ & $0.39$ \\
  adversary-off             & $0.55$ & $0.49$ & $0.41$ & $0.39$ & $0.33$ \\
  \bottomrule
  \end{tabular}
\end{adjustbox}
\end{table}

\subsection{Boundary environments}
\label{app:boundary}

The two-regime is a property of aggregation-dependent tasks, and its magnitude tracks how strongly the
task forces deference to a dominant source. Table~\ref{tab:app-boundary} maps the boundary.
\emph{Debate} keeps every agent's own answer and spreads influence over a BA graph with no single
bottleneck, so the injected adversary captures only a small share and the two-regime is weak
($+0.07$). \emph{GovSim}, with the corrected inference backend, \emph{saturates}: every condition
sustains the resource (survival $12$ months, sustainability rate $1.0$). This is structural, not a
tuning artifact: GovSim is a continuous-harvest decision in which each fisher independently chooses a
catch, with no private clue that must be pooled across agents to reach the right action --- so there is
no hidden-profile information to aggregate and no confident-wrong source for the population to herd
onto. The two-regime therefore cannot arise here by construction, and GovSim saturates by ceiling
rather than by collapse. Together with Debate, this marks where the aggregation channel is absent or
the task never forces deference, so the theory neither predicts nor is tested by these environments.
\emph{MARBLE} is a different case: it has a placement axis the theory does speak to, but decomposes
into independent per-agent milestones with no shared latent for placement to act on, and the placement
advantage flips sign across replicates ($+0.61$ versus $-0.58$). We report this as a negative result
--- the predicted placement effect does not hold on independently-scored subtasks --- not as a boundary
alongside the ceiling cases above.

\begin{table}[!htbp]\centering\small
  \caption{Boundary environments. Debate: collective accuracy by $\beta$ (the two-regime is weak).
  GovSim: survival months / sustainability (the corrected backend saturates, no two-regime by
  ceiling).}
  \label{tab:app-boundary}
  \parbox{\linewidth}{\centering\small\emph{Debate} (Du et al.\ network variant; $n{=}8$, $5$ seeds; collective accuracy)}\\[3pt]
  \begin{adjustbox}{max width=\linewidth}
  \begin{tabular}{lcccccl}\toprule
  arm & $\beta{=}0.1$ & $0.3$ & $1.0$ & $3.0$ & $10$ & gap \\\midrule
  baseline $\Gamma_\beta$   & $0.300$ & $0.275$ & $0.340$ & $0.305$ & $0.370$ & $+0.07$ \\
  equalized                 & $0.330$ & $0.315$ & $0.305$ & $0.340$ & $0.300$ & $-0.03$ \\
  adversary-off             & $0.350$ & $0.355$ & $0.345$ & $0.340$ & $0.305$ & $-0.04$ \\
  \bottomrule
  \end{tabular}
\end{adjustbox}
\vspace{6pt}
\parbox{\linewidth}{\small\emph{GovSim} (corrected server backend): survival $=12$ months and
    sustainability $=1.0$ in all conditions ($n{=}5$--$24$, baseline / equalized / fixed-pie /
    anchored) --- the collective saturates the cooperative ceiling; there is no two-regime.}
\end{table}

\end{document}